\documentclass[twoside,11pt]{article}

\usepackage{blindtext}

\usepackage[abbrvbib, preprint]{jmlr2e}

\usepackage{lastpage}
\jmlrheading{}{2026}{1-\pageref{LastPage}}{3/26; Revised }{}{25-1917}{Yingzhen Yang and Ping Li}

\hypersetup{colorlinks=true,citecolor=blue,linkcolor=blue}
\newcommand{\circled}[1]{\small{\raisebox{.6pt}{\textcircled{\raisebox{-.8pt}{#1}}}}}

\usepackage{amsmath,mathrsfs,dsfont}
\usepackage{nicefrac}
\usepackage{algorithm}
\usepackage{algorithmicx}
\usepackage{algpseudocode}

\usepackage{booktabs}

\usepackage{color}
\usepackage{enumitem}

\PassOptionsToPackage{square,sort,comma,numbers}{natbib}
\usepackage{graphicx,tikz}
\usepackage[mathscr]{euscript}
\usepackage{amsthm}

\usepackage{bm}
\usepackage{bbm}
\usepackage{color}

\usepackage{color}
\usepackage{epstopdf}
\usepackage{subcaption}
\usepackage[capitalize,noabbrev]{cleveref}
\usepackage{thmtools}
\usepackage{thm-restate}

\usepackage{bbding}
\usepackage{mathtools}
\usepackage{wrapfig}
\usepackage{colortbl}
\usepackage{slashbox}
\usepackage[htt]{hyphenat}
\allowdisplaybreaks
\newcommand{\cfrakR}{\mathfrak{R}} 
\newcommand{\relu}[1]{\sigma\pth{#1}}

\newcommand{\Span}{\mathop\mathrm{Span}}
\newcommand{\tK}{\tilde K}

\newcommand{\bx}{\tilde \bx}
\newcommand{\by}{{\tilde \by}}

\newcommand{\bbx}{\overset{\rightharpoonup}{\bx}}

\newcommand{\bbw}{\overset{\rightharpoonup}{\bw}}
\newcommand{\bbe}{\overset{\rightharpoonup}{\be}}

\newcommand{\cFnn}{\cF_{\mathop\mathrm{NN}}}

\newcommand{\bPr}{\bP^{(r_0)}}
\newcommand{\bUr}{\bU^{(r_0)}}
\newcommand{\bUminusr}{\bU^{(-r_0)}}

\graphicspath{{illustrations/}}
\newcounter{optproblem}

\newtheoremstyle{mytheoremstyle} 
    {\topsep}                    
    {\topsep}                    
    {\normalfont}                
    {}                           
    {\bfseries}                   
    {.}                          
    {.5em}                       
    {}  

\theoremstyle{mytheoremstyle}
\newtheorem{theorem}{Theorem}[section]
\newtheorem{remark}[theorem]{Remark}
\newtheorem{proposition}[theorem]{Proposition}

\newtheorem*{theorem*}{Theorem}
\newtheorem*{lemma*}{Lemma}
\newtheorem*{remark*}{Remark}
\newtheorem*{claim*}{Claim}

\newtheorem{lemma}[theorem]{Lemma}

\theoremstyle{remark}
\newtheorem{definition}{Definition}[section]

\DeclareMathAlphabet{\pazocal}{OMS}{zplm}{m}{n}
\DeclareMathAlphabet{\mathpzc}{OMS}{pzc}{m}{it}

\setlist[itemize]{leftmargin=*}

\renewcommand{\hat}{\widehat}

\newcommand{\bfm}[1]{\ensuremath{\mathbf{#1}}}
\newcommand{\bfsym}[1]{\ensuremath{\boldsymbol{#1}}}

\def\ba{\boldsymbol a}   \def\bA{\bfm A}

   \def\bD{\bfm D}  
\def\be{\bfm e}   \def\bE{\bfm E}  
  \def\bF{\bfm F}  
     
   \def\bH{\bfm H}  
   \def\bI{\bfm I}  
     
   \def\bK{\bfm K}

     \def\NN{\mathbb{N}}
     
   \def\bP{\bfm P}  
   \def\bQ{\bfm Q}  
     \def\RR{\mathbb{R}}
   \def\bS{\bfm S}  
     
\def\bu{\bfm u}   \def\bU{\bfm U}  
\def\bv{\bfm v}     
\def\bw{\bfm w}   \def\bW{\bfm W}  
\def\bx{\bfm x}   \def\bX{\bfm X}  
\def\by{\bfm y}     
   \def\bZ{\bfm Z}  
\def\bzero{\bfm 0}

 \def\cE{{\cal  E}}
 \def\cF{{\cal  F}}
 
 \def\cH{{\cal  H}}

 \def\cN{{\cal  N}}
 \def\cO{{\cal  O}}
 \def\cP{{\cal  P}}

 \def\cS{{\cal  S}}

 \def\cV{{\cal  V}}
 \def\cW{{\cal  W}}
 \def\cX{{\cal  X}}

\def\balpha{\bfsym \alpha}

\def\bmu{\bfsym {\mu}}

\def\btheta{\bfsym {\theta}}

\def\bsigma{\bfsym \sigma}
\def\bSigma{\bfsym \Sigma}

\def\hlambda{\hat{\lambda}}

\def\+#1{\mathcal{#1}}
\def\-#1{\textup{#1}}

\def\set#1{\left\{ #1 \right\}}
\def\pth#1{\left( #1 \right)}
\def\bth#1{\left[ #1 \right]}
\def\abth#1{\left | #1 \right |}

\def\defeq {\coloneqq}

\newcommand{\overbar}[1]{\mkern 1.5mu\overline{\mkern-1.5mu#1\mkern-1.5mu}\mkern 1.5mu}

\newcommand{\vect}[1]{{\textup{vec}\pth{#1}}}

\DeclareMathSymbol{\relcolon}{\mathrel}{operators}{"3A}

\newcommand{\La}{\left\langle\kern-0.64ex\left\langle}
\newcommand{\Ra}{\right\rangle\kern-0.64ex\right\rangle}

\def\Norm#1#2{{\left\vert\kern-0.4ex\left\vert\kern-0.4ex\left\vert #1
    \right\vert\kern-0.4ex\right\vert\kern-0.4ex\right\vert}_{#2}}
\def\norm#1#2{{\left\|#1\right\|}_{#2}}

\def\ltwonorm#1{\norm{#1}{2}}
\def\fnorm#1{\norm{#1}{\textup{F}}}
\def\supnorm#1{\norm{#1}{\infty}}

\def \Proj  {\mathbb{P}}

\def\tr#1{\textup{tr}\left(#1\right)}

\newcommand{\1}{{\rm 1}\kern-0.25em{\rm I}}
\def\indict#1{{\rm 1}\kern-0.25em{\rm I}_{\set{#1}}}

\DeclarePairedDelimiter\floor{\lfloor}{\rfloor}

\def \eps  {\epsilon}
\def \eps {\varepsilon}

\def \diff {{\rm d}}
\def \iprod#1#2{\left\langle #1, #2 \right\rangle}

\def\set#1{\left\{#1\right\}}

\def\ball#1#2#3{\bfm{B}^{#1}\left(#2; #3\right)}

\def\unitsphere#1{\mathbb{S}^{#1}}

\def \E {\mathbb{E}}
\def\Expect#1#2{\E_{#1}\left[#2\right]}

\def \Pr {\textup{Pr}}
\newcommand{\Prob}[1]{\Pr\left[#1\right]}
\def \Var#1{\textup{Var}\left[#1\right]}

\def \lsim {\lesssim}
\def \gsim {\gtrsim}

\newcommand{\Unif}[1]{{\mathop\mathrm{Unif}}\left( #1 \right)}

\newcommand{\beq}{\begin{equation}}
\newcommand{\eeq}{\end{equation}}
\newcommand{\beqa}{\begin{eqnarray}}
\newcommand{\eeqa}{\end{eqnarray}}
\newcommand{\beqas}{\begin{eqnarray*}}
\newcommand{\eeqas}{\end{eqnarray*}}
\def\bal#1\eal{\begin{align}#1\end{align}}
\def\bals#1\eals{\begin{align*}#1\end{align*}}
\def\bsal#1\esal{\begin{small}\begin{align}#1\end{align}\end{small}}
\def\bsals#1\esals{\begin{small}\begin{align*}#1\end{align*}\end{small}}
\def\bsfal#1\esfal{\begin{small}\begin{flalign}#1\end{flalign}\end{small}}

\newcommand{\BlackBox}{\rule{1.5ex}{1.5ex}}  
\ifdefined\proof
    \renewenvironment{proof}{\par\noindent{\bf Proof\ }}{\hfill\BlackBox\\[2mm]}
\else
    \newenvironment{proof}{\par\noindent{\bf Proof\ }}{\hfill\BlackBox\\[2mm]}
\fi

\ShortHeadings{PGD Finds Neural Networks with Optimal Rate for
Nonparametric Regression}{Yang and Li}
\firstpageno{1}

\begin{document}

\title{Gradient Descent with Projection Finds Over-Parameterized Neural Networks for Learning Low-Degree Polynomials with Nearly Minimax Optimal Rate}

\author{\name Yingzhen Yang \email yingzhen.yang@asu.edu \\
       \addr School of Computing and Augmented Intelligence\\
       Arizona State University\\
       Tempe, AZ 85281, USA
       \AND
       \name Ping Li \email pingli98@gmail.com  \\
       \addr VecML Inc.\\
       Bellevue, WA 98004, USA}

\editor{My editor}

\maketitle

\begin{abstract}
We study the problem of learning a low-degree spherical polynomial of degree $k_0 = \Theta(1) \ge 1$ defined on the unit sphere in $\RR^d$ by training
an over-parameterized two-layer neural network with augmented
feature in this paper.
Our main result is the significantly improved sample complexity for learning such low-degree polynomials.
We show that, for any regression risk $\eps \in (0, \Theta(d^{-k_0})]$,
an over-parameterized two-layer neural network trained by a novel Gradient Descent with Projection (GDP) requires a sample complexity of
$n \asymp \Theta( \log(4/\delta) \cdot d^{k_0}/\eps)$ with probability $1-\delta$ for $\delta \in (0,1)$, in contrast with the representative sample complexity $\Theta(d^{k_0} \max\set{\eps^{-2},\log d})$. Moreover, such sample complexity is nearly unimprovable since the trained network renders a nearly optimal rate of the nonparametric regression risk of the order $\log({4}/{\delta}) \cdot \Theta(d^{k_0}/{n})$
with probability at least $1-\delta$. On the other hand, the minimax optimal rate for the regression risk with a kernel of rank $\Theta(d^{k_0})$ is
$\Theta(d^{k_0}/{n})$, so that
the rate of the nonparametric regression risk of the network trained by GDP is nearly minimax optimal. In the case that the ground truth degree $k_0$ is unknown, we present a novel and provable adaptive degree selection algorithm which identifies the true degree and achieves the same nearly optimal regression rate.
To the best of our knowledge, this is the first time that a nearly optimal risk bound is obtained by training an over-parameterized neural network with a popular activation function (ReLU) and algorithmic guarantee for learning low-degree spherical polynomials.
Due to the feature learning capability of GDP, our results are beyond the regular Neural Tangent Kernel (NTK) limit.
\end{abstract}

\vspace{0.3in}
\begin{keywords}
 Nonparametric Regression, Low-Degree Spherical Polynomial, Neural Network, Gradient Descent, Feature Learning, Minimax Optimal Rate
\end{keywords}

\newpage

\section{Introduction}

With the success of deep learning across machine learning~\citep{YannLecunNature05-DeepLearning}, understanding neural network generalization has become a central topic. Prior work shows that gradient-based methods such as GD and SGD can achieve vanishing training loss in deep networks~\citep{du2018gradient-gd-dnns,AllenZhuLS19-convergence-dnns,DuLL0Z19-GD-dnns,AroraDHLW19-fine-grained-two-layer,ZouG19,SuY19-convergence-spectral}. Beyond optimization, extensive studies provide generalization guarantees for DNNs trained with gradient methods. A key insight is that, under sufficient over-parameterization, training dynamics are well approximated by kernel methods, notably the Neural Tangent Kernel (NTK)~\citep{JacotHG18-NTK}, although infinite-width networks can still exhibit feature learning~\citep{YangH21-feature-learning-infinite-network-width}. In this regime, network weights remain close to initialization, allowing the network to be approximated by a first-order Taylor expansion and enabling tractable generalization analysis~\citep{CaoG19a-sgd-wide-dnns,AroraDHLW19-fine-grained-two-layer,Ghorbani2021-linearized-two-layer-nn}.

The generalization of neural networks can be analyzed through their ability to learn low-degree polynomials, motivated by the spectral bias phenomenon \citep{rahaman19a-spectral-bias,CaoFWZG21-spectral-bias,ChorariaD0MC22-spectral-bias-pnns}, which states that neural networks preferentially learn functions aligned with the top eigenspaces of the NTK integral operator. For data uniformly distributed on the unit sphere $\unitsphere{d-1}\subset\RR^d$ or its scaled variant, any degree-$\ell$ polynomial admits a linear representation via spherical harmonics up to degree $\ell$, corresponding to the largest NTK eigenvalues (see Section~\ref{sec:harmonic-analysis-detail} and Theorem~\ref{theorem:spherical-polynomial-representation-spherical-harmonics} of the appendix). Recent works have focused on feature learning beyond the linear NTK regime. While infinite-width networks can learn features \citep{YangH21-feature-learning-infinite-network-width}, several approaches aim to escape NTK linearization to learn low-degree polynomials. The QuadNTK framework \citep{BaiL20-quadratic-NTK}, based on second-order Taylor expansion, achieves improved generalization and efficiently learns sparse and ``one-directional'' polynomials. Extending this idea, \citet{Nichani0L22-escape-ntk} shows that combining NTK and QuadNTK enables learning dense polynomials with an additional sparse high-degree component. Other approaches include two-stage optimization for polynomial learning \citep{DamianLS22-nn-representation-learning} and mean-field analyses of two-layer networks \citep{TakakuraS24-mean-field-two-layer}.

However, the analysis about the sharpness of the regression risk in the current results about training over-parameterized neural networks to learn low-degree polynomials, such as \citet{Ghorbani2021-linearized-two-layer-nn,BaiL20-quadratic-NTK,Nichani0L22-escape-ntk,DamianLS22-nn-representation-learning,TakakuraS24-mean-field-two-layer}, is largely missing. For example,
\citet{Nichani0L22-escape-ntk} show that a regression risk $\eps$ is achieved when the training data size $n$ satisfies $n \gsim d^{k_0} \max\set{\eps^{-2},\log d}$. Without training a neural network,  \citet{Ghorbani2021-linearized-two-layer-nn}
show that when $\tilde \Theta(d^{k_0}) \le n \le \Theta(d^{k_0+1-\delta})$ where $\tilde \Theta(d^{k_0})/d^{k_0} \to \infty$ as $d \to \infty$, the regression risk achieved by NTK alone under certain restrictive conditions converges to $0$ as $d \to \infty$ without concrete convergence rates or the sharpness of such a risk. Furthermore, under the popular setting where $d$ is fixed used by recent works about sharp rates for
nonparametric regression~\citep{HuWLC21-regularization-minimax-uniform-spherical,SuhKH22-overparameterized-gd-minimax,
yang2024gradientdescentfindsoverparameterized,Li2024-edr-general-domain}, \citet{Ghorbani2021-linearized-two-layer-nn} cannot even show a vanishing regression risk.

Understanding the sharpness of regression risk when learning low-degree polynomials remains an important problem in statistical learning and theoretical deep learning. In this work, we assume the target function $f^*$ lies in the Reproducing Kernel Hilbert Space (RKHS) induced by an over-parameterized two-layer neural network with a bounded RKHS norm, where $f^*$ is a degree-$k_0$ polynomial on the unit sphere $\unitsphere{d-1} \subset \RR^d$ with $k_0 \ge 1$. Our main result (Theorem~\ref{theorem:LRC-population-NN-fixed-point}) shows that when the network is trained using a novel Gradient Descent with Projection (GDP) and the sample size satisfies $n \ge \Theta(\log(4/\delta)\cdot d^{2k_0})$, the resulting estimator achieves a nearly optimal nonparametric regression risk of order $\log(4/\delta)\cdot \Theta(d^{k_0}/n)$ with probability at least $1-\delta$. Since the minimax optimal risk for kernel regression with a PSD kernel of rank $r=\Theta(d^{k_0})$ is $\Theta(r/n)=\Theta(d^{k_0}/n)$ \citep[Theorem~2(a)]{RaskuttiWY12-minimax-sparse-additive}, our bound is nearly minimax optimal. To the best of our knowledge, this is the first nearly optimal risk bound with algorithmic guarantees for learning low-degree spherical polynomials via training an over-parameterized neural network with a popular activation function (ReLU). While related forms of projected gradient methods have been studied for exploiting low-dimensional structure in over-parameterized models \citep{Xu2023-PGD-overparameterized-low-rank-matrix-sensing,Zhang2023-PGD-overparameterized-nonconvex-factorization}, this work is among the first to design a GDP specifically tailored to over-parameterized neural networks that achieves a nearly optimal rate.

\vspace{0.1in}
\noindent \textbf{Feature Learning Capability of GDP.} We remark that our results go beyond the regular NTK limit due to the feature learning capability of GDP from two aspects. First, while conventional NTK-based analysis must consider all eigenspaces of the NTK, our GDP introduces a novel projection operator which ensures that the learned neural network function lies in a low-dimensional subspace of the RKHS associated with the NTK for a nearly optimal regression risk bound, when the ground truth degree of the target function, $k_0$, is known. This constitutes the first main result detailed in Section~\ref{sec:sharp-risk-bound}. When $k_0$ is unknown, based on the first main result, we present a novel and provable adaptive degree selection algorithm which identifies the true degree $k_0$ and trains a neural network with the same nearly optimal regression risk, which is our second main result detailed in Section~\ref{sec:degree-selection}.
Thanks to the feature learning capability of our method, our result is stronger than the literature~\citep{WeiLLM19-regularization,Glasgow24-SGD-features-xor,LeeOSW24-low-dim-polynomials,AbbeAM22-SGD-merged-staircase} in terms of learning general low-degree spherical polynomials, including existing works based on the feature learning capability of neural networks.
For example, existing works~\citep{WeiLLM19-regularization,Glasgow24-SGD-features-xor,LeeOSW24-low-dim-polynomials,AbbeAM22-SGD-merged-staircase} do not address the regression setting in which the target function is a degree-$k_0$ spherical polynomial and the regression risk attains the sharp and minimax-optimal rate $\Theta(d^{k_0}/n)$. In particular, \citet{WeiLLM19-regularization} does not study regression with polynomial target functions at all, while the results of~\citet{Glasgow24-SGD-features-xor} are restricted to a highly specific setting where the target function is a quadratic XOR function. In~\citet{LeeOSW24-low-dim-polynomials}, the target function takes the single-index form $f^*(\bx)=\sigma^*(\langle \bx,\btheta\rangle)$, where $\sigma^*$ has information exponent $p$, thereby limiting $f^*$ to be a polynomial along a single direction parameterized by $\btheta$, rather than a general non-single-index spherical polynomial as considered in this paper. Finally, \citet{AbbeAM22-SGD-merged-staircase} investigates the case where the target function $f^*$ is a low-dimensional latent function of dimension $P$ embedded in an ambient space of dimension $d$ with $P\le d$, and establishes necessary and nearly sufficient conditions under which $f^*$ is strongly SGD-learnable in the mean-field regime.

Beyond feature learning methods that escape the linear NTK regime (Table~\ref{table:main-results-comparison}), sharp minimax convergence rates for nonparametric kernel regression are well established in the statistical learning literature~\citep{Stone1985,Yang1999-minimax-rates-convergence,RaskuttiWY14-early-stopping-kernel-regression,Yuan2016-minimax-additive-models}.
By training over-parameterized shallow \citep[Theorem 5.2]{HuWLC21-regularization-minimax-uniform-spherical} or deep \citep[Theorem 3.11]{SuhKH22-overparameterized-gd-minimax} neural networks with training features following spherical uniform distribution on the unit sphere, these results \citep{HuWLC21-regularization-minimax-uniform-spherical, SuhKH22-overparameterized-gd-minimax} show that minimax optimal rate $\cO(n^{-{d}/(2d-1)})$ is achieved for the regression risks when the target function is in $\cH_{\tilde K}(\gamma_0)$ where $\tilde K$ is the NTK of a specific neural network studied in each work.

We organize this paper as follows. With the necessary notation introduced below, we first introduce in Section~\ref{sec:setup} the problem setup. Our main results are summarized in Section~\ref{sec:summary-main-results}. The roadmap of proofs, the summary of the approaches and the key technical results in the proofs, and the novel proof strategy of this work are presented in Section~\ref{sec:proof-roadmap}. 

\vspace{0.1in}
\noindent \textbf{Notations.} We use bold letters for matrices and vectors, and regular lower letters for scalars throughout this paper. $\bA^{[i]}$ is the $i$-th column of a matrix $\bA$.  A bold letter with subscripts indicates the corresponding rows or elements of a matrix or a vector. We put an arrow on top of
a letter with subscript if it denotes a vector, e.g.,
$\bbx_i$ denotes the $i$-th training
feature. $\norm{\cdot}{F}$ and
$\norm{\cdot}{p}$ denote the Frobenius norm and the vector $\ell^{p}$-norm or the matrix $p$-norm. $[m\colon n]$ denotes all the integers between $m$ and $n$ inclusively, and $[1\colon n]$ is also written as $[n]$. $\Var{\cdot}$ denotes the variance of a random variable. $\bI_n$ is an $n \times n$ identity matrix.  $\indict{E}$ is an indicator function which takes the value of $1$ if event $E$ happens, or $0$ otherwise. The complement of a set $A$ is denoted by $A^c$, and $\abth{A}$ is the cardinality of the set $A$. $\vect{\cdot}$ denotes the vectorization of a matrix or a set of vectors, and $\tr{\cdot}$ is the trace of a matrix.
We denote the unit sphere in $d$-dimensional Euclidean space by $\unitsphere{d-1} \defeq \{\bx \colon  \bx \in \RR^d, \ltwonorm{\bx} =1\}$. Let $\cX$ denote the input space, and
$L^p(\cX, \mu) $ with $p \ge 1$ denote the space of $p$-th power integrable functions on $\cX$ with probability measure $\mu$, and the inner product $\iprod{\cdot}{\cdot}_{L^p(\mu)}$ and $\norm{\cdot}{{L^p(\mu)}}^2$ are defined as $\iprod{f}{g}_{L^p(\mu)} \coloneqq \int_{\cX}f(x)g(x) \diff \mu(x)$ and $\norm{f}{L^p(\mu)}^p \coloneqq \int_{\cX}\abth{f}^p(x) \diff \mu (x) <\infty$. $\ball{}{\bx}{r}$ is the Euclidean closed ball centered at $\bx$ with radius $r$. Given a function $g \colon \cX \to \RR$, its $L^{\infty}$-norm is denoted by $\norm{g}{\infty} \defeq \sup_{\bx \in \cX} \abth{g(\bx)}$, and
$L^{\infty}$ is the function class whose elements have bounded $L^{\infty}$-norm. $\iprod{\cdot}{\cdot}_{\cH}$ and $\norm{\cdot}{\cH}$ denote the inner product and the norm in the Hilbert space $\cH$. $a = \cO(b)$ or $a \lsim b$ indicates that there exists a constant $c>0$ such that $a \le cb$. $\tilde \cO$ indicates there are specific requirements in the constants of the $\cO$ notation. $a = o(b)$ and $a = w(b)$ indicate that $\lim \abth{a/b}  = 0$ and $\lim \abth{a/b}  = \infty$, respectively. $a \asymp b$  or $a = \Theta(b)$ denotes that
there exists constants $c_1,c_2>0$ such that $c_1b \le a \le c_2b$. $\Unif{\unitsphere{d-1}}$ denotes the uniform distribution on $\unitsphere{d-1}$.
The constants defined throughout this paper may change from line to line.
We use $\Expect{P}{\cdot}$ to denote the expectation with respect to the distribution $P$.
$\bP_{\cS}$ denotes the orthogonal projection onto the space $\cS$, and $\Span(\bA)$ denotes the linear space spanned by the columns of the matrix $\bA$. $\overline{A}$ denotes the closure of a set $A$. Throughout this paper we let the input space be $\cX = \unitsphere{d-1}$.

\section{Problem Setup}
\label{sec:setup}
We introduce the problem setup for nonparametric regression with the target function as a low-degree spherical polynomial in this section.
\subsection{Two-Layer Neural Network}
\label{sec:setup-two-layer-nn}

We are given the training data $\set{(\bbx_i,  y_i)}_{i=1}^n$ where each data point is a tuple of feature vector $\bbx_i \in \cX$ and its response $ y_i \in \RR$. Throughout this paper we assume
that no two training features coincide, that is, $\bbx_i \neq \bbx_j$ for all $i,j \in [n]$ and $i \neq j$.
We denote the training feature vectors by $\bS = \set{\bbx_i}_{i=1}^n$, and denote by $P_n$ the empirical distribution over $\bS$. All the responses are stacked as a vector $ \by = [ y_1, \ldots,  y_n]^\top \in \RR^n$.
The response $ y_i$ is given by $ y_i= f^*(\bbx_i) + w_i$ for
$i \in [n]$,
where $\set{w_i}_{i=1}^n$ are i.i.d. sub-Gaussian random noise with mean $0$ and variance proxy $\sigma_0^2$, that is, $\Expect{}{\exp(\lambda w_i)} \le \exp(\lambda^2 \sigma_0^2/2)$ for any $\lambda \in \RR$.
$f^*$ is the target function to be detailed later. We define $\by \defeq \bth{y_1,\ldots,y_n}$, $\bw \defeq \bth{w_1,\ldots,w_n}^{\top}$, and use $f^*(\bS) \defeq \bth{f^*(\bbx_1),\ldots,f^*(\bbx_n)}^{\top}$ to denote the clean target labels.
The feature vectors in $\bS$ are drawn i.i.d. according to the data distribution $P = \Unif{\unitsphere{d-1}}$ with $\mu$ being the probability measure for $P$.
We consider a two-layer neural network (NN) with an augmented feature in this paper whose
mapping function is
\bal\label{eq:two-layer-nn}
&f(\cW,\bx) =  \frac{1}{\sqrt{m}}\sum_{r=1}^{m}
a_r \relu{{\bbw_r}^\top \bx} +
\frac{1}{\sqrt{m}}
\bbw_{m+1}^{\top} \bF(\bW(0),\bx),
\eal%
where $\bx \in \cX$ is the input, $\sigma(\cdot) = \max\set{\cdot,0}$ is the ReLU activation function. $\cW = \set{\bW,\bbw_{m+1}}$ denotes the weights of the network, $\bW = \set{\bbw_r}_{r=1}^{m}$ and $\bbw_{m+1}$ are the weight vectors of the first layer, and $m$ is the number of neurons which is also termed the network width, where  $\bbw_r \in \RR^d$ for $r \in [m]$ and $\bbw_{m+1} \in \RR^m$.  $\bF(\bW(0),\bx) \in \RR^m$ is a feature vector computed at the initialization with
$\bth{\bF(\bW(0),\bx)}_r = \indict{{\bbw_r(0)}^\top \bx \ge 0}$ for $r \in [m]$, which is termed the augmented feature.
$\ba = \bth{a_1, \ldots, a_m} \in \RR^m$ denotes the weights of the second layer. Throughout this paper we also write $\bW,\bw_r$ as $\bW_{\bS},\bw_{\bS,r}$ from time to time so as to indicate that the weights are trained on the training features $\bS$.

\vspace{0.1in}
\noindent \textbf{Novel Augmented Feature Compared to the Regular ReLU Network. } It can be observed from (\ref{eq:two-layer-nn}) that compared to the regular two-layer
ReLU network $f^{\textup{(vanilla)}}(\bW,\bx) =  \frac{1}{\sqrt{m}}\sum_{r=1}^{m}
a_r \relu{{\bbw_r}^\top \bx}$, our network has the additional augmented feature map by
${1}/{\sqrt{m}} \cdot
\bbw_{m+1}^{\top} \bF(\bW(0),\bx)$. Such additional feature map ensures that the NTK associated with the network (\ref{eq:two-layer-nn}) is a PSD kernel $K$, to be defined in (\ref{eq:kernel-two-layer}), and all of its eigenvalues are strictly positive as shown in Theorem~\ref{theorem:eigenvalue-NTK} deferred to Section~\ref{sec:NTK-eigenvalues} of the appendix. In contrast, as shown in~\citet[Proposition 5]{BiettiM19}, the eigenvalues $\set{\tilde \lambda_j}_{j \ge 0}$ of the integral operator associated with the NTK of the regular network $f^{\textup{(vanilla)}}$ have the property that $\tilde \lambda_{2t+1} = 0$ for $t \ge 1$. As a result, the eigenspaces of the NTK of the network $f^{\textup{(vanilla)}}$ corresponding to nonzero eigenvalues do not cover all the spherical harmonics of order $2t+1$ for all $t \ge 1$, limiting its capability of learning spherical polynomials with spherical harmonics of odd degrees ($\ge 3$) as their components.

\subsection{Kernel and Kernel Regression for Nonparametric Regression}
\label{sec:setup-kernels-target-function}

We define the following kernel functions. For all
$\bu, \bv \in \cX$,
\bal\label{eq:kernel-two-layer}
K^{(0)}(\bu,\bv) \defeq  \frac{\pi -\arccos (\bu^{\top}\bv) }{2\pi}, \,\,\,
K^{(1)}(\bu,\bv) \defeq \bu^{\top}\bv K^{(0)}(\bu,\bv), \,\,\,
K = K^{(0)} + K^{(1)},
\eal%
which is in fact the NTK associated with
the two-layer NN (\ref{eq:two-layer-nn}) with constant second layer weights $\ba$, and $K$ is a PSD kernel. Let the Gram matrix of $K$ over the training features $\bS$ be $\bK \in \RR^{n \times n}, \bK_{ij} = K(\bbx_i,\bbx_j)$ for $i,j \in [n]$,
and $\bK_n \defeq \bK/n$ is the empirical NTK matrix. $\bK^{(\alpha)},\bK_n^{(\alpha)}$ for $\alpha = 0,1$ are defined similarly. Let the eigendecomposition  of $\bK_n$ be $\bK_n = \bU \bSigma {\bU}^{\top}$ where $\bU$ is a $n \times n$ orthogonal matrix, and $\bSigma$ is a diagonal matrix with its diagonal elements $\set{\hlambda_i}_{i=1}^n$ being eigenvalues of $\bK_n$ and sorted in a non-increasing order.  It is proved in existing works, such as~\citet{du2018gradient-gd-dnns}, that $\bK_n$ is non-singular.
Since $\sup_{\bx \in \cX} K(\bx,\bx) = 1$, it can be verified that $\hlambda_1 \in (0, 1)$. Let $\cH_{K}$ be the Reproducing Kernel Hilbert Space (RKHS) associated with
$K$. Because $K$ is continuous on the compact set $\cX \times \cX$, the integral operator $T_K \colon L^2(\cX,\mu) \to L^2(\cX,\mu), \pth{T_K f}(\bx) \defeq \int_{\cX} K(\bx,\bx') f(\bx') \diff \mu(\bx')$ is a positive, self-adjoint, and compact operator on $L^2(\cX,\mu)$. By the spectral theorem, there is a countable orthonormal basis $\set{e_j}_{j \ge 0} \subseteq L^2(\cX,\mu)$ and $\set{\lambda_j}_{j \ge 0}$ with $1 \ge \lambda_0 \ge \lambda_1 \ge \ldots > 0$ such that $e_j$ is the eigenfunction of $T_K$ with
$\lambda_j$ being the corresponding eigenvalue. That is, $T_K e_j = \lambda_j e_j, j \ge 0$. Let $\set{\mu_{\ell}}_{\ell \ge 0}$ be the distinct eigenvalues
associated with $T_K$, and let
$m_{\ell}$ be the sum of multiplicities of
the eigenvalue $\set{\mu_{\ell'}}_{\ell'=0}^{\ell}$.
That is, $m_{\ell'} - m_{\ell'-1}$ is the multiplicity
of $\mu_{\ell'}$. It is well known that $\set{v_j = \sqrt {\lambda_j} e_j}_{j\ \ge 0}$ is an orthonormal basis of $\cH_K$. For a positive constant $\gamma_0$, we define $\cH_{K}(\gamma_0) \defeq \set{f \in \cH_{K} \colon \norm{f}{\cH} \le \gamma_0}$ as the closed  ball in $\cH_K$ centered at $0$ with radius $\gamma_0$. We note that $\cH_{K}(\gamma_0)$ is also specified by $\cH_{K}(\gamma_0) = \set{f \in L^2(\cX,\mu)\colon f = \sum_{j =0}^{\infty} \beta_j e_j,
\sum_{j = 0}^{\infty} \beta_j^2/\lambda_j \le \gamma_0^2}$. To be shown in Theorem~\ref{theorem:eigenvalue-NTK}, the eigenfunctions $\set{e_j}_{j \ge 0}$ are the spherical harmonics on $\cX$.
Let $\cH_{\bS} \defeq
\overbar{\set{\sum\limits_{i=1}^n
K(\cdot,\bbx_i) \alpha_i \colon \set{\alpha_i}_{i=1}^n
\subseteq \RR}}$ be the usual RKHS spanned by $\set{K(\cdot,\bbx_i)}_{i=1}^n$
on the data $\bS = \set{\bbx_i}_{i=1}^n$. For $r \in [n-1]$, we let $\bU = \bth{\bU^{(r)}  \,\, \bU^{(-r)} }$ where $\bU^{(r)} \in \RR^{n \times r}$ is the submatrix of $\bU$ whose columns are the
first $r$ eigenvectors of $\bK$, and the columns of $\bU^{(-r)} \in \RR^{n \times (n-r)}$ are the remaining $n-r$ eigenvectors of $\bK$. We define
\bal\label{eq:space-H-S-rank-r}
\cH_{\bS,r} \defeq \overbar{\set{\sum\limits_{i=1}^n K(\cdot,\bbx_i) \alpha_i \colon \balpha = \bth{\alpha_1, \ldots, \alpha_n}^{\top} \in \Span(\bU^{(r)})}},
\eal
which is a subspace of the Hilbert space $\cH_{\bS}$ of dimension $r$.

\vspace{0.1in}\noindent \textbf{The task of nonparametric regression.}
We consider the target function
\bal
f^*(\bx) = \sum\limits_{\ell=0}^{k_0} \sum\limits_{j=1}^{N(d,\ell)} a_{\ell j} Y_{\ell j}(\bx), \quad
\forall \bx \in \cX,
\eal
where $\set{Y_{\ell j}}_{j \in [N(d,\ell)]}$ are the spherical harmonics of degree $\ell$ which form an orthogonal basis of $\cH_{\ell}$ of dimension $N(d,\ell)$, and $\cH_{\ell}$ denotes the space of degree-$\ell$ homogeneous harmonic polynomials on $\cX$. The background about harmonic analysis on $\unitsphere{d-1}$ is deferred to
 Section~\ref{sec:harmonic-analysis-detail} of the appendix. It also follows from the discussion in  Section~\ref{sec:harmonic-analysis-detail} that
the eigenfunctions $\set{e_j}_{j\ge 0}$ associated with the kernel $K$ are in fact spherical harmonics of all degrees: $\set{e_j}_{j\ge 0} = \set{Y_{\ell j}}_{\ell \ge 0, j \in [N(d,k)]}$.
Throughout this paper we assume $f^* \in \cF^*$ where $\cF^*$ is a function class defined on $\cX$ specified by
\bal\label{def:func-class-target}
\cF^* = \set{\sum\limits_{\ell=0}^{k_0} \sum\limits_{j=1}^{N(d,\ell)} a_{\ell j} Y_{\ell j} \colon \sum_{\ell=0}^{k_0} \sum_{j=1}^{N(d,\ell)} a_{\ell j}^2/\mu_{\ell}\le \gamma_0^2 }.
\eal
It follows from Theorem~\ref{theorem:spherical-polynomial-representation-spherical-harmonics} in Section~\ref{sec:harmonic-analysis-detail} of the appendix that  $\cF^*$
comprises all polynomials of degree up to $k_0$ defined on $\cX$ with a finite $\cH_K$-norm of $\gamma_0$.
The task of the analysis for nonparametric regression is to find an estimator $\hat f$ from the training data $\set{(\bbx_i,  y_i)}_{i=1}^n$ so that
the risk $\Expect{P}{\pth{\hat f - f^*}^2}$ vanishes at a fast rate. In this work, we aim to establish a sharp rate of the risk where the over-parameterized neural network (\ref{eq:two-layer-nn}) trained by GDP serves as the estimator $\hat f$.

\vspace{0.1in}
\noindent \textbf{Minimax Lower Risk Bound for Learning a Low-Degree Spherical Polynomial.} It follows from the definition of $\cF^*$ in (\ref{def:func-class-target}) and Theorem~\ref{theorem:eigenvalue-NTK} that $\cF^*$ belongs to the subspace formed by the union of the spaces of homogeneous harmonic polynomials up to degree $k_0 = \Theta(1)$, that is, $\cF^* \subseteq \cup_{\ell=0}^{k_0} \cH_{\ell}$. As a result, if we define a low-rank kernel of finite rank $r_0 \defeq m_{k_0} = \sum_{\ell'=0}^{k_0} N(d,\ell')$ by
\bal\label{eq:kernel-low-rank}
K^{(r_0)}(\bx,\bx') \defeq \sum\limits_{\ell=0}^{k_0} \sum\limits_{j=1}^{N(d,\ell)} \mu_{\ell} Y_{\ell j}(\bx)Y_{\ell j}(\bx')
\eal
for any $\bx,\bx' \in \cX$, then it can be verified that $\cF^* \subseteq \cH_{K^{(r_0)}}(\gamma_0)$. It is shown in Lemma~\ref{lemma:r0-estimate} in Section~\ref{sec:harmonic-analysis-detail} of the appendix that $r_0 = \Theta(d^{k_0})$ with $k_0 = \Theta(1)$ and $d > \Theta(1)$. The established result in
\citet[Theorem 2(a)]{RaskuttiWY12-minimax-sparse-additive}
shows that the minimax lower bound for the regression risk with $K^{(r_0)}$ is then $\Theta(r_0/{n}) =\Theta(d^{k_0}/{n})$.

%

\subsection{Training by Gradient Descent with Projection}
\label{sec:training}
In the training process of our two-layer NN (\ref{eq:two-layer-nn}), only $\bW$ is optimized, while the elements of $\ba$ are randomly initialized to $\pm 1$ with equal probabilities and then fixed during the training. The following quadratic loss function is minimized during the training process:
\bal \label{eq:obj-dnns}
L(\cW) \defeq \frac{1}{2n} \sum_{i=1}^{n} \pth{f(\cW,\bbx_i) - y_i}^2.
\eal%
In the $(t+1)$-th step of GDP with $t \ge 0$, the weights of the neural network, $\bW_{\bS}$, are updated by one-step of GDP through
\bal\label{eq:GD-two-layer-nn}
&\vect{\bW_{\bS}(t+1)} - \vect{\bW_{\bS}(t)} = - \frac{\eta}{n} \bZ_{\bS}(t)
\bPr (\hat \by(t) -  \by), \nonumber \\
&\bbw_{m+1}(t+1) - \bbw_{m+1}(t) = - \frac{\eta}{n{\sqrt m}} \bF(\bW(0),\bS)^{\top}
\bP^{(r)} (\hat \by(t) -  \by),
\eal
where $\by_i = y_i$, $\bF(\bW(0),\bS) \in \RR^{n \times m}$ denotes the feature computed at the initialization with
$\bth{\bF(\bW(0),\bS)}_i = \bF(\bW(0),\bbx_i)^{\top}$,  $\hat \by(t) \in \RR^n$ with $\bth{\hat \by(t)}_i = f(\cW(t),\bbx_i)$. The notation with the subscript $\bS$ indicates the dependence on the training features $\bS$. We also denote the neural network function $f(\cW(t),\cdot)$ as
$f_t(\cdot)$ with weights $\cW(t) = \set{\bW_{\bS}(t),\bbw_{m+1}(t)}$ obtained right after the $t$-th step of GDP.
We define $\bZ_{\bS}(t) \in \RR^{md \times n}$ which is computed by
\bal\label{eq:GD-Z-two-layer-nn}
\bth{\bZ_{\bS}(t)}_{[(r-1)d+1:rd]i} = \frac {1}{{\sqrt m}}
\indict{\bbw_r(t)^\top \bbx_i \ge 0}  \bbx_i a_r
\eal%
for all $i \in [n], r \in [m]$,
where $\bth{\bZ_{\bS}(t)}_{[(r-1)d+1:rd]i} \in \RR^d$ is a vector with elements in the $i$-th column of $\bZ_{\bS}(t)$ with indices in
$[(r-1)d+1:rd]$. $\bP^{(r)} \in \RR^{n \times n}$ is the projection matrix with $\bP^{(r)} = \bU \bSigma^{(r)} \bU^{\top}$ for $r \in [n]$,
where $\bSigma^{(r)} \in \RR^{n \times n}$ is a diagonal matrix with
$\bSigma^{(r)}_{ii} = 1$ for $i \in [r]$ and $\bSigma^{(r)}_{ii} = 0$ otherwise. With a known degree $k_0$, we set $r = r_0$ so that $\bP^{(r)} = \bPr$. We note that $\bP^{(r)}$ would not appear in regular GD updates if vanilla GD is used, and $\bP^{(r)}$ is introduced as a projection matrix so that the learned neural network function lies on a $r_0$-dimensional subspace of the RKHS $\cH_{\bS}$ to be detailed in Section~\ref{sec:proof-roadmap} for a sharp regression risk.
We employ the following symmetric random initialization also employed in
\citet{Chizat2019-lazy-training-differentiable-programming,DamianLS22-nn-representation-learning} so that $\hat \by(0) = \bzero$. In our two-layer NN, $m$ is even, $\set{\bbw_{2r'}(0)}_{r'=1}^{m/2}$ and $\set{a_{2r'}}_{r'=1}^{m/2}$ are initialized randomly and independently according to $\bbw_{2r'}(0) \sim \cN(\bzero,\kappa^2 \bI_d), a_{2r'} \sim {\textup {unif}}\pth{\left\{-1,1\right\}},\forall r' \in [m/2]$,
where $\cN(\bmu,\bSigma)$ denotes a Gaussian distribution with mean $\bmu$ and covariance $\bSigma$, ${\textup {unif}}\pth{\left\{-1,1\right\}}$ denotes a uniform distribution over $\set{1,-1}$, $0<\kappa \le 1$ controls the magnitude of initialization. We set $\bbw_{2r'-1}(0) = \bbw_{2r'}(0)$ and $a_{2r'-1} = -a_{2r}$ for all $r' \in [m/2]$. It can then be verified that $\hat \by(0) = \bzero$, that is, the initial output of the two-layer NN (\ref{eq:two-layer-nn}) is zero. We use $\bW(0)$ to denote the set of all the random weight vectors at initialization, that is, $\bW(0) = \set{\bbw_r(0)}_{r=1}^m$, and $\bbw_{m+1}(0) = \bzero$.
We run Algorithm~\ref{alg:GDP} to train the two-layer NN by GDP, where
$T$ is the total number of steps for GDP and the projection dimension $r = r_0$.

\begin{algorithm}[b!]
\renewcommand{\algorithmicrequire}{\textbf{Input:}}
\renewcommand\algorithmicensure {\textbf{Output:} }
\caption{Training the Two-Layer NN by GDP}
\label{alg:GDP}
\begin{algorithmic}[1]
\State $\bW(T) \leftarrow$ Training-by-GDP ($r,T,\bW(0)$)
\State \textbf{\bf input: } $r,T,\bW(0),\bbw_{m+1}=\bzero,\eta$
\For{$t=1,\ldots,T$}
\State  Perform the $t$-th step of GDP by (\ref{eq:GD-two-layer-nn})
\EndFor
\State \textbf{\bf return} $\bW(T),\bbw_{m+1}(T)$
\end{algorithmic}
\end{algorithm}

\section{Summary of Main Result}
\label{sec:summary-main-results}

We present the main results of this paper in this section, with the nearly optimal regression risk bound in Section~\ref{sec:sharp-risk-bound} when the ground truth degree $k_0$ is known, and the adaptive degree selection algorithm with theoretical guarantee in Section~\ref{sec:degree-selection} for unknown $k_0$.

\subsection{Nearly Optimal Risk Bound for Regression Risk with Known $k_0$}
\label{sec:sharp-risk-bound}
We first suppose that the ground truth degree of the target function $f^*$, $k_0$, is known.
As widely studied in the literature such as~\citet{DamianLS22-nn-representation-learning,Ghorbani2021-linearized-two-layer-nn}, because the dimension of the subspace in which the target function $f^*$ lies, the subspace spanned by all spherical harmonics of degree up to $k_0$, is $\Theta(d^{k_0})$, so that at least $\Theta(d^{k_0})/\eps$ training samples are required for any regression risk of $\eps > 0$. As a result, we let $n \ge \Theta(d^{k_0})$ throughout this paper.
We present our main result about the sharp risk bound in Theorem~\ref{theorem:LRC-population-NN-fixed-point}, with its proof deferred to Section~\ref{sec:proofs-main-results}.
\begin{theorem}\label{theorem:LRC-population-NN-fixed-point}
Suppose that $n \ge \Theta(\log({4}/{\delta})\cdot d^{2k_0})$, $\delta \in (0,1)$, and $c_t \in (0,1]$ is an arbitrary positive constant.
Suppose the network width $m$ satisfies
\bal\label{eq:m-cond-LRC-population-NN-fixed-point}
m \gsim \pth{\frac{n}{d^{k_0}}}^{\frac {25}{2}} d^{\frac 52},
\eal
and the neural network
$f(\cW(t),\cdot)$ is
trained by GDP using Algorithm~\ref{alg:GDP} with $r = r_0$, the constant learning rate
$\eta = \Theta(1) \in (0,1)$, and $T \asymp n/d^{k_0}$.
Then for every $t \in [c_t T \colon T]$, with probability at least
$1-\delta-\exp\pth{-\Theta(n)}-2\exp\pth{-\Theta(r_0)} - 2/n$ over the random noise $\bw$, the random training features $\bS$ and
the random initialization $\bW(0)$, $f(\cW(t),\cdot) = f_t$ satisfies
\bal\label{eq:LRC-population-NN-fixed-point}
&\Expect{P}{(f_t-f^*)^2}
\lsim \log{\frac{4}{\delta}} \cdot \Theta\pth{\frac{d^{k_0}}{n}}.
\eal
Here $r_0 = m_{k_0} = \Theta(d^{k_0})$.
\end{theorem}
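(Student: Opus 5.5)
The plan is to split the argument into three parts: a kernel-regime analysis of GDP showing that $f_t$ tracks a projected kernel gradient-descent estimator, a statistical analysis of that estimator restricted to the $r_0$-dimensional subspace $\cH_{\bS,r_0}$, and a localized Rademacher complexity step that transfers empirical error to population risk.

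\textbf{Step 1: coupling with kernel GD.} Write the residual dynamics on the training set. By (\ref{eq:GD-two-layer-nn}),
\[
\hat\by(t+1)-\hat\by(t) = -\eta\, \bH(t)\bPr(\hat\by(t)-\by) + \textup{(error)},
\]
where $\bH(t)$ is the empirical NTK built from $\bZ_{\bS}(t)^\top\bZ_{\bS}(t)/n$ plus the augmented part $\bF\bF^\top/(nm)$. At initialization $\bH(0)\to\bK_n$ as $m\to\infty$, with concentration error $\tilde\cO(1/\sqrt m)$. The usual argument bounds the weight movement: $\norm{\bbw_r(t)-\bbw_r(0)}{2}\lsim \eta t\,\norm{\by}{2}/(\sqrt{mn}\cdot\sqrt n)$. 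It then bounds the number of neurons whose activation pattern flips, which controls $\norm{\bH(t)-\bK_n}{2}$ and the second-order error, both polynomially small in $m$. Because $\hat\by(0)=\bzero$ and $\bPr$ commutes with $\bK_n$, the ideal recursion has the closed form
\[
\hat\by^{\mathrm{ker}}(t) = \bPr\pth{\bI-(\bI-\eta\bK_n)^t}\by .
\]
Summing the perturbations over $T\asymp n/d^{k_0}$ steps and using (\ref{eq:m-cond-LRC-population-NN-fixed-point}) gives $\norm{\hat\by(t)-\hat\by^{\mathrm{ker}}(t)}{2}^2/n\lsim d^{k_0}/n$. The same estimate holds for the function itself: $f_t=g_t+e_t$ with $g_t\in\cH_{\bS,r_0}$, $\norm{g_t}{\cH}\lsim\gamma_0+$ noise, and $\norm{e_t}{\infty}$ small.

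\textbf{Step 2: the projected kernel estimator.} I will show that the top-$r_0$ empirical eigenspace matches the degree-$\le k_0$ harmonics. With $n\gsim\log(4/\delta)\, d^{2k_0}$, matrix concentration for the finite-rank part $K^{(r_0)}$ gives $\hlambda_{r_0}\ge\mu_{k_0}/2$ and $\hlambda_{r_0+1}\le 2\mu_{k_0+1}+\tilde\cO(\sqrt{d^{k_0}/n})$. Combined with the eigen-gap from Theorem~\ref{theorem:eigenvalue-NTK}, Davis–Kahan bounds the misalignment between $f^*(\bS)$ and $\Span(\bU^{(r_0)})$ by $\tilde\cO(d^{k_0}/\sqrt n)$, which is small in the required sense.

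Next split $\by=f^*(\bS)+\bw$. The bias term $\norm{\bPr(\bI-\eta\bK_n)^t f^*(\bS)}{2}^2/n$ decays geometrically once $t\gsim 1/(\eta\mu_{k_0})\asymp d^{k_0}$, and for $t\ge c_tT$ it falls below $d^{k_0}/n$. The variance term $\norm{\bPr(\bI-(\bI-\eta\bK_n)^t)\bw}{2}^2/n$ is the norm of a sub-Gaussian vector projected onto an $r_0$-dimensional subspace. Hanson–Wright gives $\lsim\sigma_0^2(r_0+\log(1/\delta))/n$, which yields the $\log(4/\delta)\,d^{k_0}/n$ rate.

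\textbf{Step 3: from empirical to population risk.} Apply a localized Rademacher complexity bound to the class $\{g\in\cH_{\bS,r_0}\}$ shifted by $f^*$. Since this class lies in an $r_0$-dimensional span, its local complexity is $\sqrt{r_0 r^2/n}$, giving fixed point $r^*\asymp r_0/n$. The $\cH_{\bS,r_0}$ space depends on $\bS$, so this step is delicate. To handle it, I would instead bound the class $\cH_{K^{(r_0)}}$ plus the small spillover into higher harmonics, controlled through $\mu_{k_0+1}$ and $\norm{g_t}{\cH}$. Then $\Expect{P}{(f_t-f^*)^2}\lsim \Expect{P_n}{(f_t-f^*)^2}+r^*+\log(4/\delta)/n+\norm{e_t}{\infty}^2$.

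\textbf{Main obstacle.} I expect Step 3's data-dependence, together with controlling the higher-degree leakage of $g_t$ uniformly, to be the hardest part. This is where $n\gsim d^{2k_0}$ is consumed: the eigenspace perturbation error squared must be at most $d^{k_0}/n$.
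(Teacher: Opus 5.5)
Your architecture matches the paper's: NTK coupling, a bias / projection-residual / variance split of the kernel-GD error, and localized Rademacher complexity with an $\cH_{K^{(r_0)}}$ part plus spillover. The gap is the one non-routine step, which does not go through as you state it.

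\textbf{The projection residual.} You control $(\bI-\bPr)f^*(\bS)$ with a Davis--Kahan angle of $\tilde{\cO}(d^{k_0}/\sqrt n)$ applied to $f^*(\bS)$. The empirical norm of $f^*(\bS)$ is of order $\|f^*\|_{L^2}$, which can be of order $\gamma_0$. So this yields only $\frac1n\|(\bI-\bPr)f^*(\bS)\|_2^2\lsim\gamma_0^2\log(1/\delta)\,d^{2k_0}/n$. That is a factor $d^{k_0}$ above the target rate, and merely $\cO(1)$ when $n\asymp d^{2k_0}$. Your ``main obstacle'' sentence is inconsistent for the same reason: the square of a $d^{k_0}/\sqrt n$ error is $d^{2k_0}/n$, never at most $d^{k_0}/n$.

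The missing idea is to measure the misalignment in $\cH_K$, where $\|f^*\|_{\cH}\le\gamma_0$ is the controlled quantity. You then pay only the small empirical eigenvalue on the complement to return to the empirical norm. Let $\Phi^{(q)}$ be the normalized eigenfunctions of $T_n$ and $\bar f=\sum_{q\ge r_0}\iprod{f^*}{\Phi^{(q)}}_{\cH}\Phi^{(q)}$. Then $(\bI-\bPr)f^*(\bS)=\bar f(\bS)$ and
\[
\frac1n\|(\bI-\bPr)f^*(\bS)\|_2^2=\iprod{T_n\bar f}{\bar f}_{\cH}\le\hlambda_{r_0+1}\sum_{q\ge r_0}\iprod{f^*}{\Phi^{(q)}}_{\cH}^2 .
\]
The Hilbert--Schmidt perturbation bound for the spectral projections of $T_K$ and $T_n$ (Theorem~\ref{theroem:low-dim-target-function-satisfy-assump}) bounds the sum by $\gamma_0^2\log(2/\delta)\,d^{2k_0}/n$ up to constants. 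The factor $\hlambda_{r_0+1}\lsim d^{-k_0}$ then supplies the missing $d^{-k_0}$. Equivalently, the degree-$k_0$ component of $f^*$, the only one facing the small gap, has $L^2$ mass at most $\mu_{k_0}\gamma_0^2\asymp d^{-k_0}\gamma_0^2$; a uniform $\sin\Theta$ bound discards this. The condition $n\gsim\log(2/\delta)d^{2k_0}$ is consumed in making $\|T_K-T_n\|_{\mathrm{HS}}$ a fraction of $\mu_{k_0}-\mu_{k_0+1}$, and in getting $\hlambda_{r_0}\gsim d^{-k_0}$ and $\hlambda_{r_0+1}\lsim d^{-k_0}$. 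It is not used to make a squared angle smaller than $d^{k_0}/n$.

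\textbf{Two further places need the same eigenvalue-weighted control.}

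In Step 3, the spillover $\bar h$ of $h_t\in\cH_{\bS,r_0}$ outside $\Span\{v_j\}_{j<r_0}$ must satisfy $\|\bar h\|_{\cH}^2\lsim B_h^2\log(2/\delta)d^{2k_0}/n$. This is the second half of the same theorem, and it makes the spillover's Rademacher complexity $\lsim\sqrt{\log(2/\delta)}\,d^{k_0}/n$. Controlling $\bar h$ only through $\mu_{k_0+1}$ and $\|h_t\|_{\cH}\le B_h$ leaves two problems:
\begin{itemize}
\item a squared $L^2$ leakage of order $B_h^2d^{-k_0-1}$;
\item a complexity term of order $B_h/\sqrt n$, because the tail $\sum_{\ell>k_0}\mu_\ell N(d,\ell)$ does not decay with $n$.
\end{itemize}
Neither is $\cO(d^{k_0}/n)$ for large $n$.

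For the in-subspace bias, geometric decay is insufficient. With $t\asymp n/d^{k_0}$ you get $\eta\hlambda_{r_0}t\asymp n/d^{2k_0}$, which under the hypothesis may be only of order $\log(4/\delta)$. So $e^{-2\eta\hlambda_{r_0}t}$ need not drop below $d^{k_0}/n$. Use instead $\hlambda_i(1-\eta\hlambda_i)^{2t}\le 1/(2e\eta t)$ together with $\frac1n\sum_i[{\bU}^{\top} f^*(\bS)]_i^2/\hlambda_i\le\gamma_0^2$. This gives $\gamma_0^2/(2e\eta t)\asymp\gamma_0^2d^{k_0}/n$ for $t\ge c_tT$, again using the RKHS norm of $f^*$ rather than its $L^2$ norm.
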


Theorem~\ref{theorem:LRC-population-NN-fixed-point} establishes that the two-layer neural network~(\ref{eq:two-layer-nn}) trained by GDP described in Algorithm~\ref{alg:GDP} achieves a regression risk bound of order $\Theta(\log(4/\delta)\cdot d^{k_0}/n)$ when learning a degree-$k_0$ spherical polynomial, which is nearly minimax optimal up to a logarithmic factor compared to the lower bound $\Theta(r_0/n)$ shown in~\citet{Zhang2015-low-rank-kernel} and reviewed in Section~\ref{sec:setup-kernels-target-function}. Moreover, from~(\ref{eq:LRC-population-NN-fixed-point}) it follows that this rate implies a sample complexity of $n \asymp \Theta(\log(4/\delta)\cdot d^{k_0}/\eps)$ for achieving regression risk $\eps\in(0,\Theta(d^{-k_0})]$, which is substantially smaller than the $\Theta(d^{k_0}\max\{\eps^{-2},\log d\})$ sample complexity required by prior representative work such as~\citet{Nichani0L22-escape-ntk}. A detailed comparison with existing results on learning low-degree spherical polynomials, emphasizing algorithmic guarantees and the sharpness of the risk bounds, is provided in Table~\ref{table:main-results-comparison}.

\begin{table*}[t!]
        \centering
        \caption{Comparison between our result and the existing works on learning low-degree polynomials on the spheres of $\RR^d$
by training over-parameterized neural networks with or without algorithmic guarantees. Almost all the results here are under a common and popular setup that $f^* \in \cH_{\tK}$ where $\tK$ is the NTK of a specific neural network studied in each work,
and the responses $\set{y_i}_{i=1}^n$ are corrupted by i.i.d. Gaussian or sub-Gaussian noise with zero mean, with \citet{Nichani0L22-escape-ntk} being the only exception where the responses are noise-free. It is remarked that the sample complexity can be straightforwardly obtained from the regression risk. The regression risk of \citet[Theorem 1]{DamianLS22-nn-representation-learning} is for the risk less than $1/\sqrt{\log d}$, with the meaning of $r$ explained in Section~\ref{sec:sharp-risk-bound}, and $\tilde \Theta$ hides a logarithmic factor of $\log (mnd)$.
}
        \resizebox{1\linewidth}{!}{
        \begin{tabular}{|l|c|c|c|c|}
                \hline
                \textbf{Existing Works and Our Result}
                & \textbf{Finite-Width NN is Trained} & \textbf{Sharpness of the Regression Risk}
\\ \hline
\citep[Theorem 4]{Ghorbani2021-linearized-two-layer-nn}
 & No  & \begin{tabular}{@{}c@{}}Only matching the lower bound for pointwise kernel learning, \\not minimax optimal \end{tabular}  \\ \hline
\begin{tabular}{@{}c@{}}
\citep[Theorem 7]{BaiL20-quadratic-NTK}
\end{tabular}
&Yes &Not minimax optimal  \\ \hline
\citep[Theorem 1]{Nichani0L22-escape-ntk}
&\begin{tabular}{@{}c@{}}Yes\end{tabular}
  &$\Theta(\sqrt{d^{k_0}/n})$, not minimax optimal   \\ \hline
\citep[Theorem 1]{DamianLS22-nn-representation-learning}
&\begin{tabular}{@{}c@{}}Yes\end{tabular}
  &\begin{tabular}{@{}c@{}} $L^1$-norm regression risk $\tilde \Theta(\sqrt{dr^{k_0}/n}+\sqrt{r^p/m})$,
  \\ not minimax optimal
\end{tabular}
\\ \hline
Our Result (Theorem~\ref{theorem:LRC-population-NN-fixed-point})
 &\cellcolor{blue!15}Yes &\cellcolor{blue!15}
{Nearly minimax optimal, $\log{\frac{6}{\delta}} \cdot \Theta\pth{\frac{d^{k_0}}{n}}$ }
\\ \hline
       \end{tabular}
}
\label{table:main-results-comparison}
\end{table*}

It is proved in \citet[Theorem 1]{Nichani0L22-escape-ntk} that the regression risk $\eps > 0$ can be achieved with the sample complexity $n \gsim d^{k_0} \max\set{\eps^{-2},\log d}$, suggesting a convergence rate of the order
$\Theta(\sqrt{d^{k_0}/n})$ when the regression risk is less than $1/\sqrt{\log d}$, which is much less sharp
than our risk bound. The two-stage feature learning method \citep{DamianLS22-nn-representation-learning} requires a restrictive assumption that the target function only depends on $r \ll d$ directions of the input, as a result, the vanilla GD can naturally ensure that the learned neural network function is mostly in a subspace of rank $r$ in the RKHS. Without such assumption, we have $r = d$, and the $L^1$-norm risk bound of \citet[Theorem 1]{DamianLS22-nn-representation-learning}
 is then at least $\tilde \Theta(\sqrt{d^{k_0+1}/n})$. On the other hand, as the $L^p$-norm is always non-decreasing in terms of $p$, using our $L^2$-norm risk bound
in Theorem~\ref{theorem:LRC-population-NN-fixed-point}, we have a sharper $L^1$-norm risk bound of $\Theta(\sqrt{d^{k_0}/n})$.

As discussed in Section~\ref{sec:setup-kernels-target-function}, because the target function $f^*$ as a degree-$k_0$ spherical polynomial lies in the union of the eigenspaces up to degree $k_0$, we need to learn the subspace $\cup_{\ell=0}^{k_0} \cH_{\ell}$ of dimension $r_0 = m_{k_0}$ instead of the entire RKHS $\cH_K(\gamma_0)$ for a sharp regression risk. However, it is difficult for the vanilla GD algorithm to learn such a subspace in $\cH_K(\gamma_0)$.
Such observation motivates the design of the novel GDP algorithm, which fits the target function with a neural network function in a subspace of dimension $r_0$ of the Hilbert space
$\cH_{\bS,r_0}$ defined in (\ref{eq:space-H-S-rank-r}) with $r = r_0$.
 The next section details the roadmap for the proof of our main result.





\subsection{Adaptive Degree Selection with Unknown Degree $k_0$}
\label{sec:degree-selection}
In this section, we consider the case that $k_0$ is unknown.
We propose an adaptive degree selection algorithm, described in Algorithm~\ref{alg:degree-selection}, which both identifies the ground truth degree $k_0$ and trains the two-layer NN (\ref{eq:two-layer-nn}) which achieves the nearly optimal rate as that in (\ref{eq:LRC-population-NN-fixed-point}) with high probability.

With a constant $k_0 \in \Theta(1)$, it is always feasible to set $L \ge k_0$ as a suitably large constant $L$ such that $m_{L} \le n$. Starting with the initial degree $L$, the $\ell$-th iteration of Algorithm~\ref{alg:degree-selection} runs Algorithm~\ref{alg:GDP} to train the two-layer NN (\ref{eq:two-layer-nn}) with the projection dimension $r = m_{\ell}$ and $T = T_{\ell} = n/d^{\ell}$ steps. Suppose $E_{\ell}$ is the training loss of the trained network at the $\ell$-th iteration of Algorithm~\ref{alg:degree-selection}. If $\ell$ is the first integer such that $E_{\ell-1}/{\mu_{\ell}} \ge \beta_0^2/4$ and
$E_{\ell}/{\mu_{\ell+1}} \le \beta_0^2/8$ which is returned by Algorithm~\ref{alg:degree-selection}, then according to Theorem~\ref{theorem:degree-selection},
with high probability, $\ell = k_0$. We note that Theorem~\ref{theorem:degree-selection} needs the minimum absolute value condition on the target function that $\min_{\ell \in [0\relcolon k_0],j \in [N(d,\ell)]]}
{\abth{a_{\ell j}}}/\sqrt{\mu_{\ell}} \ge \beta_0$ for some positive constant $\beta_0$. Due to the presence of noise in the response vector $\by$, similar minimum absolute value conditions on the target signal are in fact necessary and broadly used in standard compressive sensing literature
such as~\cite{Aeron2010-information-theoretic-bound-cs} for signal recovery.
\begin{algorithm}[!htbp]
        \renewcommand{\algorithmicrequire}{\textbf{Input:}}
\renewcommand\algorithmicensure {\textbf{Output:} }
\caption{Degree Selection}
\label{alg:degree-selection}
\begin{algorithmic}[1]
\State $\ell+1, \cW^{(\ell+1)} \leftarrow$ Degree-Selection ($L,\eps_0,\bW(0)$)
\State \textbf{\bf input: } $L,\eps_0,\bW(0)$
\For{$\ell=L,\ldots,0$}
\State Set $r = m_{\ell}$, $T = T_{\ell} = n/d^{\ell}$
\State Run Algorithm~\ref{alg:GDP} with Training-by-GDP ($r,T,\bW(0)$) to train the two-layer NN (\ref{eq:two-layer-nn}) with GDP
\State Store the training loss $E_{\ell} = \Expect{P_n}{(f_{T_{\ell}}-f^*)^2}$ and the trained network weights $\cW^{(\ell)}$
\If{$E_{\ell}/{\mu_{\ell+1}} \ge \beta_0^2/4$ and
$E_{\ell+1}/{\mu_{\ell+2}} \le \beta_0^2/8$}
\State {\bf return} $\ell+1, \cW^{(\ell+1)}$
\EndIf
\EndFor
\end{algorithmic}
\end{algorithm}
\begin{theorem}\label{theorem:degree-selection}
Assume that the minimum absolute value condition on the target function holds, that is, $\min_{\ell \in [0\relcolon k_0],j \in [N(d,\ell)]]}
{\abth{a_{\ell j}}}/\sqrt{\mu_{\ell}} \ge \beta_0$ holds for some positive constant $\beta_0$. $\eps_0$ is a positive threshold such that $\eps_0 \in (0,\beta_0^2]$.
Suppose that $\delta \in (0,1)$, $\ell \in [L]$,  the network width $m$ satisfies
$m \gsim \pth{\frac{n}{d^{\ell}}}^{\frac {25}{2}} d^{\frac 52}$,  $n \ge \Theta\pth{\log({8}/{\delta})d^{\ell}/(\gamma_0^2\eps_0^2 \mu^2_{\ell+1})}$,
and the neural network
$f(\cW(t),\cdot)$ is
trained by GDP using Algorithm~\ref{alg:GDP} with the projection dimension $r = m_{\ell}$ and $T = T_{\ell} = n/d^{\ell}$ steps.
Then with probability at least
$1-\delta-\exp\pth{-\Theta(n)}-2\exp\pth{-\Theta(m_{\ell})} - 2/n$ over the random noise $\bw$, the random training features $\bS$ and
the random initialization $\bW(0)$, the training loss of the network $f(\cW(T_{\ell}),\cdot) = f_{T_{\ell}}$, $\Expect{P_n}{(f_{T_{\ell}}-f^*)^2}$, satisfies
\bal
\begin{cases}
\label{eq:degree-selection}
\Expect{P_n}{(f_{T_{\ell}}-f^*)^2}/{\mu_{\ell+1}}  \le \beta_0^2 /8, & k_0 \le \ell \le L,\\
\Expect{P_n}{(f_{T_{\ell}}-f^*)^2}/{\mu_{\ell+1}} \ge \beta_0^2/4, &0 \le \ell < k_0.
\end{cases}
\eal
\end{theorem}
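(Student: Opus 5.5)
The proof has two directions: an upper bound on the training loss when $\ell \ge k_0$, and a lower bound when $\ell<k_0$. Both rest on the same training-loss analysis used for Theorem~\ref{theorem:LRC-population-NN-fixed-point}, now run with a general projection dimension $r=m_\ell$ and $T_\ell = n/d^\ell$.

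\textbf{Step 1: closed form of the fitted values.} The width condition $m \gsim (n/d^{\ell})^{25/2}d^{5/2}$ keeps the weights in the lazy regime. In that regime I will show that the GDP trajectory of $\hat\by(t)$ stays uniformly close to the kernel dynamics
\[
\hat\by(t) \approx \pth{\bI_n - (\bI_n-\eta\bK_n)^t}\bP^{(m_\ell)}\by .
\]
The approximation error should be $o(1/n)$ in empirical squared norm, which I will get by reusing the perturbation bounds behind Theorem~\ref{theorem:LRC-population-NN-fixed-point} with $r_0$ replaced by $m_\ell$.

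\textbf{Step 2: empirical spectrum of $\bK_n$.} I will use concentration of $\bK_n$ around $T_K$ with $n \gsim d^{2\ell}$. It should give that the top $m_\ell$ eigenvalues of $\bK_n$ are $\gtrsim \mu_\ell \asymp d^{-\ell}$. I also need the top-$m_\ell$ empirical eigenspace to be close to the sampled harmonics of degree $\le \ell$, via Davis--Kahan using the gap $\mu_\ell - \mu_{\ell+1}$. With $T_\ell\eta\mu_\ell \gtrsim 1$, the residual factor $(\bI-\eta\bK_n)^{T_\ell}$ contracts $\Span(\bU^{(m_\ell)})$ by a constant. Running a constant factor more steps, or tuning constants, makes this contraction small.

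\textbf{Step 3, case $\ell\ge k_0$.} Here $f^*(\bS)$ lies, up to the Davis--Kahan error, in $\Span(\bU^{(m_\ell)})$. Then $\frac1n\ltwonorm{\hat\by(T_\ell)-f^*(\bS)}^2$ splits into three terms:
\begin{itemize}
\item a bias term, at most $\frac1n\ltwonorm{(\bI-\eta\bK_n)^{T_\ell}\bP f^*(\bS)}^2 + \frac1n\ltwonorm{(\bI-\bP)f^*(\bS)}^2$;
\item a noise term $\frac1n\ltwonorm{\bP^{(m_\ell)}\bw}^2 \lsim \log(8/\delta)\, m_\ell/n$, by Hanson--Wright;
\item the lazy-training error.
\end{itemize}
The condition $n \gtrsim \log(8/\delta)d^\ell/(\gamma_0^2\eps_0^2\mu_{\ell+1}^2)$ makes the noise term at most $\mu_{\ell+1}\beta_0^2/16$. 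The bias term uses $\norm{f^*}{\cH}\le\gamma_0$ and $\eps_0\le\beta_0^2$.

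\textbf{Step 3, case $\ell<k_0$.} Now $f^*$ has a degree-$k_0$ component $g=\sum_j a_{k_0 j}Y_{k_0 j}$ with $\norm{g}{L^2}^2 \ge \beta_0^2\mu_{k_0}N(d,k_0)$. Its sample vector is nearly orthogonal to $\Span(\bU^{(m_\ell)})$. Since $\hat\by$ lies in that span, the training loss is at least $\frac1n\ltwonorm{(\bI-\bP^{(m_\ell)})f^*(\bS)}^2$ minus cross terms with the noise. By empirical-norm concentration this is $\gtrsim \norm{g}{L^2}^2 \ge \beta_0^2\mu_{k_0}$. Because $\mu_{k_0}\le \mu_{\ell+1}$, the ratio is at least $\beta_0^2/4$ after the error terms are absorbed.

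\textbf{Main obstacle.} The hardest step is the empirical eigenspace perturbation in Step 2. It must control the leakage of $f^*(\bS)$ across the boundary at $m_\ell$ sharply enough, relative to $\mu_{\ell+1}$, with the gap $\mu_\ell-\mu_{\ell+1}$ only of order $d^{-\ell}$. My first attempt will be to use the finite-rank structure of $K^{(m_\ell)}$ and matrix Bernstein on the $m_\ell\times m_\ell$ Gram matrix of sampled harmonics, which requires $n\gtrsim d^{2\ell}$.
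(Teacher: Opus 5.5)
\textbf{The gap: the last step of the case $\ell<k_0$.} You lower-bound the training loss by $\beta_0^2\mu_{k_0}$ and then use $\mu_{k_0}\le\mu_{\ell+1}$ to conclude that the ratio to $\mu_{\ell+1}$ is at least $\beta_0^2/4$. That inequality points the wrong way: it only gives $\beta_0^2\mu_{k_0}/\mu_{\ell+1}\le\beta_0^2$. Whenever $k_0\ge\ell+2$, Theorem~\ref{theorem:eigenvalue-NTK} gives $\mu_{k_0}/\mu_{\ell+1}\asymp d^{-(k_0-\ell-1)}$. So a bound of order $\beta_0^2\mu_{k_0}$ falls short of the required $\beta_0^2\mu_{\ell+1}/4$ by a polynomial factor in $d$, and the second line of (\ref{eq:degree-selection}) is not established.

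The repair is to use the first degree the projection misses, as the paper does. Let $\overline{f}^*_{\ell}$ be the part of $f^*$ of degree $>\ell$. Since $\ell+1\le k_0$, the minimum-coefficient condition gives $\Expect{P}{(\overline{f}^*_{\ell})^2}\ge\sum_j a_{\ell+1,j}^2\ge\beta_0^2\mu_{\ell+1}$. Keeping the factor $N(d,k_0)$ you discarded would also work, because $\mu_{k_0}N(d,k_0)=\Theta(1)\gg\mu_{\ell+1}$, but only for $d$ large relative to the hidden constants. Every error term must then be at most a fixed fraction of $\beta_0^2\mu_{\ell+1}$; this is where the sample-size condition involving $\mu_{\ell+1}^2$ is used.

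A smaller correction: there are no noise cross terms. Since $(\bI-\bP^{(m_\ell)})\hat\by(T_\ell)=(\bI-\bP^{(m_\ell)})\bbe_2(T_\ell)$, the only correction to $\frac1n\ltwonorm{(\bI-\bP^{(m_\ell)})f^*(\bS)}^2$ is the lazy-training residual, which satisfies $\ltwonorm{\bbe_2(T_\ell)}\le\sqrt n\,\tau$.

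\textbf{Comparison with the paper once repaired.} Your treatment of $\ell<k_0$ then takes a different and somewhat cleaner route than the paper's.
The paper proceeds in three steps:
\begin{itemize}
\item it re-invokes Theorem~\ref{theorem:LRC-population-NN-fixed-point} to get $\Expect{P}{(f_{T_\ell}-f^*_\ell)^2}\lsim\log(4/\delta)\,d^\ell/n$ for the misspecified target $f^*_\ell$;
\item it lower-bounds the population risk by $L^2$-orthogonality;
\item it returns to the empirical loss by Hoeffding's inequality applied to the data-dependent $f_{T_\ell}$.
\end{itemize}
You stay empirical throughout. The loss is at least the squared distance from $f^*(\bS)$ to $\Span(\bU^{(m_\ell)})$. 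The eigenspace perturbation bound (Lemma~\ref{lemma:small-correlation-eigenvector-eigenfunction} applied at level $m_\ell$), together with concentration for the \emph{fixed} function $\overline{f}^*_{\ell}$, shows this distance is close to $\Expect{P}{(\overline{f}^*_{\ell})^2}$. This avoids needing a population-risk guarantee under misspecification, and avoids a concentration step for a sample-dependent function.

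Your case $\ell\ge k_0$ is the paper's argument: Theorem~\ref{theorem:empirical-loss-bound} rerun with $r_0$ replaced by $m_\ell$.

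Two minor remarks:
\begin{itemize}
\item $T_\ell$ is fixed by the theorem, so you cannot run extra steps. None are needed: $\eta T_\ell\hlambda_{m_\ell}\gsim n/d^{2\ell}\gg1$ already makes the contraction exponentially small.
\item In Step~2, matrix Bernstein on the Gram matrix of low-degree harmonics does not by itself locate the top-$m_\ell$ eigenspace of $\bK_n$, because the infinite tail of $K$ also perturbs it. You additionally need a bound on the empirical tail operator. The Hilbert--Schmidt concentration of $T_n$ around $T_K$ used in the paper supplies this directly.
\end{itemize}
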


As a direct consequence of Theorem~\ref{theorem:degree-selection}, with a high probability, Algorithm~\ref{alg:degree-selection} terminates at the $k_0-1$-th iteration. In particular,
if $n \ge \Theta\pth{\log({8(L-k_0+2)}/{\delta})d^{\ell}/(\gamma_0^2\eps_0^2 \mu^2_{L})}$, then Algorithm~\ref{alg:degree-selection} always returns the ground truth degree
$k_0$ with probability at least $1-\delta-(L-k_0+2)\exp\pth{-\Theta(n)}-2\sum_{\ell=k_0-1}^{L}\exp\pth{-\Theta(m_{\ell})} - 2L/n$. We note that $m_{L} \asymp d^{L}$ with $d > \Theta(1)$ and $L = \Theta(1)$ by Lemma~\ref{lemma:r0-estimate} in the appendix, so Algorithm~\ref{alg:degree-selection}  returns the ground truth degree
$\ell = k_0$ with large probability. We also note that the two-layer NN with the network weights $\cW^{(\ell+1)}$ returned by  Algorithm~\ref{alg:degree-selection} achieves the nearly optimal rate  in (\ref{eq:LRC-population-NN-fixed-point}) with high probability according to  Theorem~\ref{theorem:LRC-population-NN-fixed-point}.

\section{Roadmap of Proofs}
\label{sec:proof-roadmap}
We first introduce the basic definitions in Section~\ref{sec:proofs-definitions}, then present the results about uniform convergence for the NTK (\ref{eq:kernel-two-layer})
in Section~\ref{sec:uniform-convergence-ntk-more}. The proofs of the main results, Theorem~\ref{theorem:LRC-population-NN-fixed-point} and
Theorem~\ref{theorem:degree-selection}, are presented
in Section~\ref{sec:proofs-main-results}.
We then present the roadmap of our theoretical results which lead to the first main result, Theorem~\ref{theorem:LRC-population-NN-fixed-point}, in this
section. The proof of the second main result, Theorem~\ref{theorem:degree-selection}, directly follows from Theorem~\ref{theorem:LRC-population-NN-fixed-point}.
We first 
detail the roadmap and key technical results in Section~\ref{sec:detailed-roadmap-key-results}, then present our novel proof strategy in Section~\ref{sec:novel-proof-strategy}. The proofs of
Theorem~\ref{theorem:LRC-population-NN-fixed-point}  and  Theorem~\ref{theorem:degree-selection} are presented in
Section~\ref{sec:proofs-main-results}. The proofs of the key results in
Section~\ref{sec:detailed-roadmap-key-results} are also deferred to the appendix.

\subsection{Basic Definitions}
\label{sec:proofs-definitions}
We introduce the following definitions for our analysis.
We define
\bal\label{eq:ut}
\bu(t) \defeq \hat \by(t) - \by
\eal
as the difference between the network output
$\hat \by(t)$ and the training response vector $\by$ right after the $t$-th step of GDP.
Let $\tau \le 1$ be a positive number. For $t \ge 0$ and $T \ge 1$ we define the following quantities:
$c_{\bu} \defeq \Theta(\gamma_0) + \sigma_0 + \tau + 1$,
\bal\label{eq:def-R}
&R \defeq \frac{\eta c_{\bu}  T}{\sqrt m},
\eal
\bal\label{eq:cV_S}
\cV_t \defeq \set{\bv \in \RR^n \colon \bv = -\pth{\bI_n- \eta \bK_n \bPr}^t f^*(\bS)},
\eal
\bal\label{eq:cE_S}
\cE_{t,\tau} \defeq &\set{\be \colon \be = \bbe_1 + \bbe_2 \in \RR^n, \bbe_1 = -\pth{\bI_n-\eta\bK_n\bPr}^t \bw,
\ltwonorm{\bbe_2} \le {\sqrt n} \tau }.
\eal
In particular, Lemma~\ref{lemma:empirical-loss-convergence} in the appendix shows that with high probability over the random noise $\bw$, the distance of every weighting vector $\bw_r(t)$ to its initialization $\bw_r(0)$  is bounded by $R$. In addition, $\bu(t)$ can be composed into two vectors,
$\bu(t) = \bv(t) + \be(t)$ such that $\bv(t) \in \cV_t$
and $\be(t) \in \cE_{t,\tau}$. We then define the set of the neural network weights during the training by GDP using Algorithm~\ref{alg:GDP} as follows:
\bal\label{eq:weights-nn-on-good-training}
&\cW(\bS,\bW(0),T) \defeq \left\{\bW \colon \exists t \in [T] {\textup{ s.t. }}\vect{\bW} = \vect{\bW(0)} - \sum_{t'=0}^{t-1} \frac{\eta}{n}  \bZ_{\bS}(t')\bPr \bu(t'), \right. \nonumber \\
& \left. \bu(t') \in \RR^{n}, \bu(t') = \bv(t') + \be(t'),
\bv(t') \in \cV_{t'}, \be(t') \in \cE_{t',\tau}, {\textup { for all } } t' \in [0,t-1] \vphantom{\frac12}  \right\}.
\eal%

We will also show by Lemma~\ref{lemma:empirical-loss-convergence} that with high probability over $\bw$, $\cW(\bS,\bW(0),T)$ is the set of the weights of the two-layer NN  (\ref{eq:two-layer-nn}) trained by GDP on the training features $\bS$ with the random initialization $\bW(0)$ and the number of steps of GDP not greater than $T$.
The set of the functions represented by the neural network with weights in $\cW(\bS,\bW(0),T)$ is then defined as
\bal\label{eq:random-function-class}
\cFnn(\bS,\bW(0),T) \defeq \set{f_t = f(\cW(t),\cdot) \colon \exists \, t \in [T], \bW(t) \in \cW(\bS,\bW(0),T)}.
\eal%
We also define the function class $\cF(B,w)$ for any $B,w > 0$ as
\bal
\cF(B,w,\bS,r_0) \defeq \set{f \colon f = h+e, h \in \cH_K(B) \cap \cH_{\bS,r_0}, \supnorm{e} \le w}. \label{eq:def-cF-ext-low-rank}
\eal
We will show by
Theorem~\ref{theorem:bounded-NN-class} in the next subsection
that with high probability over $\bw$,
$\cFnn(\bS,\bW(0),T)$ is a subset of $\cF(B,w,\bS,r_0)$, where a smaller
$w$ requires a larger network width $m$, and $B_h > \gamma_0$ is an absolute positive constant defined by
\bal
B_h &\defeq \gamma_0 + \Theta(1). \label{eq:B_h}
\eal

\subsection{Uniform Convergence to the NTK (\ref{eq:kernel-two-layer}) and More}
\label{sec:uniform-convergence-ntk-more}
We define the following functions with $\bW = \set{\bw_r}_{r=1}^m$:
\bal
h(\bw,\bu,\bv) &\defeq \indict{\bw^{\top} \bu \ge 0} \indict{\bw^{\top} \bv \ge 0}, \quad &\hat h(\bW,\bu,\bv) &\defeq \frac {1}{m} \sum\limits_{r=1}^m h(\bbw_r,\bu,\bv), \label{eq:h-hat-h}  \\
v_R(\bw,\bu) &\defeq \indict{\abth{\bw^{\top}\bu} \le R}, \quad &\hat v_R(\bW,\bu) &\defeq  \frac 1m \sum\limits_{r=1}^m v_R(\bbw_r,\bu), \label{eq:v-hat-v}
\eal%
where $\bu,\bv \in \RR^d$.
Then we have the following theorem stating the uniform convergence of $\hat h(\bW(0),\cdot,\cdot)$ to $K(\cdot,\cdot)$ and uniform convergence
of $\hat v_R(\bW(0),\cdot)$ to $\frac{2R}{\sqrt {2\pi} \kappa}$ for a
positive number $R\lsim \eta T /{\sqrt m}$, and $R$ is formally defined in (\ref{eq:def-R}).
It is remarked that while existing works such as \citet{Li2024-edr-general-domain} also have uniform convergence results for over-parameterized neural network,
our result does not depend on the H\"older continuity of the NTK.
\begin{theorem}\label{theorem:good-random-initialization}
The following results hold with $\eta \lsim 1$, $m \gsim \max\set{n^{2/d},\Theta(T^{\frac 53})}$, and $m/\log m \ge d$.
\begin{itemize}[leftmargin=.26in]
\item[(1)] With probability at least $1-1/n$ over the random initialization $\bW(0) = \set{\bbw_r(0)}_{r=1}^m$,
\bal\label{eq:good-initialization-sup-hat-h}
\sup_{\bu \in \cX,\bv \in \cX} \abth{ K^{(\alpha)}(\bu,\bv) -
\hat h(\bW(0),\bu,\bv) }
\le C_1(m/2,d,1/n) \lsim \sqrt{\frac{d \log m}{m}},
\alpha \in \set{0,1}.
\eal%
\item[(2)] With probability at least $1-1/n$ over the random initialization $\bW(0) = \set{\bbw_r(0)}_{r=1}^m$,
\bal\label{eq:good-initialization-sup-hat-V_R}
&\sup_{\bu \in \cX}\hat v_R(\bW(0),\bu)
\le \frac{2R}{\sqrt {2\pi} \kappa} + C_2(m/2,d,1/n) \lsim
\sqrt{d} m^{-\frac 15} T^{\frac 12},
\eal%
 \end{itemize}
 where $C_1(m/2,d,1/n),C_2(m/2,d,1/n)$ are two positive numbers depending on $(m,d,n)$, with their formal definitions deferred to
(\ref{eq:C1}) and (\ref{eq:C2}) in Section~\ref{sec:proofs-key-results}.
\end{theorem}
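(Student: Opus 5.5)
The plan is to prove both parts by a uniform concentration argument (empirical process / VC-type bound) over the sphere, exploiting the symmetric initialization. Because $\bbw_{2r'-1}(0)=\bbw_{2r'}(0)$, both $\hat h(\bW(0),\bu,\bv)$ and $\hat v_R(\bW(0),\bu)$ are averages of $m/2$ i.i.d. terms indexed by $\{\bbw_{2r'}(0)\}_{r'=1}^{m/2}$. This is why the constants are written as $C_1(m/2,d,1/n)$ and $C_2(m/2,d,1/n)$.

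\textbf{Part (1).} First I compute the expectation. For $\bw\sim\cN(\bzero,\kappa^2\bI_d)$, rotational invariance gives
\[
\Expect{}{\indict{\bw^\top\bu\ge0}\indict{\bw^\top\bv\ge0}}=\frac{\pi-\arccos(\bu^\top\bv)}{2\pi}=K^{(0)}(\bu,\bv),
\]
independent of $\kappa$. For the $\alpha=1$ statement I read the target as the corresponding $\bu^\top\bv$-weighted quantity. Since $|\bu^\top\bv|\le1$, the same deviation bound transfers to it immediately.

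Next I need uniformity over $(\bu,\bv)\in\cX\times\cX$. The class $\set{\bw\mapsto h(\bw,\bu,\bv)}$ consists of products of two halfspace indicators through the origin in $\RR^d$, so its VC dimension is $\cO(d)$. Hence its growth function on $m/2$ points is at most $(em/d)^{\cO(d)}$.

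Applying the standard VC uniform deviation inequality (symmetrization, then Hoeffding over the finitely many labelings, then a union bound) gives, with probability at least $1-1/n$,
\[
\sup_{\bu,\bv}\abth{\hat h-K^{(0)}}\lsim\sqrt{\frac{d\log m+\log n}{m}}.
\]
The conditions $m\gsim n^{2/d}$ and $m/\log m\ge d$ make $\log n\lsim d\log m$, so this is $\lsim\sqrt{d\log m/m}$. This expression defines $C_1$.

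\textbf{Part (2).} Again the expectation is computed exactly: $\bw^\top\bu\sim\cN(0,\kappa^2)$ for unit $\bu$, so
\[
\Pr\bigl(|\bw^\top\bu|\le R\bigr)\le\frac{2R}{\sqrt{2\pi}\,\kappa}.
\]
The class $\set{\bw\mapsto\indict{|\bw^\top\bu|\le R}}$ is an intersection of two halfspaces (slabs), so it also has VC dimension $\cO(d)$. The same uniform deviation bound then gives $C_2\lsim\sqrt{d\log m/m}$ with probability at least $1-1/n$.

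Finally I plug in the scalings. With $R=\eta c_{\bu}T/\sqrt m$, $\eta\lsim1$ and $\kappa$ treated as a constant, the mean term is $\lsim T/\sqrt m$. I then convert both terms to the stated form using $m\gsim T^{5/3}$:
\begin{itemize}
\item $T/\sqrt m = T^{1/2}\cdot T^{1/2}m^{-1/2}\le T^{1/2}m^{3/10}m^{-1/2}=T^{1/2}m^{-1/5}$, since $T^{1/2}\lsim m^{3/10}$.
\item $\sqrt{d\log m/m}\lsim\sqrt d\,m^{-1/5}$, because $\log m\lsim m^{3/5}$, and then $\sqrt d\,m^{-1/5}\le\sqrt d\,m^{-1/5}T^{1/2}$ since $T\ge1$.
\end{itemize}
Summing the two terms gives the bound $\sqrt d\,m^{-1/5}T^{1/2}$.

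\textbf{Main obstacle.} The delicate step is getting the uniform bound with only a $\log m$ (not $\log(1/\text{accuracy})$) dependence, because the indicators are discontinuous and a naive net argument over $\cX$ fails. I handle this with the VC/growth-function argument rather than covering. A second point of care is the dependence between paired neurons, which is resolved by passing to the $m/2$ independent copies.
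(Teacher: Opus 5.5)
Your proof is correct. Its two reduction steps are the same as the paper's: you pass to the $m/2$ i.i.d. vectors $\set{\bbw_{2r'}(0)}_{r'=1}^{m/2}$, and you handle $\alpha=1$ via $\abth{K^{(1)}(\bu,\bv)-\bu^\top\bv\,\hat h(\bW(0),\bu,\bv)}\le\sup\abth{K^{(0)}-\hat h}$ using $\abth{\bu^\top\bv}\le 1$.

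You differ from the paper at the core uniform-convergence step. The paper does not prove this step in place; it imports Theorems~\ref{theorem:sup-hat-g} and~\ref{theorem:V_R} from earlier work. Judging by the form of (\ref{eq:C1})--(\ref{eq:C2}), those results come from a different argument: a net over the sphere (the $(1+2m)^{2d}$ and $(1+2\sqrt m)^{d}$ cardinalities) and Bernstein's inequality at the net points (the $7\log(\cdot)/(3m)$ terms). They also carry extra terms, namely $6(1+2B\sqrt d)/\sqrt m$ and $3\sqrt{d/\kappa}\,m^{-1/5}T^{1/2}$, which pay for the discontinuity of the indicators between net points. So a net argument does work, but only with these boundary corrections.

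Your growth-function argument, for products of homogeneous halfspaces and for slabs, removes the discretization issue entirely and is self-contained. It also gives a sharper part (2): $\sup_{\bu}\hat v_R(\bW(0),\bu)\le 2R/(\sqrt{2\pi}\kappa)+\cO(\sqrt{d\log m/m})$. There, $T$ enters only through the mean term, and you reach the stated $\sqrt d\,m^{-1/5}T^{1/2}$ form only by deliberately weakening with $m\gsim T^{5/3}$.

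What you give up is the paper's explicit formulas. You prove the inequalities with your own VC constants, not the specific $C_1(m/2,d,1/n)$ and $C_2(m/2,d,1/n)$ of (\ref{eq:C1})--(\ref{eq:C2}). This is harmless, because downstream these quantities are only used through their orders, for example in $\Delta_{m,n,\eta,T}$ and $\bE_{m,n,\eta,R}$.
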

\begin{proof}
This theorem follows from Theorem~\ref{theorem:sup-hat-g}
and Theorem~\ref{theorem:V_R} in Section~\ref{sec:proofs-key-results}.
We note that
\bals
\hat h(\bW,\bu,\bv) = \frac {1}{m} \sum\limits_{r=1}^m h(\bbw_r,\bu,\bv) = \frac {1}{m/2} \sum\limits_{r'=1}^{m/2} h(\bbw_{2r'}(0),\bu,\bv),
\eals
then the first bound in part (1) for $K^{(0)}$ directly follows from Theorem~\ref{theorem:sup-hat-g}.  Moreover,
since $K^{(1)}(\bu,\bv) = \bu^{\top} \bv K^{(0)}(\bu,\bv)$, we have
\bals
\abth{ K^{(1)}(\bu,\bv) -
\bu^{\top} \bv \cdot
\hat h(\bW(0),\bu,\bv) } \le\sup_{\bu \in \cX,\bv \in \cX} \abth{ K^{(0)}(\bu,\bv) -
\hat h(\bW(0),\bu,\bv) },
\eals
which leads to the second bound in part (1) for $K^{(1)}$.
Part (2) directly follows from Theorem~\ref{theorem:V_R}.
\end{proof}
We define
\bal\label{eq:set-of-good-random-initialization}
\cW_0 \defeq \set{\bW(0) \colon (\ref{eq:good-initialization-sup-hat-h}) ,(\ref{eq:good-initialization-sup-hat-V_R}) \textup { hold}}
\eal%
as the set of all the good random initializations which satisfy (\ref{eq:good-initialization-sup-hat-h})  and (\ref{eq:good-initialization-sup-hat-V_R}) in Theorem~\ref{theorem:good-random-initialization}.
Theorem~\ref{theorem:good-random-initialization} shows that we have good random initialization with high probability, that is, $\Prob{\bW(0)
\in \cW_0} \ge 1-2/n$. When $\bW(0) \in \cW_0 $, the uniform convergence results, (\ref{eq:good-initialization-sup-hat-h})
and (\ref{eq:good-initialization-sup-hat-V_R}), hold with high probability, which are important for the analysis of the training dynamics of the two-layer NN
(\ref{eq:two-layer-nn}) by GD.

\subsection{Proofs of the Main Results,
Theorem~\ref{theorem:LRC-population-NN-fixed-point} and
Theorem~\ref{theorem:degree-selection}}
\label{sec:proofs-main-results}

\begin{proof}
{\textbf{\textup{\hspace{-3pt}of
Theorem~\ref{theorem:LRC-population-NN-fixed-point}}}.}
We use Theorem~\ref{theorem:LRC-population-NN-eigenvalue} and
Theorem~\ref{theorem:empirical-loss-bound} to prove this
theorem.

First of all, it follows by
Theorem~\ref{theorem:empirical-loss-bound} that with probability at least
$1-2\delta-\exp\pth{-\Theta(r_0)} $ over $\bS$ and $\bw$,
\bals
\Expect{P_n}{(f_t-f^*)^2} \le \Theta\pth{\frac {\gamma_0^2}{\eta t}} +\gamma_0^2 \log{\frac{2}{\delta}} \cdot \Theta\pth{\frac{ d^{k_0}   }{n}}.
\eals
Plugging such bound for $\Expect{P_n}{(f_t-f^*)^2}$ in
(\ref{eq:LRC-population-NN-bound-eigenvalue})
of Theorem~\ref{theorem:LRC-population-NN-eigenvalue}
leads to
\bal\label{eq:LRC-population-NN-fixed-point-seg1}
&\Expect{P}{(f_t-f^*)^2} \lsim
\Theta\pth{\frac {\gamma_0^2}{\eta t}}+
\log{\frac{2}{\delta}}\cdot \frac{d^{k_0}}{n} + w.
\eal
Due to the definition of $T \asymp n/d^{k_0}$, we have
\bal\label{eq:LRC-population-NN-fixed-point-seg2}
\frac{1}{\eta t} \asymp \frac{1}{\eta T}
\asymp \frac{d^{k_0}}{n}.
\eal
We also have $\Prob{\cW_0} \ge 1-2/n$. Let $w = {d^{k_0}}/{n}$, then $w \in (0,1)$ with $n > d^{k_0}$. (\ref{eq:LRC-population-NN-fixed-point}) then follows from
(\ref{eq:LRC-population-NN-fixed-point-seg1}) with $w={d^{k_0}}/{n}$,
(\ref{eq:LRC-population-NN-fixed-point-seg2}) and the union bound.
We note that $c_{\bu}$ is bounded by  a positive constant, so
that the condition on $m$ in (\ref{eq:m-cond-bounded-NN-class})
in Theorem~\ref{theorem:bounded-NN-class}, together with $w ={d^{k_0}}/{n}$ and
(\ref{eq:LRC-population-NN-fixed-point-seg2}) leads to
the condition on $m$ in (\ref{eq:m-cond-LRC-population-NN-fixed-point}).
\end{proof}

\begin{proof}
{\textbf{\textup{\hspace{-3pt}of Theorem~\ref{theorem:degree-selection}}}.}
We first decompose the target function $f^*$ by
\bal\label{eq:degree-selection-target-decomp}
f^* = f^*_{\ell} + \overline{f}^*_{\ell},
\quad
f^*_{\ell} = \sum\limits_{\ell=0}^{\min\set{k_0,\ell}} \sum\limits_{j=1}^{N(d,\ell)} a_{\ell j} Y_{\ell j}(\bx),
\overline{f}^*_{\ell} = f^* - f^*_{\ell}.
\eal
That is, $f^*_{\ell}$ is the projection of $f^*$ onto the subspace
spanned by all spherical harmonics of degree up to $\ell \ge 0$, and
$\overline{f}^*_{\ell}$ is the residue.

If $\ell \ge k_0$, then $f^*_{\ell} = 0$ and $ \overline{f}^*_{\ell} = f^*$. In this case, it follows from Theorem~\ref{theorem:LRC-population-NN-fixed-point} and by repeating its proof that
\bal\label{eq:degree-selection-seg1}
\Expect{P_n}{(f_{T_{\ell}}-f^*)^2} \le\log{\frac{4}{\delta}} \cdot \Theta\pth{\frac{d^{\ell}}{n}}
\eal
holds with probability at least $1-\delta-\exp\pth{-\Theta(n)}-2\exp\pth{-\Theta(m_{\ell})} - 2/n$ with $\delta \in (0,1)$. It follows from (\ref{eq:degree-selection-seg1}) that when $n \ge \Theta\pth{\log({4}/{\delta}) \max\set{d^{\ell}/(\eps_0 \mu_{\ell+1}),d^{2\ell}}  }$,
\bal\label{eq:degree-selection-seg1-post}
\Expect{P_n}{(f_{T_{\ell}}-f^*)^2}  \le\beta_0^2 \mu_{\ell+1}/8.
\eal

We now consider the case that $1 \le \ell < k_0$. In this case, it follows from Theorem~\ref{theorem:LRC-population-NN-fixed-point} again that we have $\Expect{P}{\pth{f_{T_{\ell}}- f^*_{\ell} }^2} \le \log{\frac{4}{\delta}} \cdot \Theta\pth{\frac{d^{\ell}}{n}}$
with probability at least $1-\delta-\exp\pth{-\Theta(n)}-2\exp\pth{-\Theta(m_{\ell})} - 2/n$ .
As a result, we have
\bal\label{eq:degree-selection-risk-decomp}
&\Expect{P}{(f_{T_{\ell}}-f^*)^2} = \Expect{P}{\pth{f_{T_{\ell}}- f^*_{\ell} - \overline{f}^*_{\ell}}^2}
\nonumber \\
&= \Expect{P}{\pth{\overline{f}^*_{\ell}}^2} - 2\Expect{P}{\pth{f_{T_{\ell}}- f^*_{\ell}} \overline{f}^*_{\ell}} + \Expect{P}{\pth{f_{T_{\ell}}- f^*_{\ell} }^2} \nonumber \\
&\ge \Expect{P}{\pth{\overline{f}^*_{\ell}}^2}
-2 \sqrt{\Expect{P}{\pth{f_{T_{\ell}}- f^*_{\ell} }^2}} \sqrt{\Expect{P}{\pth{\overline{f}^*_{\ell}}^2} } + \Expect{P}{\pth{f_{T_{\ell}}- f^*_{\ell} }^2} \nonumber \\
&\stackrel{\circled{1}}{\ge} a_{\ell+1,1}^2 - 2 \sqrt{\log{\frac{4}{\delta}} \cdot \Theta\pth{\frac{d^{\ell}}{n}}}
\cdot \gamma_0
\nonumber \\
&\stackrel{\circled{2}}{\ge}
\beta_0^2 \mu_{\ell+1}- 2 \sqrt{\log{\frac{4}{\delta}} \cdot \Theta\pth{\frac{d^{\ell}}{n}}}
\cdot \gamma_0.
\eal
Here $\circled{1}$ follows from
$\Expect{P}{\big({\overline{f}^*_{\ell}}\big)^2} \le \sum_{\ell=0}^{k_0} \sum_{j=1}^{N(d,\ell)} a^2_{\ell,j}/\mu_{\ell} = \gamma_0^2$, and $\circled{2}$ follows from the fact that $\Expect{P}{\pth{\overline{f}^*_{\ell}}^2} \ge a_{\ell+1,1}^2 \ge \beta_0^2 \mu_{\ell+1}$.
It follows from (\ref{eq:degree-selection-risk-decomp}) that with
\bals
n \ge \Theta\pth{\log({4}/{\delta})\max\set{d^{\ell}/(\gamma_0^2\eps_0^2 \mu^2_{\ell+1}),d^{2\ell}}  },
\eals
we have
\bal\label{eq:degree-selection-seg2}
\Expect{P}{(f_{T_{\ell}}-f^*)^2}  \ge \beta_0^2 \mu_{\ell+1}/2.
\eal
Furthermore, it follows from the standard Hoeffding's inequality that
with probability at least $1-\delta$,
\bal\label{eq:degree-selection-seg3}
\abth{\Expect{P_n}{(f_{T_{\ell}}-f^*)^2} - \Expect{P}{(f_{T_{\ell}}-f^*)^2}}
\le \pth{\gamma_0 + \Theta(1)} \sqrt{\frac{\log(2/\delta)}{n}}.
\eal
It then follows from (\ref{eq:degree-selection-seg2}) and
(\ref{eq:degree-selection-seg3}) that with  probability at least $1-\delta$,
when $n \ge \Theta\big\{\log({4}/{\delta}) \newline \max\big\{d^{\ell}/(\gamma_0^2\eps_0^2 \mu^2_{\ell+1}),d^{2\ell} \big\}  \}$,
\bal\label{eq:degree-selection-seg4}
\Expect{P_n}{(f_{T_{\ell}}-f^*)^2} \ge \beta_0^2 \mu_{\ell+1}/4.
\eal
(\ref{eq:degree-selection}) then follows from (\ref{eq:degree-selection-seg1-post}) and  (\ref{eq:degree-selection-seg4}) and the fact that $\mu_{\ell} \asymp d^{-\ell}$ according to Theorem~\ref{theorem:eigenvalue-NTK}.
\end{proof}

\subsection{Detailed Roadmap and Key Results}
\label{sec:detailed-roadmap-key-results}

The summary of the  approaches and key technical results in the proofs are presented as follows.
Our main result,
Theorem~\ref{theorem:LRC-population-NN-fixed-point}, is built upon the following three significant technical results of independent interest.

First, using the novel GDP algorithm and the uniform convergence to the NTK (\ref{eq:kernel-two-layer}) during the training process by GDP,
we can have a nice decomposition of the neural network function at any step of GDP into a function in a $r_0$-dimensional subspace of the RKHS associated with the NTK (\ref{eq:kernel-two-layer}), which is $\cH_{K}(B_h) \cap \cH_{\bS,r_0}$, and an error function with a small $L^{\infty}$-norm. Formally, Theorem~\ref{theorem:bounded-NN-class} states that with high probability over $\bw$,
$\cFnn(\bS,\bW(0),T) \subseteq \cF(B_h,w,\bS,r_0)$.

\begin{theorem}\label{theorem:bounded-NN-class}
Suppose $n \ge \Theta(\log({2}/{\delta})\cdot d^{2k_0})$, $\delta \in (0,1/2)$, $w  \in (0,1)$, the network width $m$ satisfies
\bal\label{eq:m-cond-bounded-NN-class}
m \gsim
\max\set{T^{\frac {15}{2}} d^{\frac 52}/{w^5}, T^{\frac {25}{2}} d^{\frac 52}},
\eal
and the neural network
$f_t = f(\cW(t),\cdot) $ is trained by GDP using Algorithm~\ref{alg:GDP} with the constant learning rate $\eta = \Theta(1) \in (0,1)$ and the random initialization $\bW(0) \in \cW_0$. Then for every $t \in [T]$ and every $\delta \in (0,1/2)$, with probability at least
$1-2\delta-\exp\pth{-\Theta(n)}-\exp\pth{-\Theta(r_0)}$ over the random training features $\bS$ the random noise $\bw$,
$f_t\in \cFnn(\bS,\bW(0),T)$, and $f_t$ has the following decomposition on $\cX$: $f_t = h_t + e_t$,
where $h_t \in \cH_{K}(B_h) \cap \cH_{\bS,r_0}$ with $B_h$ defined in (\ref{eq:B_h}),
$e_t  \in L^{\infty}$ with $\supnorm{e_t } \le w$.
\end{theorem}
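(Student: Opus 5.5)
Our plan is to track the GDP dynamics in function space and split the network output into a kernel part and a small residual. The starting point is the standard lazy-training argument, adapted to the projection $\bPr$.

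\textbf{Step 1: residual dynamics.} We first establish, by induction on $t\in[T]$, that $\bu(t)=\bv(t)+\be(t)$ with $\bv(t)\in\cV_t$ and $\be(t)\in\cE_{t,\tau}$. We also show that every neuron stays in the ball $\ltwonorm{\bbw_r(t)-\bbw_r(0)}\le R$ (this is the content of Lemma~\ref{lemma:empirical-loss-convergence}). Under this bound, $f_t\in\cFnn(\bS,\bW(0),T)$. Writing one GDP step, we have
\[
\hat\by(t+1)-\hat\by(t)=-\eta\,\bH(t)\bPr\bu(t)+\text{(sign-flip error)},
\]
where $\bH(t)$ is the empirical kernel matrix at step $t$, including the augmented-feature block, which contributes $\hat h$ for $K^{(0)}$. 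Theorem~\ref{theorem:good-random-initialization} gives two bounds. Part (1) yields $\|\bH(0)-\bK_n\|\lsim\sqrt{d\log m/m}$. Part (2) bounds the fraction of neurons whose activation pattern can flip within radius $R$ by $\hat v_R\lsim\sqrt d\,m^{-1/5}T^{1/2}$. Together these give $\bu(t+1)=(\bI_n-\eta\bK_n\bPr)\bu(t)+\bxi(t)$, with $\ltwonorm{\bxi(t)}\le\sqrt n\cdot\mathrm{poly}(T,d)m^{-1/5}$. Summing over at most $T$ steps keeps the accumulated error within the $\sqrt n\tau$ budget once $m\gsim T^{25/2}d^{5/2}$. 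The noise term $-(\bI-\eta\bK_n\bPr)^t\bw$ is sub-Gaussian, and its norm is controlled with probability $1-\delta$; this is where the requirement $n\gtrsim\log(2/\delta)d^{2k_0}$ enters.

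\textbf{Step 2: the decomposition $f_t=h_t+e_t$.} Unrolling the updates, the first-order part of $f_t(\bx)$ is
\[
h_t(\bx)=-\frac{\eta}{n}\sum_{t'<t}\sum_{i}K(\bx,\bbx_i)\,[\bPr\bu(t')]_i.
\]
This is an exact kernel expansion with coefficient vector in $\Span(\bU^{(r_0)})$, because $\bPr$ projects onto the top $r_0$ eigenvectors of $\bK_n$; hence $h_t\in\cH_{\bS,r_0}$. The remainder $e_t$ collects two pieces. The first is $(\hat h(\bW(0),\bx,\cdot)-K(\bx,\cdot))$ applied to the same coefficients, bounded uniformly by Theorem~\ref{theorem:good-random-initialization}(1). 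The second is the activation-flip terms at the test point $\bx$, bounded uniformly in $\bx$ by $\sup_{\bx}\hat v_R(\bW(0),\bx)\cdot R\sqrt m$-type quantities via part (2). Combining these with $\ltwonorm{\bu(t')}\le c_{\bu}\sqrt n$, we get $\supnorm{e_t}\lsim T\,\mathrm{poly}(d)\,m^{-1/5}T^{1/2}$ and similar terms. Requiring this to be at most $w$ yields the condition $m\gsim T^{15/2}d^{5/2}/w^5$.

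\textbf{Step 3: the RKHS norm bound, the main obstacle.} It remains to show $\|h_t\|_{\cH}\le B_h=\gamma_0+\Theta(1)$ uniformly in $t\le T$; a naive bound would grow with $t$. Write $h_t=K(\cdot,\bS)\balpha_t/n$. Then $\|h_t\|_\cH^2=\balpha_t^\top\bK_n\balpha_t/n$, where
\[
\balpha_t=-\eta\sum_{t'<t}\bPr\bu(t').
\]
We handle the three components of $\bu(t')$ separately.

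For the signal part, $\bu(t')=-(\bI-\eta\bK_n\bPr)^{t'}f^*(\bS)$. Summing the geometric series gives $\balpha_t=\bPr\bK_n^{-1}\bigl(\bI-(\bI-\eta\bK_n)^t\bigr)f^*(\bS)$ restricted to the top $r_0$ eigenspace. Its quadratic form is at most $f^*(\bS)^\top\bPr\bK_n^{-1}f^*(\bS)/n$. We compare this with $\|f^*\|_{\cH}^2\le\gamma_0^2$ using the fact that the top-$r_0$ empirical eigenvalues satisfy $\hat\lambda_j\asymp\mu_\ell$. This eigenvalue concentration, obtained from the finite-rank structure of $K^{(r_0)}$, holds with probability $1-\exp(-\Theta(r_0))$ once $n\gtrsim d^{2k_0}$.

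For the noise part, the quadratic form is at most $\sum_{j\le r_0}\min(\eta t,1/\hat\lambda_j)(\bU^\top\bw)_j^2/n$. Since $\eta t\lsim n/d^{k_0}$, this is $O(\sigma_0^2 r_0\cdot T/n)=O(1)$ with high probability, by Hanson--Wright concentration on the $r_0$-dimensional projection.

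For the $\be_2$ part, the bound $\sqrt n\tau$ gives an $O(\tau^2\eta^2T^2\hat\lambda_1)$ contribution. This is $O(1)$ after choosing $\tau$ small, which is absorbed by the width condition.

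Together these give $B_h=\gamma_0+\Theta(1)$. The delicate points are the empirical eigenvalue comparison in the signal term and ensuring that the $\Theta(1)$ from noise does not scale with $T$. We expect this step to dominate the difficulty.
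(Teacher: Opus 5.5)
Your overall structure matches the paper's. Lemma~\ref{lemma:empirical-loss-convergence} gives the dynamics and the radius $R$. Linearization at $\bW(0)$ plus both parts of Theorem~\ref{theorem:good-random-initialization} gives $\supnorm{e_t}\lsim\sqrt d\,m^{-1/5}T^{3/2}$; you even account for the augmented-feature term, which the paper's write-up passes over. Finally, $\norm{h_t}{\cH_K}$ is split into signal, $\bbe_1$ and $\bbe_2$ parts, with $\tau\lsim 1/(\eta T)$ absorbed into the width condition.

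\textbf{The gap is in the signal term of Step 3.} You correctly reduce it to $\frac1n\sum_{i\le r_0}[\bU^\top f^*(\bS)]_i^2/\hlambda_i$, but then propose to compare this with $\gamma_0^2$ through eigenvalue concentration $\hlambda_j\asymp\mu_\ell$. Eigenvalue information alone cannot do this, because it says nothing about how the mass of $f^*(\bS)$ is spread over the retained eigenvectors. The only bound it gives is $\hlambda_{r_0}^{-1}\cdot\frac1n\ltwonorm{f^*(\bS)}^2$. This is of order $d^{k_0}\norm{f^*}{L^2(\mu)}^2$, which exceeds $\gamma_0^2$ by a factor up to $d^{k_0}$ (e.g.\ when $f^*$ is dominated by its degree-$0$ part). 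Your route would need a block-by-block eigenvector alignment argument (Davis--Kahan type), with leakage between blocks controlled at the level of the ratios $\mu_\ell/\mu_{\ell'}$, and you have not supplied it. Also, the Gram matrix is that of the full kernel $K$, so the probability $1-\exp(-\Theta(r_0))$ you attribute to ``the finite-rank structure of $K^{(r_0)}$'' has no evident source.

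The missing observation is that this step is deterministic. For every $f\in\cH_K$,
\[
\tfrac1n f(\bS)^\top\bK_n^{-1}f(\bS)=f(\bS)^\top\bK^{-1}f(\bS)=\norm{\Proj_{\cH_{\bS}}f}{\cH_K}^2\le\norm{f}{\cH_K}^2 .
\]
This holds because $\Proj_{\cH_{\bS}}f$ is the unique element of $\cH_{\bS}$ interpolating $f$ on $\bS$. Restricting the sum to $i\le r_0$ only decreases it. This is Lemma~\ref{lemma:bounded-Ut-f-in-RKHS}, and it gives the bound $\gamma_0^2$ with no randomness at all. So the step you flag as the main obstacle is actually the easy one.

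\textbf{A smaller point: where $n\gsim\log(2/\delta)d^{2k_0}$ is actually used.} It is not needed for the noise norm in Step 1. There, $\ltwonorm{(\bI_n-\eta\bK_n\bPr)^t\bw}\le\ltwonorm{\bw}\le(\sigma_0+1)\sqrt n$ holds with probability $1-\exp(-\Theta(n))$ and no condition on $n$. In the paper the condition enters in the $\bbe_1$ part of $\norm{h_t}{\cH_K}$, through $\min(\eta t,1/\hlambda_i)\le 1/\hlambda_{r_0}\lsim d^{k_0}$ from Lemma~\ref{lemma:gap-spectrum-Tk-Tn}. Combined with $\ltwonorm{{\bUr}^\top\bw}^2\lsim r_0$, this gives $O(d^{2k_0}/n)=O(1)$.

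Your alternative bound $\min(\eta t,1/\hlambda_i)\le\eta T$ is valid. Combined with the deterministic signal bound, it would avoid spectral concentration entirely when bounding $\norm{h_t}{\cH_K}$. However, it uses $T\lsim n/d^{k_0}$, which is the choice made in Theorem~\ref{theorem:LRC-population-NN-fixed-point}. It is not a hypothesis of the present statement, where $T$ is free and tied only to $m$. For the statement as posed, use the eigenvalue lower bound instead.
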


In particular, with the uniform convergence by
Theorem~\ref{theorem:good-random-initialization} and the optimization results in Lemma~\ref{lemma:empirical-loss-convergence} and
Lemma~\ref{lemma:bounded-Linfty-vt-sum-et} in the appendix,
Theorem~\ref{theorem:bounded-NN-class} shows that with high probability, the neural network function $f(\cW(t),\cdot)$ right after the $t$-th step of GDP can be decomposed into
two functions by $f(\cW(t),\cdot) = f_t = h  + e$, where $h \in \cH_{K}(B_h) \cap \cH_{\bS,r_0}$ is a function in a subspace of finite dimension $r_0$ of the RKHS associated with $K$ with a bounded $\cH_{K}$-norm.
The error function $e$ has a small $L^{\infty}$-norm, that is, $\supnorm{e} \le w$ with $w$ being a small number controlled by the network width
$m$, and larger $m$ leads to smaller $w$.

Second, local Rademacher complexity is employed
to tightly bound the risk of nonparametric regression in
Theorem~\ref{theorem:LRC-population-NN-eigenvalue} below, which is based on
 the Rademacher complexity of a localized subset of the function class $\cF(B_h,w,\bS,r_0)$ in
Lemma~\ref{lemma:pupulartion-RC-low-rank-proj} deferred the appendix. We use Theorem~\ref{theorem:bounded-NN-class}, Lemma~\ref{lemma:pupulartion-RC-low-rank-proj}, and Lemma~\ref{lemma:LRC-population-NN} deferred to the appendix to prove Theorem~\ref{theorem:LRC-population-NN-eigenvalue}.

\begin{theorem}\label{theorem:LRC-population-NN-eigenvalue}
Suppose $n \ge \Theta(\log({2}/{\delta})\cdot d^{2k_0})$, $\delta \in (0,1/2)$, $w  \in (0,1)$,
$m$ satisfies (\ref{eq:m-cond-bounded-NN-class}),
and the neural network
$f_t = f(\cW(t),\cdot)$ is
trained by GDP using Algorithm~\ref{alg:GDP} with the constant learning rate
$\eta = \Theta(1) \in (0,1)$ on the random initialization $\bW(0) \in \cW_0$.
Then for every $t \in [T]$ and every $\delta \in (0,1/2)$, with probability at least
$1-2\delta-\exp\pth{-\Theta(n)}-2\exp\pth{-\Theta(r_0)} $
over the random noise $\bw$, the random training features $\bS$ and
the random initialization $\bW(0)$,
\bal\label{eq:LRC-population-NN-bound-eigenvalue}
&\Expect{P}{(f_t-f^*)^2} - 2 \Expect{P_n}{(f_t-f^*)^2}
\nonumber \\
&\lsim
\sqrt{\log{\frac{2}{\delta}} } \cdot \frac{d^{k_0}}{n} + w.
\eal

\end{theorem}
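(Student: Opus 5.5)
The plan is to combine Theorem~\ref{theorem:bounded-NN-class} with a local Rademacher complexity argument for the class $\cF(B_h,w,\bS,r_0)$. First, on the event of Theorem~\ref{theorem:bounded-NN-class} (probability at least $1-2\delta-\exp(-\Theta(n))-\exp(-\Theta(r_0))$ over $\bS,\bw$, given $\bW(0)\in\cW_0$), I write $f_t=h_t+e_t$ with $h_t\in\cH_K(B_h)\cap\cH_{\bS,r_0}$ and $\supnorm{e_t}\le w$. Then
\[
(f_t-f^*)^2\le 2(h_t-f^*)^2+2e_t^2,\qquad (h_t-f^*)^2\le 2(f_t-f^*)^2+2e_t^2,
\]
or better the exact expansion $(f_t-f^*)^2=(h_t-f^*)^2+2e_t(h_t-f^*)+e_t^2$. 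Since $\supnorm{h_t-f^*}\lsim B_h+\gamma_0=\Theta(1)$ (because $\sup_\bx K(\bx,\bx)\le 1$), the cross term and $e_t^2$ are $\cO(w)$ under both $P$ and $P_n$. This reduces the claim to bounding $\Expect{P}{(h_t-f^*)^2}-2\Expect{P_n}{(h_t-f^*)^2}$, up to an additive $\Theta(w)$.

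The function class to control is $\cG\defeq\set{(h-f^*)^2\colon h\in\cH_K(B_h)\cap\cH_{\bS,r_0}}$, which is uniformly bounded by a constant $b$. The standard local Rademacher theorem (Bartlett--Bousquet--Mendelson, Theorem 3.3 form, as packaged in Lemma~\ref{lemma:LRC-population-NN}) gives, with probability $1-\delta$, for every $g\in\cG$,
\[
\Expect{P}{g}\le 2\Expect{P_n}{g}+ c\,r^*+c\,\frac{b\log(1/\delta)}{n},
\]
where $r^*$ is the fixed point of a sub-root upper bound $\psi(r)\ge b\,\mathfrak{R}\set{g\in\cG\colon \Expect{P}{g^2}\le r}$. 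By the contraction inequality (the squared loss is $\Theta(1)$-Lipschitz on the bounded range), it suffices to bound the local Rademacher complexity of $\set{h-f^*\colon \Expect{P}{(h-f^*)^2}\le r}$. For this I invoke Lemma~\ref{lemma:pupulartion-RC-low-rank-proj}. The point is that $h-f^*$ lives in a space of dimension at most $2r_0$: $h$ lies in the $r_0$-dimensional $\cH_{\bS,r_0}$, and $f^*$ is fixed in the $r_0$-dimensional span of harmonics of degree at most $k_0$. A finite-dimensional linear class localized by $L^2$ radius $\sqrt r$ has Rademacher complexity $\lsim\sqrt{r\cdot r_0/n}$. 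Hence the fixed point is $r^*\asymp r_0/n\asymp d^{k_0}/n$ by Lemma~\ref{lemma:r0-estimate}, and $\log(2/\delta)/n\le \sqrt{\log(2/\delta)}\,d^{k_0}/n$ gives the stated rate.

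The main obstacle is that $\cH_{\bS,r_0}$ depends on the sample $\bS$, so the class is data-dependent and the usual i.i.d. local Rademacher argument does not apply directly. My first attempt would be to show that $\cH_{\bS,r_0}$ is close to the population top-$r_0$ eigenspace $\cup_{\ell\le k_0}\cH_\ell$. This uses matrix concentration of $\bK_n$ to $T_K$ together with a Davis--Kahan perturbation bound, exploiting the spectral gap $\mu_{k_0}-\mu_{k_0+1}\asymp d^{-k_0}$; this is where $n\gsim\log(2/\delta)\,d^{2k_0}$ and the extra $\exp(-\Theta(r_0))$ failure probability enter. After that, I would bound $h$ by its component in the fixed population subspace plus a residual whose $L^\infty$ size is absorbed into the $\Theta(d^{k_0}/n)$ term. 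Alternatively, following what the $\sqrt{\log(2/\delta)}$ factor suggests, I would bound the population Rademacher complexity via the $\bS$-dependent projection in Lemma~\ref{lemma:pupulartion-RC-low-rank-proj}, conditioning appropriately. Finally, a union bound over this event, Theorem~\ref{theorem:bounded-NN-class}, and the concentration step yields the stated probability.
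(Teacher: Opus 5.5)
Your skeleton matches the paper's proof. On the event of Theorem~\ref{theorem:bounded-NN-class}, apply Theorem~\ref{theorem:LRC-population} with $K_0=2$ to the squared-loss class, pass by contraction to the localized class of $f-f^*$, bound its complexity with Lemma~\ref{lemma:pupulartion-RC-low-rank-proj}, and solve for the fixed point. You strip $e_t$ off before the localization argument, whereas the paper keeps it inside $\cF(B_h,w,\bS,r_0)$. This is a harmless variant: both routes pay $\cO(w)$.

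The step that fails is your \emph{first attempt} at the data-dependence of $\cH_{\bS,r_0}$. The perturbation part is right; it is what (\ref{eq:training-func-low-rank-residue}) supplies through the Hilbert--Schmidt bound of Lemma~\ref{lemma:small-correlation-eigenvector-eigenfunction}. But the gap $\mu_{k_0}-\mu_{k_0+1}$ is only $\asymp d^{-k_0}$, so it controls the residual $\bar h=h-\Proj_{\cH_{r_0}}(h)$ only through $\norm{\bar h}{\cH_K}^2\le\zeta_{n,B_h,r_0,\delta}\asymp\log(2/\delta)\,d^{2k_0}/n$. The resulting $L^\infty$ control, $\supnorm{\bar h}\le\norm{\bar h}{\cH_K}\asymp\sqrt{\log(2/\delta)}\,d^{k_0}/\sqrt n$, exceeds $d^{k_0}/n$ by a factor $\sqrt n$. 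Treating $\bar h$ like $e_t$ therefore leaves a term of order $\sqrt{\log(2/\delta)}\,d^{k_0}/\sqrt n$ and misses the stated rate.

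The paper instead keeps $\bar h$ inside the Rademacher complexity. An RKHS ball of radius $B_{\bar h}$ has complexity at most $B_{\bar h}/\sqrt n\asymp\sqrt{\log(2/\delta)}\,d^{k_0}/n$. The small variance $\Expect{P}{\bar h^2}\le\lambda_{r_0}\zeta_{n,B_h,r_0,\delta}$ is then used to transfer the localization from $h-f^*$ to $\Proj_{\cH_{r_0}}(h)-f^*$, whose class has complexity $\lsim\sqrt{(r+w^2+r_{n,k_0,\delta})\,r_0/n}$. So only your \emph{alternative} route, invoking Lemma~\ref{lemma:pupulartion-RC-low-rank-proj}, delivers the claimed bound.

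You can make \emph{conditioning appropriately} precise; the paper itself applies Theorem~\ref{theorem:LRC-population} to the $\bS$-dependent class without comment. On the event of (\ref{eq:training-func-low-rank-residue}), $\cF(B_h,w,\bS,r_0)$ lies in a fixed class of functions $g+\bar g+e$ with:
\begin{itemize}
\item $g\in\Span\set{v_j}_{j<r_0}$ and $\norm{g}{\cH_K}\le B_h$;
\item $\bar g\perp\Span\set{v_j}_{j<r_0}$ and $\norm{\bar g}{\cH_K}^2\le\zeta_{n,B_h,r_0,\delta}$;
\item $\supnorm{e}\le w$.
\end{itemize}
Apply Theorem~\ref{theorem:LRC-population} to that fixed class.

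Two smaller points. First, localize with $T(g)=b\,\Expect{P}{g}$, as the paper does with $T(F)=B_0^2\Expect{P}{F}$. The set $\set{g\colon\Expect{P}{g^2}\le r}$ you wrote only bounds the $L^2(P)$ radius of $h-f^*$ by $r^{1/4}$, which gives a worse fixed point. Second, the extra $\exp(-\Theta(r_0))$ in the probability comes from the paper's choice $x=d^{k_0}$ in Theorem~\ref{theorem:LRC-population}, not from the perturbation step, which costs a $\delta$.
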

Third, we have the following sharp upper bound for the training loss $\Expect{P_n}{(f_t-f^*)^2}$.
\begin{theorem}\label{theorem:empirical-loss-bound}
 Suppose the neural network trained after the $t$-th step of GDP, $f_t = f(\cW(t),\cdot)$, satisfies $\bu(t) = f_t(\bS) - \by = \bv(t) + \be(t)$
with $\bv(t) \in \cV_t$, $\be(t) \in \cE_{t,\tau}$. Let $n \ge \Theta(\log({2}/{\delta})\cdot d^{2k_0})$ and $\delta \in (0,1/2)$. If
$\eta \in (0,1), \quad \tau \le \sqrt{\frac{ d^{k_0} }{n}}$,
then for every $t \in [T]$, with probability at least
$1-2\delta-\exp\pth{-\Theta(r_0)} $ over the random training features $\bS$ and the random noise $\bw$, we have
$\Expect{P_n}{(f_t-f^*)^2} \le \Theta\pth{\frac {\gamma_0^2}{\eta t}} +\gamma_0^2 \log{\frac{2}{\delta}} \cdot \Theta\pth{\frac{ d^{k_0}   }{n}}$.
\end{theorem}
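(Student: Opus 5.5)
We bound the empirical loss by writing $f_t(\bS) - f^*(\bS) = \bu(t) + \bw = \bv(t) + \be(t) + \bw$. With $\bv(t) = -(\bI_n-\eta\bK_n\bPr)^t f^*(\bS)$ and $\be(t) = -(\bI_n-\eta\bK_n\bPr)^t\bw + \bbe_2$, this becomes
\begin{align*}
f_t(\bS)-f^*(\bS) = -\pth{\bI_n-\eta\bK_n\bPr}^t f^*(\bS) + \bth{\bI_n-\pth{\bI_n-\eta\bK_n\bPr}^t}\bw + \bbe_2 .
\end{align*}
Since $\Expect{P_n}{(f_t-f^*)^2} = \ltwonorm{f_t(\bS)-f^*(\bS)}^2/n$, it suffices to bound the three terms separately and combine them with $(a+b+c)^2\le 3(a^2+b^2+c^2)$. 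The third term contributes at most $\tau^2 \le d^{k_0}/n$, which is already of the required order.

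\textbf{Noise term.} Because $\bK_n = \bU\bSigma\bU^\top$ and $\bPr = \bU\bSigma^{(r_0)}\bU^\top$ commute, $\bI_n-(\bI_n-\eta\bK_n\bPr)^t = \bU \bD \bU^\top$, where $\bD$ is diagonal with entries $1-(1-\eta\hlambda_i)^t$ for $i\le r_0$ and $0$ for $i>r_0$. This is a projection-like matrix of rank $r_0$ with entries in $[0,1]$. Hence $\ltwonorm{\bU\bD\bU^\top\bw}^2 \le \ltwonorm{(\bU^{(r_0)})^\top\bw}^2$. Since $\bU$ depends only on $\bS$ and is independent of $\bw$, the Hanson--Wright inequality for sub-Gaussian vectors gives, with probability $1-\delta$ (or $1-\exp(-\Theta(r_0))$), a bound $\lsim \sigma_0^2(r_0 + \log(2/\delta))$. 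Dividing by $n$ and using $r_0 = \Theta(d^{k_0})$ gives $\log(2/\delta)\cdot\Theta(d^{k_0}/n)$.

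\textbf{Bias term.} This is the main step. Write $f^*(\bS)$ in the eigenbasis $\bU$. In the top-$r_0$ directions the contraction factor is $(1-\eta\hlambda_i)^t$. The standard estimate $\sup_{\lambda}\lambda(1-\eta\lambda)^{2t}\lsim 1/(\eta t)$ then yields a contribution $\lsim \frac{1}{\eta t}\,\frac1n f^*(\bS)^\top \bK_n^{-1}$ restricted to that block. We bound $\frac1n f^*(\bS)^\top\bK_n^{-1}f^*(\bS) \le \norm{f^*}{\cH}^2\le\gamma_0^2$ by the standard RKHS interpolation inequality, since the minimum-norm interpolant has norm at most $\norm{f^*}{\cH}$. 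This gives $\Theta(\gamma_0^2/(\eta t))$.

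The obstacle is the bottom block: directions $i>r_0$ are not contracted at all, so we must show that $\ltwonorm{(\bU^{(-r_0)})^\top f^*(\bS)}^2/n \lsim \gamma_0^2\log(2/\delta)\, d^{k_0}/n$. Our plan is a concentration argument comparing the empirical kernel with $K^{(r_0)}$. Write $f^*(\bS) = \bY\ba$, where $\bY\in\RR^{n\times r_0}$ has entries $Y_{\ell j}(\bbx_i)$. For $n\ge\Theta(\log(2/\delta)d^{2k_0})$, matrix Bernstein gives $\bY^\top\bY/n \approx \bI_{r_0}$ in operator norm up to $\sqrt{d^{2k_0}\log(2/\delta)/n}$ (using $\supnorm{\sum_{j} Y_{\ell j}^2}=N(d,\ell)$). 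The empirical matrix then satisfies $\bK_n = \frac1n\bY\bLambda\bY^\top + \bK_n^{\textup{tail}}$, where $\bLambda$ is the diagonal matrix of the eigenvalues $\mu_\ell$, $\ell\le k_0$.

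The tail part has operator norm $\lsim \mu_{k_0+1}+$ fluctuation, by concentration of the trace-class tail. Its size is $\lsim d^{-k_0-1}$, which is smaller than $\mu_{k_0}\asymp d^{-k_0}$ times the spectral gap. By Davis--Kahan, the top-$r_0$ eigenspace $\Span(\bU^{(r_0)})$ is close to $\Span(\bY)$, with a $\sin\Theta$ angle of order $\sqrt{d^{k_0}\log(2/\delta)/n}$ scaled as needed. Hence $\ltwonorm{(\bU^{(-r_0)})^\top\bY\ba}^2/n \le \sin^2\Theta\cdot\ltwonorm{\bY\ba}^2/n \lsim \gamma_0^2\log(2/\delta)\, d^{k_0}/n$.

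Making this perturbation bound sharp, with the right gap and the $n\ge d^{2k_0}$ requirement, is where most of the work lies. Finally, a union bound over the noise and feature events yields the stated probability.
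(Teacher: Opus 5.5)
\textbf{Gap in the residual step.} Several parts of your plan coincide with the paper's proof: the split into bias, noise and $\bbe_2$ terms, the Hanson--Wright bound on $\ltwonorm{{\bUr}^{\top}\bw}^2$, and the top-block bias bound via $\frac1n\sum_i[\bU^{\top}f^*(\bS)]_i^2/\hlambda_i\le\gamma_0^2$ with $(1-\eta\hlambda_i)^{2t}\le 1/(2e\eta t\hlambda_i)$. The gap is in the only nontrivial step, the uncontracted residual $\frac1n\ltwonorm{{\bUminusr}^{\top}f^*(\bS)}^2$.

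Your reference matrix $\frac1n\mathbf{Y}\Lambda\mathbf{Y}^{\top}$ discards the whole tail of $K$. The perturbation is therefore $\frac1n\bK^{\textup{tail}}$, whose operator norm is of order $\mu_{k_0+1}\asymp d^{-k_0-1}$, as you note, and does not shrink with $n$. Davis--Kahan with this perturbation and gap $\asymp\mu_{k_0}$ can only certify $\sin\Theta\lsim 1/d$. That gives $\frac1n\ltwonorm{{\bUminusr}^{\top}f^*(\bS)}^2\lsim\gamma_0^2/d^2$, which does not decay in $n$. It is incompatible with the target $\gamma_0^2\log(2/\delta)\,d^{k_0}/n$ already at $n\asymp d^{2k_0}$ when $k_0\ge 3$, and for every $k_0$ once $n\gg d^{k_0+2}$. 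So the angle $\sqrt{d^{k_0}\log(2/\delta)/n}$ you assert does not follow from the ingredients you list.

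A repair inside your framework would need the residual form of Davis--Kahan, in which only the off-diagonal block $\mathbf{Q}_{\perp}\bK_n\mathbf{V}$ enters ($\mathbf{V}$ an orthonormal basis of $\Span(\mathbf{Y})$). That block is built from empirical cross-moments between high- and low-degree harmonics, which have mean zero and so are $O(n^{-1/2})$. Even then, a trace-level bound on it gives roughly $\sin^2\Theta\lsim d^{2k_0-1}/n$. After multiplying by $\ltwonorm{f^*(\bS)}^2/n\lsim\gamma_0^2$, this is still too large for $k_0\ge 2$. You would additionally need either a sharp operator-norm random-matrix bound on that block, or a weighting that exploits $\sum a_{\ell j}^2/\mu_{\ell}\le\gamma_0^2$.

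\textbf{How the paper handles it.} The paper chooses a reference object that contains the tail. It compares $T_n$ and $T_K$ as operators on $\cH_K$, where $\norm{T_K-T_n}{\textup{HS}}\le 2\sqrt{2}\sqrt{\log(2/\delta)/n}$ (Rosasco et al.). The spectral-projection perturbation bound with gap $\mu_{k_0}-\mu_{k_0+1}\asymp d^{-k_0}$ then gives $\sum_{q\ge r_0}\iprod{f^*}{\Phi^{(q)}}_{\cH_K}^2\lsim\gamma_0^2 d^{2k_0}\log(2/\delta)/n$. This is exactly where $n\gsim\log(2/\delta)\,d^{2k_0}$ is used. It then converts from the RKHS norm to the empirical norm through the exact identity $\frac1n\ltwonorm{{\bUminusr}^{\top}f^*(\bS)}^2=\iprod{T_n\bar f^{*,r_0}}{\bar f^{*,r_0}}_{\cH_K}\le\hlambda_{r_0+1}\sum_{q\ge r_0}\iprod{f^*}{\Phi^{(q)}}_{\cH_K}^2$. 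With $\hlambda_{r_0+1}\lsim d^{-k_0}$ this yields $\gamma_0^2\log(2/\delta)\,\Theta(d^{k_0}/n)$. Your plan is missing this combination: a reference operator that includes the tail, plus the weighted conversion from RKHS norm to empirical norm through $\hlambda_{r_0+1}$.
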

We then obtain Theorem~\ref{theorem:LRC-population-NN-fixed-point} using the upper bound for the regression risk in (\ref{eq:LRC-population-NN-bound-eigenvalue}) of Theorem~\ref{theorem:LRC-population-NN-eigenvalue} where $w$ is set to ${d^{k_0}}/{n}$, with the empirical loss $\Expect{P_n}{(f_t-f^*)^2}$ bounded by
$\Theta( \log ({2}/{\delta}) \cdot {d^{k_0}}/{n})$ with high probability by Theorem~\ref{theorem:empirical-loss-bound}.

\subsection{Novel Proof Strategy}
\label{sec:novel-proof-strategy}

We remark that the proof strategy of our main result,
Theorem~\ref{theorem:LRC-population-NN-fixed-point}, summarized above is significantly different from the existing works in training over-parameterized neural networks for nonparametric regression with minimax optimal rates
\citep{HuWLC21-regularization-minimax-uniform-spherical,
SuhKH22-overparameterized-gd-minimax,Li2024-edr-general-domain} and existing works about learning low-degree polynomials
\citep{Ghorbani2021-linearized-two-layer-nn,BaiL20-quadratic-NTK,Nichani0L22-escape-ntk,DamianLS22-nn-representation-learning}.

First, GDP is carefully incorporated into the analysis of the uniform convergence results for NTK, leading to the crucial decomposition of the neural network function $f_t$ in Theorem~\ref{theorem:bounded-NN-class}. It is remarked that while existing works such as \citet{Li2024-edr-general-domain} also has uniform convergence results for over-parameterized neural network, our results about the uniform convergence (in Section~\ref{sec:uniform-convergence-ntk-more} of the appendix) do not depend on the H\"older continuity of the NTK.

Second, to the best of our knowledge, Theorem~\ref{theorem:LRC-population-NN-eigenvalue} is the first result about the sharp upper bound
of the order $\Theta(\log({2}/{\delta}) \cdot {d^{k_0}}/{n})$ (with $w={d^{k_0}}/{n}$) for the regression risk of the neural network function which has the decomposition in Theorem~\ref{theorem:bounded-NN-class}. We note that the RHS of this upper bound (\ref{eq:LRC-population-NN-bound-eigenvalue}) is nearly
$\Theta({d^{k_0}}/{n})$, which has the expected and the desired order since the target function is in a $r_0$-dimensional subspace of the RKHS $\cH_K(\gamma_0)$ with $r_0 = \Theta(d^{k_0})$.

Third, a novel method based on the operator theory in RKHS has been developed to derive the sharp upper bound for the training loss in Theorem~\ref{theorem:empirical-loss-bound}. As shown in Theorem~\ref{theorem:bounded-NN-class}, the network function $f_t$ at every step $t$ of GDP is approximately a function in the $r_0$-dimensional subspace, $\cH_{K}(B_h) \cap \cH_{\bS,r_0}$. We emphasize that while it is intuitive to only learn the $r_0$-dimensional subspace by projection, $\cH_{K}(B_h) \cap \cH_{\bS,r_0}$, since the target function lies in that subspace, it has been an open problem in the research community how to handle the incurred training loss by such projection. In particular, as pointed out by the existing work \citep{Nichani0L22-escape-ntk}, learning in such a subspace leads to better alignment with the target function $f^*$, however, such alignment incurs additional training loss because the network function $f_t$ only learns the information in such a subspace of dimension $r_0 < n$, and the information in the ground truth signal $f^*(\bS)$ not in the $r_0$-dimensional subspace is not learned by $f_t$. We manage to show that the information of $f^*(\bS)$ not in the $r_0$-dimensional subspace, which is $\Proj_{\bUminusr}(f^*(\bS))$ where $\Proj_{\bUminusr} = \Proj_{\Span({\bUr})^{\perp}}$, is sharply bounded in Lemma~\ref{lemma:residue-f-star-low-rank} of the appendix:
$\ltwonorm{\Proj_{\bUminusr}(f^*(\bS))}^2 \le n\gamma_0^2 \log{\frac{2}{\delta}} \cdot \Theta\pth{\frac{ d^{k_0} }{n}}$. The proof of  Lemma~\ref{lemma:residue-f-star-low-rank} relies on a novel result in operator theory developed in this work which is of independent interest in functional analysis. Let $\set{{\Phi}^{(k)}}_{k \ge 0}$ be an orthonormal basis of the RKHS $\cH_K$ as an extension of the orthonormal basis of the RKHS $\cH_{\bS} \subseteq \cH_K$, $\set{{\Phi}^{(k)}}_{k \in [0:n-1]}$. 
Using the bounded Hilbert-Schmidt norm of $P^{T_K}_{m_{k_0}}-P^{T_n}_{m_{k_0}}$, where the two operators are defined as $P^{T_K}_{m_{k_0}} h = \sum_{j = 0}^{m_{k_0}-1} \iprod{h}{v_j}_{\cH} v_j,  P^{T_n}_{m_{k_0}} h = \sum_{j = 0}^{m_{k_0}-1} \iprod{h}{\Phi^{(j)}}_{\cH} \Phi^{(j)}$ for all $h \in \cH_K$, we can prove the following theorem showing the bounded projection of $f^*$ onto the eigenfunctions $\set{{\Phi}^{(q)}}_{q \ge r_0}$:
\begin{theorem}
\label{theroem:low-dim-target-function-satisfy-assump-informal}
With probability at least $1-\delta$,
$\sum\limits_{q=r_0}^{\infty} \iprod{f^*}{{\Phi}^{(q)}}_{\cH_K}^2
\le \zeta_{n,\gamma_0,r_0,\delta} \defeq \frac{32\gamma_0^2 \log{\frac{2}{\delta}} }{\pth{\mu_{k_0} - \mu_{k_0+1}}^2 n}$.
\end{theorem}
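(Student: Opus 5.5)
We will argue at the level of operators on $\cH_K$. Define the empirical operator $T_n h \defeq \frac1n\sum_{i=1}^n h(\bbx_i) K(\cdot,\bbx_i)$ on $\cH_K$. It is self-adjoint, positive and of rank $n$. Its nonzero eigenvalues are exactly $\set{\hlambda_i}_{i=1}^n$, and its eigenfunctions are $\Phi^{(i-1)} = (n\hlambda_i)^{-1/2}\sum_{j} \bU_{ji} K(\cdot,\bbx_j)$. These form an orthonormal basis of $\cH_{\bS}$, which we extend to an orthonormal basis $\set{\Phi^{(k)}}_{k\ge0}$ of $\cH_K$. Likewise $T_K$, viewed as an operator on $\cH_K$, has eigenpairs $(\lambda_j, v_j)$. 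The spectral projections onto the top $r_0=m_{k_0}$ eigenfunctions are $P^{T_K}_{m_{k_0}}$ and $P^{T_n}_{m_{k_0}}$.

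The key observation is that $f^*$ lies in the range of $P^{T_K}_{m_{k_0}}$, so $f^* = P^{T_K}_{m_{k_0}} f^*$. Hence
\bals
\sum_{q\ge r_0}\iprod{f^*}{\Phi^{(q)}}_{\cH_K}^2 = \norm{(I-P^{T_n}_{m_{k_0}})f^*}{\cH}^2 = \norm{(P^{T_K}_{m_{k_0}}-P^{T_n}_{m_{k_0}})f^*}{\cH}^2 \le \norm{P^{T_K}_{m_{k_0}}-P^{T_n}_{m_{k_0}}}{HS}^2\,\norm{f^*}{\cH}^2 .
\eals
Since $\norm{f^*}{\cH}\le\gamma_0$, it suffices to show that with probability at least $1-\delta$,
\bals
\norm{P^{T_K}_{m_{k_0}}-P^{T_n}_{m_{k_0}}}{HS}^2 \le \frac{32\log(2/\delta)}{(\mu_{k_0}-\mu_{k_0+1})^2 n}.
\eals

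Next we bound $\norm{T_n-T_K}{HS}$. Write $T_n = \frac1n\sum_i \xi_i$ with $\xi_i \defeq K(\cdot,\bbx_i)\otimes K(\cdot,\bbx_i)$. The $\xi_i$ are i.i.d. Hilbert–Schmidt operators with $\E\xi_i = T_K$ and $\norm{\xi_i}{HS} = K(\bbx_i,\bbx_i)\le 1$, using $\sup_{\bx}K(\bx,\bx)=1$. A Hoeffding/Pinelis-type concentration inequality for bounded random vectors in the separable Hilbert space of HS operators then gives, with probability at least $1-\delta$,
\bals
\norm{T_n-T_K}{HS}\le 2\sqrt{2\log(2/\delta)/n}.
\eals

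We then apply a Davis–Kahan / Zwald–Blanchard type perturbation bound for spectral projections. The spectral gap of $T_K$ separating the first $m_{k_0}$ eigenvalues from the rest is exactly $\mu_{k_0}-\mu_{k_0+1}$, because $m_{k_0}$ ends a full eigenvalue cluster. If $\norm{T_n-T_K}{HS}\le(\mu_{k_0}-\mu_{k_0+1})/2$, the result of Zwald and Blanchard (2006) gives
\bals
\norm{P^{T_K}_{m_{k_0}}-P^{T_n}_{m_{k_0}}}{HS}\le \frac{2\norm{T_n-T_K}{HS}}{\mu_{k_0}-\mu_{k_0+1}}.
\eals
Squaring and combining with the concentration bound yields $\norm{P^{T_K}_{m_{k_0}}-P^{T_n}_{m_{k_0}}}{HS}^2 \le 32\log(2/\delta)/((\mu_{k_0}-\mu_{k_0+1})^2n)$, and multiplying by $\gamma_0^2$ gives the claimed constant.

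The smallness condition holds because $\mu_{k_0}-\mu_{k_0+1}\asymp d^{-k_0}$ by Theorem~\ref{theorem:eigenvalue-NTK} and $n\gsim\log(2/\delta)d^{2k_0}$. The same condition lets us bypass the case where it fails: then the right-hand side exceeds $\gamma_0^2 \ge \norm{f^*}{\cH}^2$, and the bound holds trivially.

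The main obstacle is the perturbation step. We must check two things carefully. First, $P^{T_n}_{m_{k_0}}$, which is defined through the eigenvectors of $\bK_n$, really is the spectral projection of $T_n$ on $\cH_K$; the possibility of ties at the $m_{k_0}$ cutoff is excluded by the gap condition. Second, the constant $2$ in the Zwald–Blanchard bound has to be tracked precisely to recover $32$.
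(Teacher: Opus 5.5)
Your proposal is correct and is essentially the paper's proof (Theorem~\ref{theroem:low-dim-target-function-satisfy-assump} via Lemma~\ref{lemma:small-correlation-eigenvector-eigenfunction}). Both arguments reduce the tail sum to $\gamma_0^2\norm{P^{T_K}_{m_{k_0}}-P^{T_n}_{m_{k_0}}}{\textup{HS}}^2$: you do it through $\norm{(P^{T_K}_{m_{k_0}}-P^{T_n}_{m_{k_0}})f^*}{\cH}$, the paper through coordinatewise Cauchy--Schwarz on the cross terms $\iprod{v_j}{\Phi^{(q)}}_{\cH}$. Both then combine the Hilbert--Schmidt concentration $\norm{T_n-T_K}{\textup{HS}}\le 2\sqrt{2\log(2/\delta)/n}$ with the Zwald--Blanchard projection bound with constant $2$.

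One correction: that projection bound (Rosasco et al., Proposition 6, as used in the paper) requires $\norm{T_n-T_K}{\textup{HS}}<(\mu_{k_0}-\mu_{k_0+1})/4$, not $/2$. So your main argument needs $n\ge 128\log(2/\delta)/(\mu_{k_0}-\mu_{k_0+1})^2$, exactly as the paper assumes. Your trivial-case bypass also leaves the window between a quarter and a half of the gap uncovered. You can close it by bounding $\norm{(I-P^{T_n}_{m_{k_0}})P^{T_K}_{m_{k_0}}}{}$ with the operator-norm Davis--Kahan estimate $\norm{T_n-T_K}{}/(\mu_{k_0}-\mu_{k_0+1}-\norm{T_n-T_K}{})\le 2\norm{T_n-T_K}{}/(\mu_{k_0}-\mu_{k_0+1})$. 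This estimate holds whenever $\norm{T_n-T_K}{}\le(\mu_{k_0}-\mu_{k_0+1})/2$, which gives the constant $32$ with no sample-size condition.
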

Theorem~\ref{theroem:low-dim-target-function-satisfy-assump-informal} proves Lemma~\ref{lemma:residue-f-star-low-rank}, which in turn proves
Theorem~\ref{theorem:empirical-loss-bound}.

\subsection{Beyond the Regular NTK Limit}
\label{sec:beyond-NTK}
We remark that while an over-parameterized neural network is trained, our result goes beyond the regular NTK limit due to our new GDP algorithm. As shown in Theorem~\ref{theorem:bounded-NN-class}, the novel projection operator $\bPr$ in GDP ensures that the neural network function almost lies in a $r_0$-dimensional subspace of the RKHS $\cH_K(\gamma_0)$ with $r_0 = \Theta(d^{k_0})$. Although such projection loses all the information of the ground truth signal $f^*(\bS)$ not lying in such a subspace, Theorem~\ref{theorem:empirical-loss-bound} shows that such information loss due to the projection is small enough to ensure a sharp regression risk bound. In contrast, the regular NTK-based analysis with vanilla GD must account for all eigenspaces associated with the NTK, and therefore cannot achieve such a sharp rate.

\section{Conclusion}

We study nonparametric regression
by training an over-parameterized two-layer neural network
where the target function is in the RKHS
associated with the NTK of the  neural network and also a degree-$k_0$ spherical polynomial on the unit sphere in $\RR^d$.
We show that, if the neural network is trained by a novel Gradient Descent with Projection (GDP), a nearly minimax optimal rate of the order
$\log({4}/{\delta}) \cdot \Theta(d^{k_0}/{n})$ can be obtained. We further present a novel and provable adaptive degree selection algorithm which obtains the same nearly optimal rate when the ground truth degree $k_0$ is unknown.




\newpage
\appendix

The appendix of this paper is organized as follows.
We present the basic mathematical results employed in our proofs in Section~\ref{sec::math-tools}, and then introduce the detailed technical background about harmonic analysis on spheres in Section~\ref{sec:harmonic-analysis-detail}. Detailed proofs are presented in Section~\ref{sec:proofs}. In particular, more results about the eigenvalue decay rates are presented in Section~\ref{sec:NTK-eigenvalues}.

\section{Mathematical Tools}
\label{sec::math-tools}

\subsection{Concentration Inequalities for Supremum of Empirical Processes}
\label{sec:concentration-sup-emp-process}

The Rademacher complexity of a function class and its empirical version are defined below.
\begin{definition}\label{def:RC}
Let $\bsigma = \set{\sigma_i}_{i=1}^n$ be $n$ i.i.d. random variables such that $\Pr[\sigma_i = 1] = \Pr[\sigma_i = -1] = \frac{1}{2}$. The Rademacher complexity of a function class $\cF$ is defined as
\bal\label{eq:RC}
&\cfrakR(\cF) = \Expect{\set{\bbx_i}_{i=1}^n, \set{\sigma_i}_{i=1}^n}{\sup_{f \in \cF} {\frac{1}{n} \sum\limits_{i=1}^n {\sigma_i}{f(\bbx_i)}} }.
\eal%
The empirical Rademacher complexity is defined as
\bal\label{eq:empirical-RC}
&\hat \cfrakR(\cF) = \Expect{\set{\sigma_i}_{i=1}^n} { \sup_{f \in \cF} {\frac{1}{n} \sum\limits_{i=1}^n {\sigma_i}{f(\bbx_i)}} },
\eal%
For simplicity of notation, Rademacher complexity and empirical Rademacher complexity are also denoted by $\Expect{}{\sup_{f \in \cF} {\frac{1}{n} \sum\limits_{i=1}^n {\sigma_i}{f(\bbx_i)}} }$ and $\Expect{\bsigma}{\sup_{f \in \cF} {\frac{1}{n} \sum\limits_{i=1}^n {\sigma_i}{f(\bbx_i)}}}$, respectively. 
\end{definition}

For data $\set{\bbx}_{i=1}^n$ and a function class $\cF$, we define the notation $R_n \cF$ by $R_n \cF \coloneqq \sup_{f \in \cF} \frac{1}{n} \sum\limits_{i=1}^n \sigma_i f(\bbx_i)$.
We have the contraction property for Rademacher complexity, which is due to Ledoux
and Talagrand~\citep{Ledoux-Talagrand-Probability-Banach}.

\begin{theorem}\label{theorem:RC-contraction}
Let $\phi$ be a contraction,that is, $\abth{\phi(x) - \phi(y)} \le \mu \abth{x-y}$ for $\mu > 0$. Then, for every function class $\cF$,
\bal\label{eq:RC-contraction}
&\Expect{\set{\sigma_i}_{i=1}^n} {R_n \phi \circ \cF} \le \mu \Expect{\set{\sigma_i}_{i=1}^n} {R_n \cF},
\eal%
where $\phi \circ \cF$ is the function class defined by $\phi \circ \cF = \set{\phi \circ f \colon f \in \cF}$.
\end{theorem}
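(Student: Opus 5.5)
The statement to prove is the Ledoux--Talagrand contraction inequality (Theorem~\ref{theorem:RC-contraction}). By rescaling $\phi$ to $\phi/\mu$, it suffices to treat $\mu = 1$.

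The plan is to condition on the data $\set{\bbx_i}_{i=1}^n$ and replace the Rademacher signs by $\phi$-composed values one coordinate at a time. Fix $j \in [n]$ and condition on $\set{\sigma_i}_{i \neq j}$. Write
\[
A(f) \defeq \sum_{i \neq j} \sigma_i g_i(f(\bbx_i)),
\]
where each $g_i$ is either $\phi$ or the identity, depending on whether coordinate $i$ has already been processed. The key one-step inequality is
\bal\label{eq:proposal-one-step}
\Expect{\sigma_j}{\sup_{f \in \cF} \pth{A(f) + \sigma_j \phi(f(\bbx_j))}} \le \Expect{\sigma_j}{\sup_{f \in \cF}\pth{A(f) + \sigma_j f(\bbx_j)}}.
\eal
Iterating (\ref{eq:proposal-one-step}) over $j = 1, \ldots, n$ and averaging over the remaining signs gives (\ref{eq:RC-contraction}) for the empirical version. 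Taking expectation over the data then yields the population version. Dividing by $n$ is harmless.

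To prove (\ref{eq:proposal-one-step}), average over $\sigma_j = \pm 1$. The left side equals
\[
\frac 12 \sup_{f,f' \in \cF} \Bigl[ A(f) + A(f') + \phi(f(\bbx_j)) - \phi(f'(\bbx_j)) \Bigr].
\]
For any pair $(f,f')$, the contraction property gives
\[
\phi(f(\bbx_j)) - \phi(f'(\bbx_j)) \le \abth{f(\bbx_j) - f'(\bbx_j)}.
\]
This absolute value equals $s\pth{f(\bbx_j) - f'(\bbx_j)}$ for some $s \in \set{\pm 1}$. If $s = 1$, the pair $(f,f')$ is admissible on the right side in the same order. If $s = -1$, swap the roles of $f$ and $f'$, using the symmetry of $A(f) + A(f')$. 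In either case the bracket is dominated by
\[
\sup_{g,g'}\Bigl[A(g) + A(g') + g(\bbx_j) - g'(\bbx_j)\Bigr],
\]
which is exactly twice the right side of (\ref{eq:proposal-one-step}). Taking the supremum over pairs finishes the step.

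The main obstacle is bookkeeping rather than mathematics. First, the sup over pairs must be handled carefully when suprema are not attained: work with $\eps$-near-maximizers and let $\eps \to 0$. Second, measurability and finiteness need to be checked; for this one can assume $\cF$ is countable or that the relevant suprema are finite, which holds for the bounded classes used in the paper. Note that no condition $\phi(0) = 0$ is needed, because the terms $\phi(\cdot)$ appear only through differences.
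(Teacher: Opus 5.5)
Your proof is correct and is essentially the standard coordinate-by-coordinate comparison argument behind the Ledoux--Talagrand contraction principle, which the paper does not prove but simply cites; you are also right that this version, without absolute values, needs no condition $\phi(0)=0$. Your bookkeeping caveats are in fact moot for the stated (conditional-on-data) inequality: you bound the bracket for every pair $(f,f')$ before taking suprema, so no attainment is needed, and $\E_{\bsigma}$ is a finite average over $\set{\pm 1}^n$, so no measurability issue arises.
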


\begin{definition}[Sub-root function,{\citep[Definition 3.1]{bartlett2005}}]
\label{def:sub-root-function}
A function $\psi \colon [0,\infty) \to [0,\infty)$ is sub-root if it is nonnegative,
nondecreasing and if $\frac{\psi(r)}{\sqrt r}$ is nonincreasing for $r >0$.
\end{definition}

\begin{theorem}[{\citet[Theorem 3.3]{bartlett2005}}]
\label{theorem:LRC-population}
Let $\cF$ be a class of functions with ranges in $[a, b]$ and assume
that there are some functional $T \colon \cF \to \RR+$ and some constant $\bar B$ such that for every $f \in \cF$ , $\Var{f} \le T(f) \le \bar B P(f)$. Let $\psi$ be a sub-root function and let $r^*$ be the fixed point of $\psi$.
Assume that $\psi$ satisfies that, for any $r \ge r^*$,
$\psi(r) \ge \bar B \cfrakR(\set{f \in \cF \colon T (f) \le r})$. Fix $x > 0$,
then for any $K_0 > 1$, with probability at least $1-e^{-x}$,
\bals
\forall f \in \cF, \quad \Expect{P}{f} \le \frac{K_0}{K_0-1} \Expect{P_n}{f} + \frac{704K_0}{\bar B} r^*
+ \frac{x\pth{11(b-a)+26 \bar B K_0}}{n}.
\eals
Also, with probability at least $1-e^{-x}$,
\bals
\forall f \in \cF, \quad \Expect{P_n}{f} \le \frac{K_0+1}{K_0} \Expect{P}{f}  + \frac{704K_0}{\bar B} r^*
+ \frac{x\pth{11(b-a)+26 \bar B K_0}}{n}.
\eals
\end{theorem}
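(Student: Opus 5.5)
The plan is the standard localization argument of Bartlett, Bousquet and Mendelson: rescale the class so that every function has variance at most a level $r$, apply Talagrand's concentration inequality to the rescaled class, and choose $r$ as a multiple of the fixed point $r^*$.

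\textbf{Step 1 (rescaling and concentration).} For $r>0$ define the rescaled class
\[
\cF_r \defeq \set{\frac{r}{\max\set{r,T(f)}}\, f \colon f \in \cF}.
\]
Every $g\in\cF_r$ has range in an interval of length at most $b-a$. Since $\Var{f}\le T(f)$, we also get $\Var{g}\le r^2T(f)/\max\set{r,T(f)}^2\le r$. Set $V_r\defeq\sup_{g\in\cF_r}(\Expect{P}{g}-\Expect{P_n}{g})$. Bousquet's version of Talagrand's inequality then gives, with probability at least $1-e^{-x}$,
\[
V_r\le 2\,\E V_r+\sqrt{\frac{2rx}{n}}+\frac{c(b-a)x}{n}
\]
for an absolute constant $c$. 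Symmetrization reduces the expectation term to $\E V_r\le 2\cfrakR(\cF_r)$.

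\textbf{Step 2 (peeling against $\psi$).} Fix $\lambda>1$, say $\lambda=4$. Split $\cF$ into the shell $\set{T(f)\le r}$ and the shells $\set{r\lambda^k<T(f)\le r\lambda^{k+1}}$ for $k\ge0$. On the $k$-th shell the rescaling factor is at most $\lambda^{-k}$. Using $\sup_{\cup_k A_k}\le\sum_k\sup_{A_k}$ for these suprema, this gives
\[
\cfrakR(\cF_r)\le \sum_{k\ge0}\lambda^{-k}\,\cfrakR\pth{\set{T(f)\le r\lambda^{k+1}}}\le \frac{1}{\bar B}\sum_{k\ge 0}\lambda^{-k}\psi(r\lambda^{k+1}).
\]
Here the hypothesis on $\psi$ is legitimate because $r\lambda^{k+1}\ge r\ge r^*$. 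The sub-root property gives $\psi(r\lambda^{k+1})\le\lambda^{(k+1)/2}\psi(r)$, so the series is a geometric multiple of $\psi(r)$. For $r\ge r^*$ the sub-root property also gives $\psi(r)\le\sqrt{r r^*}$. Hence $\E V_r\le C\sqrt{rr^*}/\bar B$.

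\textbf{Step 3 (choosing $r$ and transferring back).} Choose $r\ge r^*$ so that the right side of the concentration bound is at most $r/(\bar BK_0)$. Bounding each of the three terms by a third of $r/(\bar BK_0)$ yields
\[
r \asymp K_0^2 r^*/\bar B^{\,0}+\frac{(b-a)\bar B K_0+\bar B^2K_0^2}{n}x,
\]
up to constants. The exact normalization, which gives the stated $704K_0 r^*/\bar B$ and $x(11(b-a)+26\bar BK_0)/n$, comes from careful arithmetic with $\lambda=4$ and inequalities of the form $2\sqrt{uv}\le u/\alpha+\alpha v$. Now take any $f\in\cF$ and write $Pf\defeq\Expect{P}{f}$, $P_nf\defeq\Expect{P_n}{f}$.
\begin{itemize}
\item If $T(f)\le r$, then $f\in\cF_r$, so $Pf-P_nf\le V_r\le r/(\bar BK_0)$.
\item Otherwise $g=rf/T(f)\in\cF_r$. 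Hence $Pf-P_nf\le (T(f)/r)\,V_r\le T(f)/(\bar BK_0)\le Pf/K_0$, using $T(f)\le\bar B\,Pf$.
\end{itemize}
In both cases $Pf\le P_nf+Pf/K_0+r/(\bar BK_0)$. Rearranging gives $Pf\le\frac{K_0}{K_0-1}P_nf+\frac{r}{\bar B(K_0-1)}$, which is the first claim after substituting $r$ (the factor $1/(K_0-1)$ is absorbed by adjusting the constants).

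The second inequality follows by the same argument applied to $\sup_{g\in\cF_r}(P_ng-Pg)$. In the second case one gets $P_nf-Pf\le Pf/K_0$, hence $P_nf\le\frac{K_0+1}{K_0}Pf+\cdots$.

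\textbf{Main obstacle.} The expected sticking point is purely quantitative: tracking the constants through Bousquet's inequality, the peeling sum and the choice of $r$ to land exactly on $704$, $11$ and $26$. The structural steps themselves are routine once the rescaled class $\cF_r$ is in place.
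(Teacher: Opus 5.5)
Your outline is the Bartlett--Bousquet--Mendelson argument. The paper gives no proof of its own and imports this theorem by citation. The structure is right, but your final rearrangement for the first inequality does not give the stated bound.

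Merging the two cases into $Pf\le P_nf+Pf/K_0+r/(\bar BK_0)$ and dividing by $1-1/K_0$ leaves the additive term $r/(\bar B(K_0-1))$. Within your scheme, $r$ must satisfy $r\gsim K_0^2r^*+\bar B^2K_0^2x/n+(b-a)\bar BK_0x/n$; otherwise the three terms of the concentration bound cannot all lie below $r/(3\bar BK_0)$. So this term is of order $\frac{K_0}{K_0-1}\bigl(K_0r^*/\bar B+\bar BK_0x/n+(b-a)x/n\bigr)$, which is unbounded as $K_0\downarrow 1$. No adjustment of absolute constants turns it into $704K_0r^*/\bar B+x(11(b-a)+26\bar BK_0)/n$.

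The repair is to keep the two cases apart, as BBM do:
\begin{itemize}
\item If $T(f)>r$, your computation gives $(1-1/K_0)Pf\le P_nf$, i.e.\ $Pf\le\frac{K_0}{K_0-1}P_nf$ with no additive term at all.
\item If $T(f)\le r$, you have $Pf\le P_nf+r/(\bar BK_0)$ with nothing divided. This is at most $\frac{K_0}{K_0-1}P_nf+r/(\bar BK_0)$ provided $P_nf\ge 0$, and $r/(\bar BK_0)$ then has exactly the stated shape.
\end{itemize}
The sign condition $P_nf\ge 0$ is genuinely needed. It holds in the paper's application, since the class $\{(f-f^*)^2\}$ is nonnegative. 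For a signed class the first inequality as stated can fail. Take $\cF=\{f\}$ with $f=\epsilon\pm\sqrt{\bar B\epsilon}$ equiprobably, $0<\epsilon<\bar B$, $T(f)=\bar B\epsilon$, $\psi(r)=c\sqrt r$, and $n=1$. With probability $1/2$, which exceeds $e^{-x}$ for $x>\log 2$, we have $P_nf<0$, and the right-hand side tends to $-\infty$ as $K_0\downarrow 1$. Your merged bound is the version that survives in that setting. Your second inequality is fine as written, because nothing is divided and $Pf\ge T(f)/\bar B\ge 0$.

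A smaller point in Step 2. Your factor $\alpha_f=r/\max\{r,T(f)\}$ varies inside a shell. Write $Z_f=\frac1n\sum_i\sigma_if(X_i)$. The bound $\sup_{f\in A_k}\alpha_fZ_f\le\lambda^{-k}\sup_{f\in A_k}Z_f$ then holds only when the right-hand supremum is nonnegative. In general you get $\lambda^{-k}\sup_{f\in A_k}\max\{Z_f,0\}$, which the hypothesis on $\psi$ does not control. The union step $\sup_{\cup_kA_k}\le\sum_k\sup_{A_k}$ needs the same nonnegativity, and the standard argument relies on it too.

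If you adopt the convention $0\in\cF$, then $T(0)=0$. Every set $\{f\colon T(f)\le r\lambda^{k+1}\}$ then contains $0$, so its supremum dominates $\sup_{A_k}\max\{Z_f,0\}$, and both steps are justified. Under that convention your continuous normalization even avoids the factor $\lambda$ that BBM lose. Their discretized weight $w(f)=\min\{r\lambda^k\colon k\ge 0,\ r\lambda^k\ge T(f)\}$ is constant on shells but forces the transfer condition $V_r\le r/(\lambda\bar BK_0)$.
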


\section{Detailed Technical Background about Harmonic Analysis on Spheres}
\label{sec:harmonic-analysis-detail}
In this section, we provide background materials on spherical
harmonic analysis needed for our study of the RKHS. We refer the reader to \citet{chihara2011introduction,Efthimiou-spherical-harmonics-in-p-dim,
szego1975orthogonal} for further information
on these topics. As mentioned above, expansions in spherical harmonics were used in the past in the statistics literature, such as
\citet{Bach17-breaking-curse-dim,BiettiM19}.

With $\ell \ge 0$, let $\cP^{\textup{(hom)}}_{\ell}$ denote the space of all the degree-$\ell$ homogeneous polynomials on $\cX = \unitsphere{d-1}$, and let $\cH_{\ell}$ denote the space of degree-$\ell$ homogeneous harmonic polynomials on $\cX$, or the degree-$\ell$  spherical harmonics. That is,
\bal\label{eq:spherical-harmonic-degree-ell}
\cH_{\ell}  = \set{P \colon \cX \to \RR \colon P(\bx)
=\sum_{\abth{\alpha} = \ell} c_{\alpha} \bx^{\alpha},  \Delta P = 0},
\eal
where $\alpha = \bth{\alpha_1,\ldots,\alpha_d}$, $\bx^{\alpha} = \prod_{i=1}^d \bx_i^{\alpha_i}$, $\abth{\alpha} = \sum_{i=1}^d \alpha_i$, and $\Delta$ is the Laplacian operator.  For $\ell \neq \ell'$, the elements of $\cH_{\ell}$ and $\cH_{\ell'}$ are orthogonal to each other.
All the functions in the following text of this section are assumed to be elements of
$L^2(\cX,v_{d-1})$, where $v_{d-1}$ stands for the uniform distribution
on the sphere $\cX = \unitsphere{d-1}$. We have
$\iprod{f}{g}_{L^2} \coloneqq \int_{\cX}f(x)g(x)
 \diff v_{d-1}(x)$. We denote by $\set{Y_{kj}}_{j \in [N(d,k)]}$ the spherical harmonics of degree $k$ which form an orthogonal basis of $\cH_k$, where
 $N(d,k) = \frac{2k+d-2}{k} {k+d-3 \choose d-2}$ is the dimension of $\cH_k$. They form a orthonormal basis of $L^2(\cX,v_{d-1})$. We have
 $\sum_{j=1}^{N(d,k)} Y_{kj}(\bx) Y_{kj}(\bx') = N(d,k) P_k(\iprod{\bx}{\bx'}) $ for all $\bx,\bx' \in \cX$, where
 $P_k$ is the $k$-th Legendre polynomial in dimension $d$, which are also known as Gegenbauer polynomials,
given by the Rodrigues formula:
\bals
P_k(t) = (-\frac 12)^k \frac{\Gamma\pth{\frac{d-1}{2}}}{\Gamma\pth{k+\frac{d-1}{2}}} \pth{1-t^2}^{(3-d)/2} \pth{\frac{\diff }{\diff t}}^k \pth{1-t^2}^{k+(d-3)/2}.
\eals
The polynomials $\set{P_k}$ are orthogonal in $L^2(\cX,\diff v_{d-1})$ where the measure $\diff v_{d-1}$ is given by $\diff v_{d-1}(t) = (1-t^2)^{(d-3)/2} \diff t$, and we have
\bals
\int_{-1}^1 P_k^2(t) (1-t^2)^{(d-3)/2} \diff t = \frac{w_{d-1}}{w_{d-2}} \frac{1}{N(d,k)},
\eals
where $w_{d-1} \defeq \frac{2 \pi^{d/2}}{\Gamma(d/2)}$ denotes the surface of the unit sphere $\unitsphere{d-1}$. It follows from the
 orthogonality of spherical harmonics that
\bals
\int_{\cX} P_j(\iprod{\bx}{\bw}) P_j(\iprod{\bx'}{\bw}) \diff v_{d-1}(\bw) = \frac{\delta_{jk}}{N(d,k)} P_k(\iprod{\bx}{\bx'}),
\eals
where $\delta_{jk} = \indict{j=k}$. We have the following recurrence relation \citep[Equation 4.36]{Efthimiou-spherical-harmonics-in-p-dim},
\bals
t P_k(t) = \frac{k}{2k+d-2} P_{k-1}(t) + \frac{k+d-2}{2k+d-2} P_{k+1}(t)
\eals
for all $k \ge 1$, and $tP_0(t) = P_1(t)$.

The Funk-Hecke formula is helpful for computing Fourier coefficients in the basis of spherical
harmonics in terms of Legendre polynomials. For any $j \in [N(d,k)]$, we have
\bals
\int_{\cX} f(\iprod{\bx}{\bx'}) Y_{kj}(\bx') \diff v_{d-1}(\bx') = \frac{w_{d-2}}{w_{d-1}}  Y_{kj}(\bx) \int_{-1}^1 f(t) P_k(t)  (1-t^2)^{(d-3)/2} \diff t.
\eals
For a positive-definite kernel $\tK (\bx,\bx') = \kappa (\iprod{\bx}{\bx'})$ defined on $\cX$, we have its Mercer decomposition as follows.
\bals
\tK(\bx,\bx') = \sum\limits_{\ell \ge 0} \mu_{\ell} \sum\limits_{j=1}^{N(d,\ell)} Y_{\ell j}(\bx)Y_{\ell j}(\bx')
=  \sum\limits_{\ell \ge 0} \mu_{\ell} N(d,\ell) P_{\ell}(\iprod{\bx}{\bx'}),
\eals
where $\mu_{\ell}$ is the eigenvalue of the integral operator $T_{\tK}$ associated with $\tK$ corresponding to $\cH_{\ell}$.
It follows that
\bals
\mu_{\ell} = \frac{w_{d-2}}{w_{d-1}} \int_{-1}^1 \kappa(t) P_{\ell}(t) (1-t^2)^{(d-3)/2} \diff t.
\eals
The above equation will be used to compute the eigenvalues of the PSD kernels defined in (\ref{eq:kernel-two-layer}) in Section~\ref{sec:NTK-eigenvalues} of this appendix.

\begin{proposition}
[{\citet[Theorem 4.2]{Krylov-harmonic-polynomials}}]
\label{prop:docomp-homogeneous-poly}
Let $p \in \cP^{\textup{(hom)}}_{\ell}$. Then there exists unique $h_{n-2i} \in \cH_{n-2i}$ for $i \in \set{0,1,\ldots,\floor{n/2}}$ such that
\bals
p(\bx) = h_n + h_{n-2} + \ldots + h_{n-2k}.
\eals
\end{proposition}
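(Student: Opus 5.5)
This is the classical decomposition of homogeneous polynomials into harmonic pieces (written with $n$ for the degree $\ell$). I would argue by induction on the degree, using the orthogonal decomposition
\[
\cP^{\textup{(hom)}}_{\ell} = \cH_{\ell} \oplus |\bx|^2 \cP^{\textup{(hom)}}_{\ell-2}.
\]
Here the polynomials are viewed on $\RR^d$ before restricting to $\cX$. The orthogonality is with respect to the Fischer inner product $\iprod{p}{q}_F \defeq p(\partial) \bar q$, meaning one applies the differential operator $p(\partial_1,\ldots,\partial_d)$ to $q$. In this inner product, multiplication by $|\bx|^2$ is adjoint to the Laplacian $\Delta$.

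\textbf{Existence.} For $p,q$ homogeneous of degree $\ell-2$ and $\ell$ respectively, the adjointness reads $\iprod{|\bx|^2 p}{q}_F = \iprod{p}{\Delta q}_F$. Hence the orthogonal complement of $|\bx|^2\cP^{\textup{(hom)}}_{\ell-2}$ inside $\cP^{\textup{(hom)}}_{\ell}$ is exactly $\ker \Delta = \cH_{\ell}$. So every $p \in \cP^{\textup{(hom)}}_{\ell}$ can be written as $p = h_\ell + |\bx|^2 p_1$ with $h_\ell$ harmonic of degree $\ell$ and $p_1$ homogeneous of degree $\ell-2$. Iterating on $p_1$ terminates at degree $0$ or $1$, where every polynomial is harmonic. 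This gives
\[
p = h_\ell + |\bx|^2 h_{\ell-2} + |\bx|^4 h_{\ell-4} + \cdots .
\]
Restricting to $\cX$, where $|\bx|=1$, yields $p = h_\ell + h_{\ell-2} + \dots$ with $h_{\ell-2i} \in \cH_{\ell-2i}$.

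\textbf{Uniqueness.} Suppose $\sum_i g_{\ell-2i} = 0$ on $\cX$, with $g_{\ell-2i}\in\cH_{\ell-2i}$. The spaces $\cH_k$ for distinct $k$ are mutually orthogonal in $L^2(\cX, v_{d-1})$, as stated earlier in this section. Taking the $L^2$ inner product of the sum with each $g_{\ell-2i}$ gives $\|g_{\ell-2i}\|^2_{L^2} = 0$. A homogeneous polynomial vanishing on the sphere vanishes everywhere, so each $g_{\ell-2i}=0$.

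\textbf{Main obstacle.} The only nontrivial step is checking that the Fischer form is positive definite and that multiplication by $|\bx|^2$ is adjoint to $\Delta$. Positivity holds because the monomials are orthogonal with $\iprod{\bx^\alpha}{\bx^\alpha}_F = \alpha!$. The adjointness $\iprod{x_j^2 p}{q}_F = \iprod{p}{\partial_j^2 q}_F$ follows directly from the definition, and summing over $j$ gives the claim. Since the paper attributes this proposition to \citet[Theorem 4.2]{Krylov-harmonic-polynomials}, citing that result would also suffice.
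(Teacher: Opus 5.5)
Your argument is correct. The paper gives no proof of its own here; it only cites Krylov's Theorem~4.2. Your Fischer inner product argument is therefore a self-contained replacement for that citation rather than a variant of a proof in the paper. Adjointness of multiplication by $|\bx|^2$ and $\Delta$ gives $\cP^{\textup{(hom)}}_{\ell} = \cH_{\ell} \oplus |\bx|^2\cP^{\textup{(hom)}}_{\ell-2}$ on $\RR^d$. Iterating gives $p = \sum_i |\bx|^{2i} h_{\ell-2i}$, and restricting to $\cX$ removes the factors $|\bx|^{2i}$. That is exactly the form in which the paper states the result: its $\cP^{\textup{(hom)}}_{\ell}$ and $\cH_{\ell}$ are spaces of functions on $\cX$, and the $n$, $k$ in the statement should be read as $\ell$, $\floor{\ell/2}$, as you did.

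You were right that uniqueness has to be argued on $\cX$ rather than read off from the direct sum on $\RR^d$. Your $L^2(\cX)$-orthogonality argument does this, but it relies on the orthogonality of $\cH_k$ for distinct $k$, which the paper asserts without proof. That fact follows from Green's identity on the unit ball together with Euler's relation $\partial_r h_k = k h_k$ on the sphere.

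If you want the whole proof to stay algebraic, homogenize instead. Suppose $\sum_i g_{\ell-2i} = 0$ on $\cX$. Then $\sum_i |\bx|^{2i} g_{\ell-2i}$ is homogeneous of degree $\ell$ and vanishes on $\cX$, hence on all of $\RR^d$. Applying the directness of $\cH_{\ell}\oplus|\bx|^2\cP^{\textup{(hom)}}_{\ell-2}$ repeatedly, and cancelling the factor $|\bx|^2$ each time, forces every $g_{\ell-2i} = 0$.
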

\begin{theorem}
\label{theorem:spherical-polynomial-representation-spherical-harmonics}
Every polynomial $p$ defined on $\unitsphere{d-1}$ of degree $k$ for $k \ge 0$ can be represented as a linear combination of homogeneous harmonic polynomials up to degree $k$, that is,
\bals
p = \sum\limits_{i=0}^k c_i p_i,
\eals
where $p_i \in \cH_i$ for $i \in \set{0,1,\ldots,k}$.
\end{theorem}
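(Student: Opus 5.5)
The argument goes by induction on the degree $k$, using Proposition~\ref{prop:docomp-homogeneous-poly} to split each homogeneous piece into harmonic components. Write a polynomial $p$ of degree $k$ on $\unitsphere{d-1}$ as the restriction of a polynomial on $\RR^d$:
\[
p=\sum_{\ell=0}^k q_\ell, \qquad q_\ell\in\cP^{\textup{(hom)}}_{\ell}.
\]
It then suffices to show that each homogeneous $q_\ell$, restricted to the sphere, lies in $\cH_0\oplus\cdots\oplus\cH_\ell$. Summing the resulting expansions over $\ell\le k$ and collecting terms of equal degree gives the claimed form $p=\sum_{i=0}^k c_i p_i$ with $p_i\in\cH_i$, where $c_i p_i$ is simply the sum of the $\cH_i$-components.

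For a single homogeneous $q_\ell$, Proposition~\ref{prop:docomp-homogeneous-poly} (read with its $n$ equal to $\ell$) gives unique harmonic pieces
\[
q_\ell(\bx)=h_\ell+h_{\ell-2}+\cdots,\qquad h_{\ell-2i}\in\cH_{\ell-2i},\quad i\in\set{0,\ldots,\floor{\ell/2}}.
\]
The underlying identity in $\RR^d$ is $q_\ell(\bx)=\sum_i \ltwonorm{\bx}^{2i} h_{\ell-2i}(\bx)$, where the $h_{\ell-2i}$ are harmonic homogeneous polynomials of degree $\ell-2i$. On $\unitsphere{d-1}$ we have $\ltwonorm{\bx}=1$, so the factors $\ltwonorm{\bx}^{2i}$ drop out and $q_\ell$ restricted to the sphere is exactly $\sum_i h_{\ell-2i}$. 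Every degree appearing is at most $\ell\le k$, which is what the reduction needs.

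The only delicate point is this interpretive one: the proposition's decomposition is an identity in $\RR^d$ carrying the radial factors, and we must check that restricting to the sphere removes them. If one prefers a self-contained route, an induction on $\ell$ works instead. Because $\Delta$ maps $\cP^{\textup{(hom)}}_{\ell}$ onto $\cP^{\textup{(hom)}}_{\ell-2}$, with adjoint multiplication by $\ltwonorm{\bx}^2$ under the Fischer inner product, one gets $\cP^{\textup{(hom)}}_{\ell}=\cH_\ell\oplus \ltwonorm{\bx}^2\cP^{\textup{(hom)}}_{\ell-2}$. Restricting to the sphere and applying the induction hypothesis to $\cP^{\textup{(hom)}}_{\ell-2}$ then yields the claim.
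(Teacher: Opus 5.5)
Your proposal is correct and follows the paper's own proof. The paper likewise splits $p$ into its homogeneous components and applies Proposition~\ref{prop:docomp-homogeneous-poly} to each one. Your remark that the radial factors $\ltwonorm{\bx}^{2i}$ disappear on $\unitsphere{d-1}$, and your Fischer-decomposition alternative, simply spell out what the paper leaves implicit. Note that your opening mention of ``induction on $k$'' is a misnomer, since that step is a direct summation over the homogeneous components; this is harmless.
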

\begin{proof}
Every polynomial $p$ defined on $\unitsphere{d-1}$ of degree $k$ can be represented as the sum of homogeneous polynomials on $\unitsphere{d-1}$ by grouping the terms of $p$ of the same degree together. It follows from Proposition~\ref{prop:docomp-homogeneous-poly} that every homogeneous polynomial is a linear combination of homogeneous harmonic polynomials up to degree $k$. As a result, the conclusion holds.
\end{proof}

\begin{lemma}
[Estimation for $r_0 = m_{k_0}$]
\label{lemma:r0-estimate}
For $k_0 = \Theta(1)$ and $d > \Theta(1)$, we have
\bal\label{eq:r0-estimate}
r_0 = \Theta(d^{k_0}).
\eal
\end{lemma}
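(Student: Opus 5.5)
The plan is to compute $r_0 = m_{k_0} = \sum_{\ell=0}^{k_0} N(d,\ell)$ directly from the closed form $N(d,\ell) = \frac{2\ell+d-2}{\ell}\binom{\ell+d-3}{d-2}$ for $\ell \ge 1$ (with $N(d,0)=1$). I would show that each summand satisfies $N(d,\ell) = \Theta(d^{\ell})$ when $\ell = \Theta(1)$ and $d$ exceeds a constant. The sum of the $k_0+1 = \Theta(1)$ terms is then dominated by the top term, which gives $r_0 = \Theta(d^{k_0})$.

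\emph{Step 1: expand the binomial.} Since $\binom{\ell+d-3}{d-2} = \binom{\ell+d-3}{\ell-1}$, we can write
\[
\binom{\ell+d-3}{\ell-1} = \frac{1}{(\ell-1)!}\prod_{i=1}^{\ell-1}(d-2+i).
\]
For $d \ge 3$, each factor satisfies $d/3 \le d-2+i \le d+\ell$. Hence this product lies between $(d/3)^{\ell-1}/(\ell-1)!$ and $(d+\ell)^{\ell-1}/(\ell-1)!$. Because $\ell \le k_0 = \Theta(1)$, both bounds are $\Theta(d^{\ell-1})$.

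\emph{Step 2: the prefactor.} The prefactor $\frac{2\ell+d-2}{\ell}$ lies in $[d/(2\ell),\,(d+2\ell)/\ell]$ once $d \ge 4$, so it is $\Theta(d)$. Multiplying the two estimates gives $N(d,\ell) = \Theta(d^{\ell})$ for every $1 \le \ell \le k_0$. The constants depend only on $k_0$, which is fine because $k_0 = \Theta(1)$.

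\emph{Step 3: sum the terms.} For the lower bound, $r_0 \ge N(d,k_0) \gtrsim d^{k_0}$. For the upper bound,
\[
r_0 \le 1 + \sum_{\ell=1}^{k_0} C_{k_0} d^{\ell} \le (k_0+1)\,C_{k_0}\, d^{k_0},
\]
using $d \ge 1$.

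The only delicate point is keeping the implicit constants uniform in $d$. They depend on $k_0$ through factorials such as $(\ell-1)!$ and $3^{\ell-1}$, and this is precisely why the lemma assumes $k_0 = \Theta(1)$ and $d$ larger than a constant. An alternative route is the recursion $m_{k_0} = \binom{d+k_0-1}{k_0} + \binom{d+k_0-2}{k_0-1}$, which counts harmonic polynomials via dimensions of homogeneous polynomial spaces. I would not need it, since the direct estimate above suffices.
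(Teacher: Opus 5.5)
Your proposal is correct and follows the paper's argument. The paper's one-line proof asserts that, by direct calculation, $N(d,\ell)\asymp d^{\ell}$ for bounded $\ell$, and then sums the $k_0+1$ terms to get $r_0=\sum_{\ell=0}^{k_0}N(d,\ell)\asymp d^{k_0}$. You carry out that calculation explicitly, expanding the binomial, bounding the prefactor, and noting that the constants depend only on $k_0$; the paper leaves these steps implicit.
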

\begin{proof}
It follows from the direct calculation that $N(d,\ell) \asymp d^{\ell}$ under the given conditions, so that $r_0 = \sum_{\ell=0}^{k_0} N(d,\ell) \asymp d^{k_0}$.
\end{proof}

\section{Detailed Proofs}
\label{sec:proofs}

Proofs for results in Section~\ref{sec:detailed-roadmap-key-results} are presented in Section~\ref{sec:proofs-key-results}, and the proofs of the lemmas required for the proofs in Section~\ref{sec:proofs-key-results} are presented in Section~\ref{sec:lemmas-main-results}.

\subsection{Proofs for Results in Section~\ref{sec:detailed-roadmap-key-results}}
\label{sec:proofs-key-results}

We present our key technical results regarding optimization and generalization of the two-layer NN (\ref{eq:two-layer-nn}) trained by GDP in this section.
The following theorem,

\begin{theorem}
\label{theroem:low-dim-target-function-satisfy-assump}
Suppose $K$ is a  continuous and positive definite kernel on $\cX \times \cX$, and the target function $f^* \in \cH_{K}(\gamma_0)$ is spanned by the orthogonal set $\set{v_j}_{j = 0}^{r_0-1}$ in the first $k_0$ eigenspaces of $T_K$ with $k_0 \ge 1$ and $r_0 = m_{k_0}$. That is,
\bal\label{eq:low-dim-target-function-satisfy-assump}
f^* = \sum\limits_{j =0}^{r_0-1} \beta_j v_j, \quad
\sum_{j=0}^{r_0-1} \beta_j^2 \le \gamma_0^2.
\eal
Then
with probability at least $1-\delta$ over the random training features $\bS$,
\bal\label{eq:target-func-low-rank-residue}
\sum\limits_{q=r_0}^{\infty} \iprod{f^*}{{\Phi}^{(q)}}_{\cH_K}^2
\le \frac{32\gamma_0^2 \log{\frac{2}{\delta}} }{\pth{\mu_{k_0} - \mu_{k_0+1}}^2 n} \defeq \zeta_{n,\gamma_0,r_0,\delta}.
\eal
Similarly, for every $f \in \cF(B_h,w,\bS,r_0)$, with probability at least $1-\delta$ over the random training features $\bS$,
\bal\label{eq:training-func-low-rank-residue}
\sum\limits_{q=r_0}^{\infty} \iprod{f}{v_q}_{\cH_K}^2
\le \zeta_{n,B_h,r_0,\delta}.
\eal

\end{theorem}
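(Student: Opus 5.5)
The plan is to recast both claims as a comparison of two rank-$r_0$ spectral projections on $\cH_K$ and control their difference with a Davis--Kahan type $\sin\Theta$ bound. Two operators are involved. The first is the population covariance operator on $\cH_K$, $\Sigma h \defeq \Expect{P}{h(\bx) K(\cdot,\bx)}$. Its eigenpairs are $(\lambda_j, v_j)$, and its top-$r_0$ eigenprojection is $P^{T_K}_{m_{k_0}}$. The second is the empirical operator $T_n h \defeq \frac1n\sum_{i=1}^n h(\bbx_i)K(\cdot,\bbx_i)$. This operator has range $\cH_{\bS}$, and its nonzero eigenvalues are exactly the eigenvalues $\hlambda_i$ of $\bK_n$. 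Its eigenfunctions are $\Phi^{(k)} = \hlambda_k^{-1/2}\sum_i \bU_{ik}K(\cdot,\bbx_i)/\sqrt n$, which gives the orthonormal basis $\set{\Phi^{(k)}}$ of $\cH_{\bS}$ used in the statement, extended arbitrarily to all of $\cH_K$. Its top-$r_0$ eigenprojection is $P^{T_n}_{m_{k_0}}$, whose range is precisely $\cH_{\bS,r_0}$.

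For the first claim, $f^*\in\mathrm{Ran}(P^{T_K}_{m_{k_0}})$, so we can write
\bals
\sum_{q\ge r_0}\iprod{f^*}{\Phi^{(q)}}_{\cH}^2 = \norm{(I-P^{T_n}_{m_{k_0}})P^{T_K}_{m_{k_0}} f^*}{\cH}^2 \le \norm{P^{T_K}_{m_{k_0}}-P^{T_n}_{m_{k_0}}}{\mathrm{op}}^2 \gamma_0^2 .
\eals
The second claim is symmetric. For $f\in\cF(B_h,w,\bS,r_0)$ we use the component $h\in\mathrm{Ran}(P^{T_n}_{m_{k_0}})$ with $\norm{h}{\cH}\le B_h$ (the error part $e$ is not in $\cH_K$, so the inner product is understood for $h$), and we bound $\sum_{q\ge r_0}\iprod{h}{v_q}^2=\norm{(I-P^{T_K}_{m_{k_0}})P^{T_n}_{m_{k_0}}h}{\cH}^2$ by the same operator-norm difference times $B_h^2$. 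Both claims therefore reduce to showing
\bals
\norm{P^{T_K}_{m_{k_0}}-P^{T_n}_{m_{k_0}}}{\mathrm{HS}}^2 \le \frac{32\log(2/\delta)}{(\mu_{k_0}-\mu_{k_0+1})^2 n}.
\eals

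There are two steps. The first is concentration of $T_n$ around $\Sigma$ in Hilbert--Schmidt norm. We write $T_n-\Sigma=\frac1n\sum_i(\xi_i-\E\xi_i)$ with $\xi_i = K(\cdot,\bbx_i)\otimes K(\cdot,\bbx_i)$. Since $\sup_{\bx}K(\bx,\bx)=1$, each summand has HS norm at most $1$. A Hilbert-space Hoeffding/Pinelis inequality (bounded differences / McDiarmid for vector-valued sums) then gives $\norm{T_n-\Sigma}{\mathrm{HS}}\le 2\sqrt{2\log(2/\delta)/n}$ with probability $1-\delta$. The second step is a perturbation bound. Because $\mu_{k_0}>\mu_{k_0+1}$ separates exactly the first $r_0=m_{k_0}$ eigenvalues of $\Sigma$, the Davis--Kahan $\sin\Theta$ theorem in its projection form (as in Zwald--Blanchard / Rosasco et al.) gives $\norm{P^{T_K}_{m_{k_0}}-P^{T_n}_{m_{k_0}}}{\mathrm{HS}}\le 2\norm{T_n-\Sigma}{\mathrm{HS}}/(\mu_{k_0}-\mu_{k_0+1})$. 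Squaring produces the constant $32=4\cdot 8$.

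The main obstacle is applying the perturbation theorem legitimately. The Zwald--Blanchard form requires the perturbation to be small relative to the gap, roughly $\norm{T_n-\Sigma}{}\le(\mu_{k_0}-\mu_{k_0+1})/4$. When this fails, the bound is still true, because the projection difference always has operator norm at most $1$ and the right-hand side then exceeds $1$ (for the operator-norm version used above). A simple case split therefore handles it. A second delicate point is the extension of $\set{\Phi^{(k)}}$ beyond $\cH_{\bS}$, together with possible ties among the empirical eigenvalues at index $r_0$. Only the span of the top $r_0$ empirical eigenvectors matters, so I would fix the ordering consistently with $\bU^{(r_0)}$ and note that the bound does not depend on how the basis of $\cH_{\bS}^{\perp}$ is chosen. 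I would also check that $\mathrm{Ran}(P^{T_n}_{m_{k_0}})=\cH_{\bS,r_0}$, which follows directly from the explicit formula for $\Phi^{(k)}$.
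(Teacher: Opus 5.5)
Your proposal follows the paper's proof essentially step for step. Both reduce the two sums to the difference of the rank-$r_0$ spectral projections $P^{T_K}_{m_{k_0}}$ and $P^{T_n}_{m_{k_0}}$. Both bound that difference by $2\|T_n-T_K\|_{\textup{HS}}/(\mu_{k_0}-\mu_{k_0+1})$ via the Zwald--Blanchard perturbation result (Rosasco et al., Proposition 6). Both use $\|T_n-T_K\|_{\textup{HS}}\le 2\sqrt{2\log(2/\delta)/n}$ (Rosasco et al., Theorem 7). Writing the residual as $\|(I-P^{T_n}_{m_{k_0}})P^{T_K}_{m_{k_0}}f^*\|^2$ and using the operator norm, rather than the paper's termwise Cauchy--Schwarz against the Hilbert--Schmidt cross terms, is only a cosmetic difference.

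The one step that does not work as written is your treatment of the case where the perturbation is not small. On the concentration event, $\|T_n-T_K\|>(\mu_{k_0}-\mu_{k_0+1})/4$ only gives $2\sqrt{2\log(2/\delta)/n}>(\mu_{k_0}-\mu_{k_0+1})/4$. That means $32\log(2/\delta)/((\mu_{k_0}-\mu_{k_0+1})^2 n)>1/4$, not $>1$, so the trivial bound $\|P^{T_K}_{m_{k_0}}-P^{T_n}_{m_{k_0}}\|_{\mathrm{op}}\le 1$ does not close the case.

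There are two ways to repair it.
\begin{itemize}
\item Use the plain Davis--Kahan $\sin\Theta$ theorem with the separation $\mu_{k_0}-\mu_{k_0+1}-\|T_n-T_K\|_{\mathrm{op}}$ supplied by Weyl's inequality. This gives $\|P^{T_K}_{m_{k_0}}-P^{T_n}_{m_{k_0}}\|_{\mathrm{op}}\le 2\|T_n-T_K\|_{\mathrm{op}}/(\mu_{k_0}-\mu_{k_0+1})$ whenever $\|T_n-T_K\|_{\mathrm{op}}\le(\mu_{k_0}-\mu_{k_0+1})/2$, and it also rules out a tie in the empirical spectrum at index $r_0$. For larger perturbations the right-hand side does exceed $1$, so the trivial bound then finishes the case.
\item Do what the paper does implicitly. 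Its supporting lemma assumes $n\ge 128\log(2/\delta)/(\mu_{k_0}-\mu_{k_0+1})^2$. This is not in the theorem statement, but it follows from the standing assumption $n\gtrsim\log(2/\delta)\,d^{2k_0}$, since $\mu_{k_0}-\mu_{k_0+1}\asymp d^{-k_0}$.
\end{itemize}
With either repair your argument proves the statement. The first repair needs no sample-size condition at all.
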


\newtheorem{innercustomthm}{{\bf{Theorem}}}
\newenvironment{customthm}[1]
  {\renewcommand\theinnercustomthm{#1}\innercustomthm}
  {\endinnercustomthm}


\subsubsection{Results about Uniform Convergence}

We have the following two theorems, Theorem~\ref{theorem:sup-hat-g} and Theorem~\ref{theorem:V_R}, regarding the uniform convergence to the PSD kernel $K^{(0)}$ defined in (\ref{eq:kernel-two-layer}) and the uniform convergence of $\hat v_R$ to $\frac{2R}{\sqrt {2\pi} \kappa}$ on the unit sphere $\cX$.

\begin{theorem}
[Adapted from {\citet[Theorem 6.1]{Yang2025-generalization-two-layer-regression}},{\citet[Theorem VI.7]{yang2024gradientdescentfindsoverparameterized}}]
\label{theorem:sup-hat-g}
Let ${\bW(0)} = \set{{{\bbw_r(0)}}}_{r=1}^m$, where each ${{\bbw_r(0)}} \sim \cN(\bzero,\kappa^2 \bI_d)$ for $r \in [m]$. Then for any $\delta \in (0,1)$, with probability at least $1-\delta$ over ${\bW(0)}$,
\bal\label{eq:sup-hat-h}
\sup_{\bu \in \cX,\bv \in \cX} \abth{ K^{(0)}(\bu,\bv) - \hat h({\bW(0)},\bu,\bv) } \le C_1(m,d,\delta),
\eal%
where
\bal\label{eq:C1}
&C_1(m,d,\delta) \defeq \frac{1}{\sqrt m} \pth{6(1+2B\sqrt{d}) + \sqrt{2\log{\frac {(1+2m)^{2d}} \delta}}} + \frac{7 {\log{\frac {(1+2m)^{2d}} \delta}} }{3m},
\eal%
and $B$ is an absolute positive constant. In addition, when $m \gsim n^{1/(2d)}$, $m/\log m \ge d$, and $\delta \asymp 1/n$,
$C_1(m,d,\delta) \lsim \sqrt{\frac{d \log m}{m}} + \frac{d \log m}{m} \lsim \sqrt{\frac{d \log m}{m}}$.
\end{theorem}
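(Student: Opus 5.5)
Since $\hat h(\bW(0),\bu,\bv)=\frac1m\sum_{r=1}^m h(\bbw_r(0),\bu,\bv)$ with i.i.d.\ Gaussian $\bbw_r(0)$, the plan is to prove a uniform law of large numbers over $\cX\times\cX$ by discretization plus Bernstein's inequality plus a Lipschitz-type control of the discretization error. First I will check the mean: for $\bw\sim\cN(\bzero,\kappa^2\bI_d)$, by rotation invariance (the scale $\kappa$ is irrelevant for the indicators),
\begin{align*}
\Expect{\bw}{h(\bw,\bu,\bv)}=\Prob{\bw^\top\bu\ge 0,\ \bw^\top\bv\ge0}=\frac{\pi-\arccos(\bu^\top\bv)}{2\pi}=K^{(0)}(\bu,\bv).
\end{align*}
Thus for each fixed pair $(\bu,\bv)$, $\hat h-K^{(0)}$ is an average of $m$ centered variables bounded by $1$ with variance at most $1/4$. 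Bernstein's inequality then gives deviation at most $\sqrt{2\log(2/\delta')/m}+\frac{2\log(2/\delta')}{3m}$ with probability $1-\delta'$.

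Next I will take an $\eps$-net $N_\eps$ of $\cX$ with $\eps=1/m$, so that $|N_\eps|\le(1+2m)^d$ and there are at most $(1+2m)^{2d}$ pairs. A union bound with $\delta'=\delta/(1+2m)^{2d}$ produces the $\sqrt{2\log((1+2m)^{2d}/\delta)/m}$ and $\log((1+2m)^{2d}/\delta)/m$ terms in $C_1$.

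The main obstacle is passing from the net to all of $\cX\times\cX$, because $h$ is discontinuous in $(\bu,\bv)$. For $\bu'\in N_\eps$ with $\ltwonorm{\bu-\bu'}\le\eps$, I will use
\begin{align*}
\abth{\indict{\bw^\top\bu\ge0}-\indict{\bw^\top\bu'\ge0}}\le\indict{\abth{\bw^\top\bu'}\le\ltwonorm{\bw}\eps}.
\end{align*}
This bounds the discretization error of $\hat h$ by the empirical fraction of neurons with $\abth{\bw_r^\top\bu'}\le\ltwonorm{\bw_r}\eps$ (and likewise for $\bv'$). I will control $\ltwonorm{\bw_r}\le B\kappa\sqrt d$ for all $r$ simultaneously by Gaussian norm concentration, where $B$ is the absolute constant appearing in $C_1$; the logarithmic factor in $m$ can be absorbed using $m/\log m\ge d$. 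The remaining event is $\abth{\bw_r^\top\bu'}\le B\kappa\sqrt d\,\eps$. Its probability is at most $2B\sqrt d\,\eps/\sqrt{2\pi}$, since $\bw_r^\top\bu'\sim\cN(0,\kappa^2)$, so $\kappa$ cancels. Its empirical frequency again concentrates uniformly over the net by Bernstein with the same union bound. With $\eps=1/m$, the expectation part is $O(B\sqrt d/m)\le O(B\sqrt d/\sqrt m)$, and $K^{(0)}$ itself changes by at most $O(\sqrt\eps)$ on the net scale, which is bounded by $O(1/\sqrt m)$. Collecting these terms gives the $\frac{6(1+2B\sqrt d)}{\sqrt m}$ constant (after generous bookkeeping) plus the Bernstein terms, which yields (\ref{eq:sup-hat-h}).

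Finally, for the simplified bound I will plug in $\delta\asymp1/n$. Then $\log((1+2m)^{2d}n)\lsim d\log m+\log n\lsim d\log m$, since $m\gsim n^{1/(2d)}$ gives $\log n\lsim d\log m$. The leading term is therefore $\sqrt{d\log m/m}$, and $d\log m/m\le\sqrt{d\log m/m}$ because $m/\log m\ge d$.
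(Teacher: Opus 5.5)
Your architecture is sound: an $\eps$-net with $\eps=1/m$, scalar Bernstein plus a union bound over the $(1+2m)^{2d}$ net pairs, the margin inequality $\abth{\indict{\bw^\top\bu\ge0}-\indict{\bw^\top\bu'\ge0}}\le\indict{\abth{\bw^\top\bu'}\le\ltwonorm{\bw}\eps}$, a second Bernstein bound for the empirical margin fractions, and continuity of $K^{(0)}$. One step, however, fails as written. You cannot get $\max_{r\le m}\ltwonorm{\bw_r}\le B\kappa\sqrt d$ with an absolute $B$ at confidence $1-O(\delta)$. Gaussian norm concentration gives only $\kappa(\sqrt d+\sqrt{2\log(m/\delta)})$, which is not $O(\kappa\sqrt d)$ when $\log(m/\delta)\gg d$ (fixed $d$ with $m\to\infty$, or small $\delta$). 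The hypothesis $m/\log m\ge d$ does not help: it says $\log m\le m/d$, an inequality in the wrong direction. Moreover, the main inequality is claimed for every $m$ and every $\delta\in(0,1)$, where that hypothesis is not assumed.

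The fix is easy and simplifies the argument. The event $\set{\abth{\bw^\top\bu'}\le\ltwonorm{\bw}\eps}$ depends only on the direction $\bw/\ltwonorm{\bw}\sim\Unif{\unitsphere{d-1}}$, so no norm control is needed. Its probability is $\frac{\omega_{d-2}}{\omega_{d-1}}\int_{-\eps}^{\eps}(1-t^2)^{(d-3)/2}\diff t\le 2B\sqrt d\,\eps$ with $B$ absolute, because $\frac{\omega_{d-2}}{\omega_{d-1}}=\frac{\Gamma(d/2)}{\sqrt\pi\,\Gamma((d-1)/2)}\asymp\sqrt d$. Alternatively, keep the $\sqrt{2\log(m/\delta)}$: it is multiplied by $\eps=1/m$ and is therefore dominated by $\sqrt{2\log((1+2m)^{2d}/\delta)/m}$.

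With this change your bookkeeping closes to exactly $C_1$. Use the variance bound $\frac14$, and note the claim is trivial when $x\defeq\log\frac{(1+2m)^{2d}}{\delta}\ge m$, since then $C_1>1$. The surplus linear terms and the constants from splitting $\delta$ are then absorbed into the slack of $\sqrt{2x/m}$ and $6/\sqrt m$. Also, your $O(\sqrt\eps)$ H\"older bound for $K^{(0)}$ is more than you need: $K^{(0)}$ is $\frac{1}{2\pi}$-Lipschitz in the angle, so it changes by at most $\eps/2$ across a cell.

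On comparison: the paper does not prove this statement; it imports it from the cited works. The shape of $C_1$ is $\sqrt{2x/m}+\frac{7x}{3m}$ with $x=\log\frac{(1+2m)^{2d}}{\delta}$, and $C_2$ follows the same pattern. This matches the case $\alpha=\frac12$ of the Bartlett--Bousquet--Mendelson (Talagrand-type) bound $\sup_f(Pf-P_mf)\le 3\,\mathbb{E}R_m\cF+\sqrt{2rx/m}+\frac{7(b-a)x}{3m}$. That suggests the cited proof applies this inequality to the class $\set{h(\cdot,\bu,\bv)}$ restricted to each cell of the net and union-bounds over the $(1+2m)^{2d}$ cells. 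It would then bound each cell's Rademacher complexity by roughly $(1+2B\sqrt d)/\sqrt m$ through the same margin probability you use. That route handles the empirical and population discretization errors in one supremum and never invokes continuity of $K^{(0)}$, consistent with the paper's remark that its uniform convergence avoids H\"older continuity of the kernel. Yours needs only scalar Bernstein, at the price of a separate concentration step for the margin fractions and the (here trivial) continuity of $K^{(0)}$.
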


\begin{theorem}
[{\citep[Theorem 6.1]{Yang2025-generalization-two-layer-regression}},{\citep[Theorem VI.8]{yang2024gradientdescentfindsoverparameterized}}]
\label{theorem:V_R}
Let ${\bW(0)} = \set{{{\bbw_r(0)}}}_{r=1}^m$, where each ${{\bbw_r(0)}} \sim \cN(\bzero,\kappa^2 \bI_d)$ for $r \in [m]$. Suppose $\eta \lsim 1$, $m \gsim 1$. Then for any $\delta \in (0,1)$, with probability at least $1-\delta$ over ${\bW(0)}$,
\bal\label{eq:sup-hat-V_R}
&\sup_{\bx \in \cX}\abth{\hat v_R({\bW(0)},\bx)-\frac{2R}{\sqrt {2\pi} \kappa}} \le C_2(m,d,\delta),
\eal%
where
\bal\label{eq:C2}
C_2(m,d,\delta) \defeq 3 \sqrt{\frac{d}{\kappa}} m^{-\frac 15} T^{\frac 12}
+ \sqrt{\frac{2{\log{\frac {(1+2{\sqrt m})^{d}} \delta}}}{m}} + \frac{7 {\log{\frac {(1+2{\sqrt m})^{d}} \delta}} }{3m}.
\eal
In addition, when $m \gsim n^{2/d}$, $m/\log m \ge d$, and $\delta \asymp 1/n$,
$C_2(m,d,\delta) \lsim  \sqrt{d} m^{-\frac 15} T^{\frac 12}$.
\end{theorem}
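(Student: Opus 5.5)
The plan is to prove the bound first at each point of a finite net on the sphere $\cX$ by a Bernstein-type concentration argument, and then pass to the whole sphere by a monotonicity/sandwich argument on the indicator $v_R$. The sandwich enlarges or shrinks the window $R$ slightly. The price of that enlargement, together with the handful of neurons with atypically large norm, produces the extra term $3\sqrt{d/\kappa}\,m^{-1/5}T^{1/2}$ in $C_2$.

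\emph{Step 1 (fixed point).} Fix $\bx \in \cX$. Then $\bbw_r(0)^\top \bx \sim \cN(0,\kappa^2)$, so
\[
p_R \defeq \Expect{}{v_R(\bbw_r(0),\bx)} = \Prob{\abth{\bbw_r(0)^\top\bx}\le R} \in \Bigl[\tfrac{2R}{\sqrt{2\pi}\kappa}-\cO\bigl(\tfrac{R^3}{\kappa^3}\bigr),\, \tfrac{2R}{\sqrt{2\pi}\kappa}\Bigr].
\]
This quantity does not depend on $\bx$, and the cubic correction is negligible since $R\lsim \eta T/\sqrt m$. The summands $v_R(\bbw_r(0),\bx)\in\{0,1\}$ are i.i.d. with variance at most $p_R\le 1$. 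Bernstein's inequality then gives, with probability $1-\delta'$,
\[
\abth{\hat v_R(\bW(0),\bx)-p_R}\le \sqrt{\frac{2\log(2/\delta')}{m}}+\frac{2\log(2/\delta')}{3m}.
\]

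\emph{Step 2 (net and union bound).} Take an $\eps$-net $N_\eps$ of $\cX$ with $\eps = 1/\sqrt m$, so that $\abth{N_\eps}\le (1+2\sqrt m)^d$. Apply Step 1 with $\delta'=\delta/\abth{N_\eps}$ at every net point. Do this simultaneously for the two enlarged and shrunk windows $R\pm\eps'$, where $\eps'$ is a slack parameter to be chosen. This produces the $\log\frac{(1+2\sqrt m)^d}{\delta}$ terms in $C_2$.

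\emph{Step 3 (extension off the net).} Let $\bx\in\cX$ and let $\bx'\in N_\eps$ with $\ltwonorm{\bx-\bx'}\le\eps$. Then $\abth{\bbw_r^\top\bx-\bbw_r^\top\bx'}\le\ltwonorm{\bbw_r}\eps$. So for every neuron with $\ltwonorm{\bbw_r(0)}\le \eps'/\eps$ we have
\[
v_{R-\eps'}(\bbw_r,\bx')\le v_R(\bbw_r,\bx)\le v_{R+\eps'}(\bbw_r,\bx').
\]
The remaining neurons, those with $\ltwonorm{\bbw_r(0)}>\eps'/\eps$, contribute at most their empirical fraction. I bound that fraction by a chi-square tail on $\ltwonorm{\bbw_r(0)}^2/\kappa^2$ (mean $d$), followed by one more Bernstein/Markov step. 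Combining with Step 2 gives
\[
\abth{\hat v_R(\bW(0),\bx)-\tfrac{2R}{\sqrt{2\pi}\kappa}} \le \frac{2\eps'}{\sqrt{2\pi}\kappa} + \textup{(tail fraction)} + \textup{(Bernstein terms)},
\]
uniformly over $\bx\in\cX$.

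\emph{Step 4 (choosing $\eps'$).} The main obstacle is choosing $\eps'$ so that the two error sources balance to $3\sqrt{d/\kappa}\,m^{-1/5}T^{1/2}$. These are the widening cost $\eps'/\kappa$ and the probability that $\ltwonorm{\bbw_r(0)}\gsim \eps'\sqrt m$, together with its use of $R\asymp \eta c_{\bu}T/\sqrt m$. I would try $\eps'$ of order $\sqrt{d\kappa}\,m^{-1/5}T^{1/2}$ first, and verify that the tail probability is then of the same or smaller order for $m\gsim 1$ and $\eta\lsim1$.

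The final simplification, $C_2\lsim\sqrt d\,m^{-1/5}T^{1/2}$ when $m\gsim n^{2/d}$, $m/\log m\ge d$ and $\delta\asymp 1/n$, then follows. In that regime $\log\frac{(1+2\sqrt m)^d}{\delta}\lsim d\log m$, and the resulting $\sqrt{d\log m/m}$ term is dominated by $\sqrt d\, m^{-1/5}$.
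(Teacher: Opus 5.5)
The paper does not prove this statement; it imports it from the cited works. The shape of $C_2$ (a net of $(1+2\sqrt{m})^d$ points at scale $1/\sqrt{m}$, plus Bernstein-type terms) suggests the same skeleton you propose: net, Bernstein bound, union bound, then a sandwich argument off the net. Steps~1--3 and the final simplification are sound.

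Step~4 is not a genuine obstacle. Since $\ltwonorm{\bbw_r(0)}^2/\kappa^2\sim\chi^2_d$, any threshold $\eps'\sqrt{m}$ that exceeds $\kappa\sqrt{d}$ by a factor growing with $m$ makes the population tail tiny. A one-sided Bernstein bound on the empirical tail fraction then adds only $\cO(\log(1/\delta)/m)$, which fits in the slack between your $\tfrac{2}{3}$ and the $\tfrac{7}{3}$ in $C_2$. Your choice $\eps'=\sqrt{2\pi d\kappa}\,m^{-1/5}T^{1/2}$ works, with a widening cost of $2\sqrt{d/\kappa}\,m^{-1/5}T^{1/2}$. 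Even $\eps'\sqrt{m}\asymp\kappa(\sqrt{d}+\sqrt{\log m})$ would do, with a widening cost of order $\sqrt{(d+\log m)/m}$. Moreover, the upper half is vacuous unless $3\sqrt{d/\kappa}\,m^{-1/5}T^{1/2}<1$, and in that regime the constants that splitting $\delta$ adds inside the logarithms are easily absorbed.

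The step that does not hold as written is your claim that the cubic correction is negligible because $R\lsim\eta T/\sqrt{m}$. That correction enters only the lower half, $\frac{2R}{\sqrt{2\pi}\kappa}-\hat v_R(\bW(0),\bx)\le C_2$. Under the stated hypotheses alone ($\eta\lsim 1$, $m\gsim 1$, nothing tying $T$ to $m$), the lower half is actually false. For fixed $m$ and $T\to\infty$, its left side is at least $\frac{2\eta c_{\bu}T}{\sqrt{2\pi}\kappa\sqrt{m}}-1$, which is linear in $T$, while $C_2$ grows only like $T^{1/2}$.

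You need an explicit regime condition such as $m\gsim T^{5/3}$, which is exactly what Theorem~\ref{theorem:good-random-initialization} imposes where this result is used. Under it, $\frac{2R}{\sqrt{2\pi}\kappa}\le 3\sqrt{d/\kappa}\,m^{-1/5}T^{1/2}$ for a suitable constant, so the lower half follows at once from $\hat v_R\ge 0$ and no expansion of $p_R$ is needed. The upper half is the only direction the paper ever uses, and it needs no such condition, because $p_{R+\eps'}\le\frac{2(R+\eps')}{\sqrt{2\pi}\kappa}$ holds exactly.
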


\subsubsection{Proof of
Theorem~\ref{theorem:bounded-NN-class}}

We prove Theorem~\ref{theorem:bounded-NN-class} in this subsection. The proof requires the following theorem, Lemma~\ref{lemma:empirical-loss-convergence}, about our main result about the optimization of the network (\ref{eq:two-layer-nn}). Lemma~\ref{lemma:empirical-loss-convergence} states that with high probability over the random noise $\bw$, the weights of the network $\bW(t)$ obtained right after the $t$-th step of GD
using Algorithm~\ref{alg:GDP} belongs to
$\cW(\bS,\bW(0),T)$. Furthermore, every weighing vector $\bw_r$ has bounded distance to the initialization $\bw_r(0)$.
The proof of Lemma~\ref{lemma:empirical-loss-convergence} is
based on
Lemma~\ref{lemma:yt-y-bound},
Lemma~\ref{lemma:empirical-loss-convergence-contraction},
and Lemma~\ref{lemma:weight-vector-movement} deferred to
Section~\ref{sec:lemmas-main-results} of the appendix.

\begin{lemma}\label{lemma:empirical-loss-convergence}
Suppose $\delta \in (0,1/2)$,
\bal\label{eq:m-cond-empirical-loss-convergence}
m \gsim T^{\frac {15}{2}} d^{\frac 52}/{\tau^5},
\eal
the neural network $f(\cW(t),\cdot)$ trained by GDP
using Algorithm~\ref{alg:GDP} with the constant learning rate $\eta = \Theta(1) \in (0,1)$, the random initialization $\bW(0) \in \cW_0$. Then
for every $\delta \in (0,1/2)$
with probability at least
$1-2\delta-\exp\pth{-\Theta(n)}$ over the random training features $\bS$ and the random noise $\bw$,
$\bW(t) \in \cW(\bS,\bW(0),T)$ for every $t \in [T]$.
Moreover, for every $t \in [0,T]$, $\bu(t) = \bv(t) + \be(t)$ where
$\bu(t) = \hat \by(t) -  \by$, $\bv(t) \in \cV_t$,
$\be(t) \in \cE_{t,\tau}$,  $\ltwonorm{\bu(t)} \le c_{\bu} \sqrt n$, and
$\ltwonorm{\bbw_r(t) - \bbw_r(0)} \le R$.
\end{lemma}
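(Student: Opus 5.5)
We argue by induction on $t$ for Lemma~\ref{lemma:empirical-loss-convergence}, in the standard NTK-type way but carrying the projection $\bPr$ through every step. The inductive hypothesis at step $t$ is: for all $t'\le t$, $\ltwonorm{\bbw_r(t')-\bbw_r(0)}\le R$ for all $r\in[m]$; $\bu(t')=\bv(t')+\be(t')$ with $\bv(t')\in\cV_{t'}$ and $\be(t')\in\cE_{t',\tau}$; and $\ltwonorm{\bu(t')}\le c_{\bu}\sqrt n$. The base case $t=0$ holds because the symmetric initialization gives $\hat\by(0)=\bzero$. Hence $\bu(0)=-f^*(\bS)-\bw$, and we may take $\be(0)=-\bw$, i.e. $\bbe_1=-\bw$ and $\bbe_2=\bzero$. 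We then need $\ltwonorm{f^*(\bS)}+\ltwonorm{\bw}\le c_{\bu}\sqrt n$. This holds with probability $1-\exp(-\Theta(n))$ by two facts: sub-Gaussian norm concentration gives $\ltwonorm{\bw}\lsim\sigma_0\sqrt n$, and $\supnorm{f^*}\lsim\gamma_0$ (since $K(\bx,\bx)\le 1$). This is where $\exp(-\Theta(n))$ and the $\delta$ terms in the probability come from.

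For the inductive step, we first write the one-step recursion. Combining the updates (\ref{eq:GD-two-layer-nn}) with the network form (\ref{eq:two-layer-nn}) gives
\[
\bu(t+1)=\pth{\bI_n-\eta\bH(t)\bPr}\bu(t)+\bE(t),
\]
where $\bH(t)=\frac1n\bZ_{\bS}(t)^\top\bZ_{\bS}(t)+\frac1{nm}\bF\bF^\top$, and $\bE(t)$ is the ReLU linearization residual from neurons whose activation pattern flips.

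We then split $\bH(t)-\bK_n$ into two parts:
\begin{itemize}
\item $\bZ(t)$ versus $\bZ(0)$: bounded via $\hat v_R$ using (\ref{eq:good-initialization-sup-hat-V_R}), because a pattern can only flip when $\abth{\bbw_r(0)^\top\bbx_i}\le R$;
\item $\bH(0)$ versus $\bK_n$: bounded by (\ref{eq:good-initialization-sup-hat-h}). Here the augmented feature supplies exactly the $K^{(0)}$ component and $\bZ(0)$ supplies $K^{(1)}$.
\end{itemize}
Both parts have operator norm $\lsim\sqrt{d}m^{-1/5}T^{1/2}$, and $\ltwonorm{\bE(t)}$ is bounded in the same way times $\eta\ltwonorm{\bu(t)}$.

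Substituting $\bu(t)=\bv(t)+\be(t)$, the exact term propagates as $(\bI-\eta\bK_n\bPr)\bv(t)\in\cV_{t+1}$, and the noise part $\bbe_1$ propagates the same way. All perturbation terms are absorbed into $\bbe_2$. Unrolling gives
\[
\ltwonorm{\bbe_2(t)}\le\sum_{t'<t}\norm{\bI-\eta\bK_n\bPr}{2}^{t-t'-1}\eta\,\Theta\pth{\sqrt{d}m^{-1/5}T^{1/2}}c_{\bu}\sqrt n .
\]
Since $\bK_n$ and $\bPr$ commute (they share eigenvectors $\bU$) and $\eta\hlambda_1<1$, the matrix $\bI-\eta\bK_n\bPr$ is PSD with norm at most $1$. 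So $\ltwonorm{\bbe_2(t)}\lsim T\sqrt d m^{-1/5}T^{1/2}\sqrt n$, and requiring this to be $\le\tau\sqrt n$ gives $m\gsim T^{15/2}d^{5/2}/\tau^5$, which is exactly (\ref{eq:m-cond-empirical-loss-convergence}).

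The remaining invariants follow from the same contraction. It gives $\ltwonorm{\bv(t)}\le\ltwonorm{f^*(\bS)}$ and $\ltwonorm{\bbe_1}\le\ltwonorm{\bw}$, hence $\ltwonorm{\bu(t)}\le c_{\bu}\sqrt n$ with $c_{\bu}=\Theta(\gamma_0)+\sigma_0+\tau+1$. The weight movement is
\[
\ltwonorm{\bbw_r(t+1)-\bbw_r(t)}\le\frac{\eta}{n\sqrt m}\,\sqrt n\,\ltwonorm{\bPr\bu(t)}\le\frac{\eta c_{\bu}}{\sqrt m},
\]
and summing over $T$ steps gives $\le R$. Membership $\bW(t)\in\cW(\bS,\bW(0),T)$ is then immediate from the definition (\ref{eq:weights-nn-on-good-training}).

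The main obstacle is controlling the perturbation $\bH(t)-\bK_n$ uniformly over all $t\le T$ using only the $\cW_0$ guarantees, with no per-sample eigenvalue lower bound. We must make sure the exponent bookkeeping ($m^{-1/5}$ from $C_2$ against the $T$ accumulated steps) closes the induction without circularity. The plan is to fix the bound $R$ first and then verify it a posteriori.
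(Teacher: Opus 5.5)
Your proposal is correct and follows essentially the paper's argument. You use induction on the linearized recursion $\bu(t+1)=(\bI_n-\eta\bK_n\bPr)\bu(t)+\bE(t+1)$. The $C_1$-type kernel error and the $\hat v_R$-type pattern-flip errors are absorbed into $\bbe_2$ and accumulated linearly over $T$ steps, using $\ltwonorm{\bI_n-\eta\bK_n\bPr}\le 1$, which yields exactly the width condition. The per-step movement bound $\eta c_{\bu}/\sqrt m$ then gives $R$.

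The one real difference is how you bound $\ltwonorm{\bv(t)}$. You get $\ltwonorm{\bv(t)}\le\ltwonorm{f^*(\bS)}\le\gamma_0\sqrt n$ directly from the contraction and the reproducing property. The paper instead goes through Lemma~\ref{lemma:yt-y-bound} and hence Lemma~\ref{lemma:residue-f-star-low-rank}, which is where its $2\delta$ comes from. Your route is simpler and holds with probability $1-\exp(-\Theta(n))$. So the $\delta$ terms do not arise in your argument, contrary to your remark; this is harmless, since the stronger probability implies the stated one.
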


\begin{proof}
{\textbf{\textup{\hspace{-3pt}of
Theorem~\ref{theorem:bounded-NN-class}}}.}
In this proof we abbreviate $f_t$ as $f$ and $\bW(t)$ as $\bW$.
It follows from Lemma~\ref{lemma:empirical-loss-convergence}
and its proof that conditioned on an event $\Omega$ with probability at
least $1-2\delta-\exp\pth{-\Theta(n)}$,
$f \in \cFnn(\bS,\bW(0),T)$ with
$\bW(0) \in \cW_0$. Moreover, $f = f(\cW,\cdot)$ with $\bW = \set{\bbw_r}_{r=1}^m \in \cW(\bS,\bW(0),T)$, and $\vect{\bW} = \vect{\bW_{\bS}} = \vect{\bW(0)} - \sum_{t'=0}^{t-1} \eta/n \cdot \bZ_{\bS}(t') \bu(t')$ for some $t \in [T]$, where $\bu(t') \in \RR^n, \bu(t') = \bv(t') + \be(t')$ with $\bv(t') \in \cV_{t'}$ and $\be(t') \in \cE_{t',\tau}$ for all $t' \in [0,t-1]$. It also follows from Lemma~\ref{lemma:empirical-loss-convergence}
that conditioned on $\Omega$, $\ltwonorm{\bbw_r(t) - \bbw_r(0)} \le R$ for all $t \in [T]$.

$\bbw_r$ is expressed as
\bal\label{eq:bounded-Linfty-function-class-wr}
\bbw_r = \bbw_{\bS,r}(t) &= \bbw_r(0) - \sum_{t'=0}^{t-1} \frac{\eta}{n} \bth{\bZ_{\bS}(t')}_{[(r-1)d+1:rd]} \bPr\bu(t'),
\eal%
where the notation $\bbw_{\bS,r}$ emphasizes that $\bbw_r$ depends on the training features $\bS$.
We define the event
\bals
&E_r(R) \defeq \set{ \abth{ \bbw_r(0)^\top \bx} \le R },
\quad
\bar E_r(R) \defeq \set{ \abth{ \bbw_r(0)^\top \bx} > R },
\quad r\in [m].
\eals%
We now approximate $ f(\cW,\bx)$ by $g(\bx) \defeq \frac{1}{\sqrt m} \sum_{r=1}^m a_r
\indict{\bbw_r(0)^{\top} \bx \ge 0} \bbw_r^\top \bx$.
 We have
\bal\label{eq:bounded-Linfty-function-class-seg1}
&\abth{f(\cW,\bx) - g(\bx)}=\frac{1}{\sqrt m} \abth{\sum\limits_{r=1}^m a_r \relu{\bbw_r^\top \bx} - \sum_{r=1}^m a_r \indict{\bbw_r(0)^{\top} \bx \ge 0} \bbw_r^\top \bx}   \nonumber \\
&\le \frac{1}{\sqrt m}  \sum_{r=1}^m  \abth{ a_r \pth{ \indict{E_r(R) } + \indict{\bar E_r(R) }}  \pth{ \relu{\bbw_r^\top \bx} - \indict{\bbw_r(0)^{\top} \bx \ge 0} \bbw_r^\top \bx } } \nonumber \\
&=\frac{1}{\sqrt m} \sum_{r=1}^m   \indict{E_r(R)} \abth{ \relu{\bbw_r^\top \bx} - \indict{\bbw_r(0)^{\top} \bx \ge 0} \bbw_r^\top \bx  }\nonumber \\
&=\frac{1}{\sqrt m} \sum_{r=1}^m   \indict{E_r(R)}\abth{ \relu{\bbw_r^\top \bx} - \relu{\bbw_r(0)^\top \bx}   - \indict{\bbw_r(0)^{\top} \bx \ge 0} (\bbw_r-\bbw_r(0))^\top \bx } \nonumber \\
&\le \frac{2R }{\sqrt m} \sum_{r=1}^m   \indict{E_r(R)},
\eal%
where first inequality follows from $\indict{\bar E_r(R)} \pth{ \relu{\bbw_r^\top \bx} - \indict{\bbw_r(0)^{\top} \bx \ge 0} \bbw_r^\top \bx } = 0$.
Plugging $R = \frac{\eta c_{\bu}  T }{\sqrt m}$ in (\ref{eq:bounded-Linfty-function-class-seg1}), since $\bW(0) \in \cW_0$, we have
\bal\label{eq:bounded-Linfty-function-class-seg2}
&\sup_{\bx \in \cX} \abth{f(\cW,\bx) - g(\bx)} \le  2\eta c_{\bu} T \cdot \frac 1m \sum_{r=1}^m
 \indict{E_r(R)}  \le
2 \eta c_{\bu} T \pth{\frac{2R}{\sqrt {2\pi} \kappa} + C_2(m/2,d,1/n)}.
\eal
Using (\ref{eq:bounded-Linfty-function-class-wr}), $g(\bx)$
is expressed as
\bal\label{eq:bounded-Linfty-function-class-seg3}
&g(\bx) \nonumber \\
&= \frac{1}{\sqrt m} \sum_{r=1}^m a_r \sigma(\bbw_r(0)^\top \bx) {-} \sum_{t'=0}^{t-1} \frac{1}{\sqrt m}
\sum\limits_{r=1}^m  \indict{\bbw_r(0)^{\top} \bx \ge 0}\pth{ \frac{\eta}{n} \bth{\bZ_{\bS}(t')}_{[(r-1)d+1:rd]}
 \bPr\bu(t')
 }^\top \bx  \nonumber \\
&\stackrel{\circled{1}}{=} -\sum_{t'=0}^{t-1}
\underbrace{\frac{\eta}{nm}
\sum\limits_{r=1}^m  \indict{\bbw_r(0)^{\top} \bx \ge 0}
\sum\limits_{j=1}^n\indict{\bbw_r(t')^{\top} \bbx_j \ge 0}
\bth{\bPr \bu(t')}_j\bbx_j^{\top} \bx}_{\defeq G_{t'}(\bx)},
\eal
where $\circled{1}$ follows from the fact that $\frac{1}{\sqrt m} \sum_{r=1}^m a_r  \sigma(\bbw_r(0)^\top \bx) = f(\cW(0),\bx) = 0$ due to the particular initialization of the two-layer NN
(\ref{eq:two-layer-nn}). For each $G_{t'}$ in the RHS of
 (\ref{eq:bounded-Linfty-function-class-seg3}), we have
\bal\label{eq:bounded-Linfty-function-class-Gt}
&G_{t'}(\bx) \stackrel{\circled{2}}{=} \frac{\eta}{nm}
\sum\limits_{r=1}^m  \indict{\bbw_r(0)^{\top} \bx \ge 0}
\sum\limits_{j=1}^n \pth{d_{t',r,j}+\indict{\bbw_r(0)^{\top} \bbx_j \ge 0}} \bth{\bPr \bu(t')}_j\bbx_j^{\top} \bx
\nonumber \\
&\stackrel{\circled{3}}{=}
\frac{\eta}{n}
\sum\limits_{j=1}^n K(\bx,\bbx_j)\bth{\bPr \bu(t')}_j +
\underbrace{\frac{\eta}{n} \sum\limits_{j=1}^n q_j \bth{\bPr \bu(t')}_j}_{\defeq E_1(\bx)} \nonumber \\
&\hspace{1cm}\phantom{=}+ \underbrace{\frac{\eta}{nm}
\sum\limits_{r=1}^m  \indict{\bbw_r(0)^{\top} \bx \ge 0}
 \sum\limits_{j=1}^nd_{t',r,j} \bth{\bPr \bu(t')}_j\bbx_j^{\top} \bx}_{\defeq E_2(\bx)}.
\eal
where $d_{t',r,j} \defeq \indict{\bbw_r(t')^{\top} \bbx_j \ge 0}
- \indict{\bbw_r(0)^{\top} \bbx_j \ge 0}$
in $\circled{2}$, and
$q_j \defeq \hat h(\bW(0),\bbx_j,\bx) - K(\bbx_j,\bx)$
for all $j \in [n]$ in $\circled{3}$.
We now analyze each term on the RHS of (\ref{eq:bounded-Linfty-function-class-Gt}).
Let $h(\cdot,t') \colon \cX \to \RR$ be defined by
$h(\bx,t') \defeq \frac{\eta}{n} \sum\limits_{j=1}^n K(\bx,\bbx_j) \bth{\bPr \bu(t')}_j$,
then $h(\cdot,t') \in \cH_{\bS,r_0}$ for each $t' \in [0,t-1]$. We further define
\bal\label{eq:bounded-Linfty-function-class-h}
h_t(\cdot) \defeq \sum_{t'=0}^{t-1} h(\cdot,t') \in \cH_K,
\eal
Since $\bW(0) \in \cW_0$, $q_j \le C_1(m/2,d,1/n)$ for all $j \in [n]$ with
$C_1(m/2,d,1/n)$ defined in (\ref{eq:C1}). Moreover,
$\bu(t') \le c_{\bu} \sqrt n$ with high probability,  so that we have
\bal\label{eq:bounded-Linfty-function-class-E1-bound}
\supnorm{ E_1} = \supnorm{\frac{\eta}{n} \sum\limits_{j=1}^n q_j \bu_j(t')} &\le \frac{\eta}{n} \ltwonorm{\bu(t')}  \sqrt{n} C_1(m/2,d,1/n)
\le \eta c_{\bu} C_1(m/2,d,1/n) .
\eal
We now bound the last term on the RHS of (\ref{eq:bounded-Linfty-function-class-Gt}).
Define $\bX' \in \RR^{d \times n}$ with its $j$-column being ${\bX'}^{[j]}
= \frac{1}{m} \sum_{r=1}^m \indict{\bbw_r(0)^{\top} \bx \ge 0}
d_{t',r,j}\bbx_j$ for all $j \in [n]$, then
$E_2(\bx)= \frac{\eta}{n}\pth{\bX' \bPr \bu(t')}^{\top}\bx$.

We need to derive the upper bound for $\ltwonorm{\bX'}$. Because $\ltwonorm{\bbw_r(t') - \bbw_r(0)} \le R$, it follows that  $\indict{\bbw_r(t')^{\top}
\bbx_j \ge 0} = \indict{\bbw_r(0)^{\top} \bbx_j \ge 0}$ when $\abth{\bbw_r(0)^{\top} \bbx_j} > R$ for all $j \in [n]$. Therefore,
\bals
\abth{d_{t',r,j'}} = \abth{ \indict{\bbw_r(t')^{\top} \bbx_j \ge 0}
- \indict{\bbw_r(0)^{\top} \bbx_j \ge 0} } \le
\indict{\abth{\bbw_r(0)^{\top} \bbx_j } \le R},
\eals
and it follows that
\bal\label{eq:bounded-Linfty-function-class-E2-1}
\frac{\abth{ \sum\limits_{r=1}^m \indict{\bbw_r(0)^{\top} \bbx_i \ge 0}
d_{t',r,j}  } }{m}  &\le \frac{ \sum\limits_{r=1}^m \abth{d_{t',r,j}}  }{m}
\le \frac{ \sum\limits_{r=1}^m \indict{\abth{\bbw_r(0)^{\top} \bbx_j } \le R}  }{m} = \hat v_R(\bW(0),\bbx_j) \nonumber \\
&\le \frac{2R}{\sqrt {2\pi} \kappa} + C_2(m/2,d,1/n),
\eal
where $\hat v_R$ is defined by (\ref{eq:v-hat-v}), and the last inequality
follows from Theorem~\ref{theorem:V_R}.

It follows from (\ref{eq:bounded-Linfty-function-class-E2-1}) that
$\ltwonorm{\bX'} \le {\sqrt n}  \pth{\frac{2R}{\sqrt {2\pi} \kappa}
+ C_2(m/2,d,1/n)}$,
and we have
\bal\label{eq:bounded-Linfty-function-class-E2-bound}
\supnorm{ E_2(\bx)} &\le \frac{\eta}{n}\ltwonorm{\bX'}
\ltwonorm{\bPr}\ltwonorm{\bu(t')}
\ltwonorm{\bx} \le \eta  c_{\bu} \pth{\frac{2R}{\sqrt {2\pi} \kappa}
+ C_2(m/2,d,1/n)}.
\eal
Combining (\ref{eq:bounded-Linfty-function-class-Gt}),
(\ref{eq:bounded-Linfty-function-class-E1-bound}), and
(\ref{eq:bounded-Linfty-function-class-E2-bound}), for
any $t' \in [0,t-1]$,
\bal\label{eq:bounded-Linfty-function-class-Gt-ht-bound}
\sup_{\bx \in \cX} \abth{G_{t'}(\bx)-h(\bx,t')}
&\le \supnorm{E_1} + \supnorm{E_2} \nonumber \\
&\le \eta c_{\bu} \pth{ C_1(m/2,d,1/n)
+\frac{2R}{\sqrt {2\pi} \kappa}
+ C_2(m/2,d,1/n) }.
\eal
Define $e_t(\bx) = f(\cW,\bx) - h_t(\bx)$ for $\bx \in \cX$. It then follows from (\ref{eq:bounded-Linfty-function-class-seg2}),
(\ref{eq:bounded-Linfty-function-class-seg3}), and
(\ref{eq:bounded-Linfty-function-class-Gt-ht-bound})
that
\bal\label{eq:bounded-Linfty-function-class-f-h-bound}
&\supnorm{e_t} \le \sup_{\bx \in \cX} \abth{f(\cW,\bx) - g(\bx)}
+\sup_{\bx \in \cX} \abth{g(\bx)-h_t(\bx)} \nonumber \\
&\le  \sup_{\bx \in \cX} \abth{f(\cW,\bx) - g(\bx)} + \sum\limits_{t'=0}^{t-1}
\sup_{\bx \in \cX} \abth{G_{t'}(\bx)-h(\bx,t')} \nonumber \\
& \stackrel{\circled{4}}{\le}
2 \eta c_{\bu} T \pth{\frac{2R}{\sqrt {2\pi} \kappa} + C_2(m/2,d,1/n)}
+ \eta c_{\bu} T \pth{ C_1(m/2,d,1/n)
+\frac{2R}{\sqrt {2\pi} \kappa}
+ C_2(m/2,d,1/n) }
\nonumber \\
&\le \eta c_{\bu} T \pth{
C_1(m/2,d,1/n)
+3\pth{\frac{2R}{\sqrt {2\pi} \kappa}
+ C_2(m/2,d,1/n)} }
\defeq \Delta_{m,n,\eta,T},
\eal
where $\circled{4}$ follows from
(\ref{eq:bounded-Linfty-function-class-seg2}) and
(\ref{eq:bounded-Linfty-function-class-Gt-ht-bound}).
We now give an estimate for $\Delta_{m,n,\eta,T}$.
Since $\bW(0) \in \cW_0$, it follows from
Theorem~\ref{theorem:good-random-initialization} that
\bals
\Delta_{m,n,\eta,T}
&\lsim {\sqrt {d}} m^{-\frac 15} T^{\frac 32}.
\eals
It follows that, for any $w \in (0,1)$, when
$m \gsim T^{\frac {15}{2}} d^{\frac 52}/{w^5}$,
we have $\Delta_{m,n,\eta,T} \le w$.

It follows from Lemma~\ref{lemma:bounded-Linfty-vt-sum-et} that
with probability at least $
1 - \delta -\exp\pth{-\Theta(r_0)}$ over
$\bS,\bw$
$\norm{h_t}{\cH_K} \le B_h$,
where $B_h$ is defined in (\ref{eq:B_h}),
and $\tau$ is required to satisfy $\tau \le 1/(\eta T)$.
Since $h(\bx,t') \in \cH_{\bS,r_0}$ for all $t' \in [0,t-1]$,
$h_t \in \cH_{\bS,r_0}$, so that $h_t \in \cH_K(B_h) \cap \cH_{\bS,r_0}$.
Lemma~\ref{lemma:empirical-loss-convergence} requires that
$m \gsim T^{\frac {15}{2}} d^{\frac 52}/{\tau^5}$. As a result,
we also have $m \gsim T^{\frac {25}{2}} d^{\frac 52}$.
\end{proof}

\subsubsection{Proof of Theorem~\ref{theorem:LRC-population-NN-eigenvalue}}
We prove Theorem~\ref{theorem:LRC-population-NN-eigenvalue} using
Theorem~\ref{theorem:bounded-NN-class}, Lemma~\ref{lemma:pupulartion-RC-low-rank-proj},  and Lemma~\ref{lemma:LRC-population-NN}.

\vspace{0.1in}
\begin{proof}
{\textbf{\textup{\hspace{-3pt}of
Theorem~\ref{theorem:LRC-population-NN-eigenvalue}}}.}
It follows from Lemma~\ref{lemma:empirical-loss-convergence}
and Theorem~\ref{theorem:bounded-NN-class} that for every $t \in [T]$, conditioned on an event $\Omega$ with probability at least
$1-3\delta-\exp\pth{-\Theta(n)}-\exp\pth{-\Theta(r_0)}$
over $\bS$ and $\bw$, we have $\bW(t) \in \cW(\bS,\bW(0),T)$,
and $f(\cW(t),\cdot) = f_t \in \cFnn(\bS,\bW(0),T)$.
Moreover, conditioned on the event $\Omega$, $f_t \in \cF(B_h,w,\bS,r_0)$, $f_t = h_t + e_t$ where $h_t \in \cH_{K}(B_h) \cap \cH_{\bS,r_0}$ and
$e_t  \in L^{\infty}$ with $\supnorm{e_t } \le w$.
We then derive the sharp upper bound for $\Expect{P}{(f_t-f^*)^2} $ by
applying Theorem~\ref{theorem:LRC-population} to the function class
$\cF = \set{F=\pth{f- f^*}^2 \colon f \in
\cF(B_h,w,\bS,r_0)   }$.

Since $B_0 \defeq {(B_h+\gamma_0) } + 1
\ge {(B_h+\gamma_0) } + w$, we have
$\supnorm{F} \le B^2_0$ with $F \in \cF$, so that
$\Expect{P}{F^2} \le B^2_0\Expect{P}{F}$.
Let $T(F) = B^2_0\Expect{P}{F}$ for $F \in \cF$. Then
$\Var{F} \le \Expect{P}{F^2} \le T(F) = B^2_0\Expect{P}{F}$.
We have
\bal\label{eq:LRC-population-NN-seg1}
& B^2_0 \cfrakR \pth{\set{F \in \cF \colon T(F) \le r}} =  B^2_0 \cfrakR
\pth{ \set{(f-f^*)^2 \colon  f \in \cF(B_h,w,\bS,r_0),\Expect{P}{(f-f^*)^2}
\le \frac r{B^2_0}}} \nonumber \\
&\stackrel{\circled{1}}{\le} 2B_0^3 \cfrakR \pth{\set{f-f^* \colon
  f\in \cF(B_h,w,\bS,r_0), \Expect{P}{(f-f^*)^2} \le \frac{r}{B_0^2}}} \nonumber \\
&\stackrel{\circled{2}}{\le}  2B_0^3 \sqrt{\log{\frac{2}{\delta}} } \cdot \Theta\pth{\frac{d^{k_0}}{n}} + 2B_0^2\sqrt{\frac{rr_0}{n}}  + 4B_0^3w.
\defeq \psi(r).
\eal
Here $\circled{1}$ is due to the contraction property of
Rademacher complexity in Theorem~\ref{theorem:RC-contraction}. $\circled{2}$ holds with probability at least $1-\delta$ over $\bS$, following from Lemma~\ref{lemma:pupulartion-RC-low-rank-proj}. $\psi$ is a sub-root function since it is nonnegative, nondecreasing and
$\psi(r)/{\sqrt r}$ is nonincreasing. Let $r^*$ be the fixed point of $\psi$, and $r$ be any nonnegative number such that $0 \le r \le r^*$. It follows from {\citet[Lemma 3.2]{bartlett2005}} that
$0 \le r \le \psi(r)$. Therefore, by the definition of $\psi$ in (\ref{eq:LRC-population-NN-seg1}),
we have
\bal\label{eq:LRC-population-NN-seg2}
r &\lsim
\frac {r_0}{n} +
 \sqrt{\log{\frac{2}{\delta}} } \cdot
\frac{d^{k_0}}{n}  + w
\lsim \sqrt{\log{\frac{2}{\delta}} } \cdot
\frac{d^{k_0}}{n} + w
\eal
since $r_0 = \Theta(d^{k_0})$ and $B_0 = \Theta(1)$.
It then follows from Theorem~\ref{theorem:LRC-population} that with probability at least
$1-\exp(-x)$
over the random training features $\bS$,
\bal\label{eq:LRC-population-NN-risk-E1-bound}
&\Expect{P}{(f_t-f^*)^2} - \frac{K_0}{K_0-1} \Expect{P_n}{(f_t-f^*)^2}-\frac{x\pth{11B_0^2+26B_0^2 K_0}}{n} \le \frac{704K_0}{B_0^2} \cdot B_0^4 r^*,
\eal
or
\bal\label{eq:LRC-population-NN-risk-E1-bound-simple}
&\Expect{P}{(f_t-f^*)^2} - 2 \Expect{P_n}{(f_t-f^*)^2} \lsim  r^* + \frac {x}n,
\eal
with $K_0 = 2$ in (\ref{eq:LRC-population-NN-risk-E1-bound}).
It follows from (\ref{eq:LRC-population-NN-seg2}) and (\ref{eq:LRC-population-NN-risk-E1-bound-simple}) that
\bals
&\Expect{P}{(f_t-f^*)^2} - 2 \Expect{P_n}{(f_t-f^*)^2} \lsim
\sqrt{\log{\frac{2}{\delta}} } \cdot
\frac{d^{k_0}}{n} + w + \frac {x}n,
\eals
which proves (\ref{eq:LRC-population-NN-bound-eigenvalue})
with $x = d^{k_0}$.
\end{proof}

\begin{proof}
{\textbf{\textup{\hspace{-3pt}of
Theorem~\ref{theorem:empirical-loss-bound}}}.}
We have
\bal\label{eq:empirical-loss-bound-seg1}
f_t(\bS) = f^*(\bS) + \bw + \bv(t) + \be(t),
\eal
where $\bv(t) \in \cV_{t}$, $\be(t) \in \cE_{t,\tau}$,
$\be(t) = \bbe_1(t) + \bbe_2(t)$ with $\bv(t) = -\pth{\bI_n-\eta\bK_n \bPr}^{t} f^*(\bS)$,
$\bbe_1(t) = -\pth{\bI_n-\eta\bK_n \bPr}^{t} \bw$
and $\ltwonorm{\bbe_2(t)} \le {\sqrt n} \tau$. Since $\hlambda_1 \in (0,1)$, we have $\eta \hlambda_1 \in (0,1)$ if $\eta \in (0,1)$.
We use the simplified
notation $\Proj_{\bUr} = \Proj_{\Span(\bUr)}$
and $\Proj_{\bUminusr} = \Proj_{\Span({\bUr})^{\perp}}$,
we then have
\bal\label{eq:bounded-by-empirical-loss-seg2}
f_t(\bS) - \by &= f_t(\bS) -
\Proj_{\bUr}\pth{f^*(\bS)+\bw} -
\Proj_{\bUminusr}\pth{f^*(\bS)+\bw}
\nonumber \\
&= \Proj_{\bUr}\pth{\bv(t)+\be(t)} +
\Proj_{\bUminusr}\pth{\bv(t)+\be(t)} \nonumber \\
&= \Proj_{\bUr}\pth{\bv(t)+\be(t)}
- \Proj_{\bUminusr}\pth{f^*(\bS))+\bw}
+ \Proj_{\bUminusr}(\bbe_2(t)).
\eal
It follows from (\ref{eq:bounded-by-empirical-loss-seg2})
that $f_t(\bS) - \Proj_{\bUr}\pth{f^*(\bS)+\bw} =\Proj_{\bUr}\pth{\bv(t)+\be(t)} + \Proj_{\bUminusr}(\bbe_2(t))$,
or equivalently,
\bal\label{eq:bounded-by-empirical-loss-seg3}
f_t(\bS) &= \Proj_{\bUr}(f_t(\bS)) + \Proj_{\Span(\bUr)^{\perp}}(f_t(\bS))
\eal
with
\bals
\Proj_{\bUr}(f_t(\bS)) &= \Proj_{\bUr}\pth{f^*(\bS) + \bv(t) + \bw + \be(t)}, \nonumber \\
\Proj_{\Span(\bUr)^{\perp}}(f_t(\bS)) &= \Proj_{\bUminusr}(\bbe_2(t)). \nonumber
\eals
It follows from (\ref{eq:bounded-by-empirical-loss-seg3}) that
\bsals
&\Expect{P_n}{(f_t-f^*)^2}
=\frac 1n \ltwonorm{f_t(\bS) - f^*(\bS)}^2
= \frac 1n  \ltwonorm{f_t(\bS) - \Proj_{\bUr}(f^*(\bS))
-\Proj_{\bUminusr}(f^*(\bS))}^2 \nonumber \\
&=\frac 1n \ltwonorm{\Proj_{\bUr}(\bv(t)+\bw+\be(t))}^2
+ \frac 1n \ltwonorm{\Proj_{\bUminusr}
(\bbe_2(t)-f^*(\bS))}^2 \nonumber \\
&\stackrel{\circled{1}}{\le} \frac 3n \sum\limits_{i=1}^{r_0} \pth{1-\eta \hlambda_i }^{2t}
\bth{{\bUr}^{\top} f^*(\bS)}_i^2 + \frac 3n \sum\limits_{i=1}^{r_0} \pth{1-
\pth{1-\eta \hlambda_i }^t}^2
\bth{{\bUr}^{\top} \bw}_i^2 \nonumber \\
&\phantom{=}+ \frac 2n\ltwonorm{\Proj_{\bUminusr}
(f^*(\bS))}^2 + 5\tau^2 \nonumber \\
&\stackrel{\circled{2}}{\le} \frac {3\gamma_0^2}{2e\eta t} + \frac 3n \ltwonorm{{\bUr}^{\top} \bw}^2 + \gamma_0^2 \log{\frac{2}{\delta}} \cdot \Theta\pth{\frac{ d^{k_0} }{n}}+  5\tau^2  \nonumber \\
 &\stackrel{\circled{3}}{\le}
 \Theta\pth{\frac {\gamma_0^2}{\eta t}} + \frac {3r_0(\sigma_0^2+1)}n
 + \gamma_0^2 \log{\frac{2}{\delta}} \cdot \Theta\pth{\frac{ d^{k_0} }{n}},
\esals
which completes the proof. Here $\circled{1}$ follows by Cauchy-Schwarz inequality and $\ltwonorm{\bbe_2(t)} \le {\sqrt n} \tau$, and $\circled{2}$ follows from Lemma~\ref{lemma:bounded-Ut-f-in-RKHS} since $f^* \in \cF^* \subseteq \cH_{K^{(r_0)}}(\gamma_0) \subseteq \cH_K(\gamma_0)$, Lemma~\ref{lemma:auxiliary-lemma-1},
and (\ref{eq:residue-f-star-low-rank}) in Lemma~\ref{lemma:residue-f-star-low-rank} which holds with probability at least $1-2\delta$.
It follows from the concentration inequality about quadratic forms of sub-Gaussian random variables in \citet{quadratic-tail-bound-Wright1973} that
\bals
\Prob{\ltwonorm{{\bUr}^{\top} \bw}^2 -
\Expect{}{\ltwonorm{{\bUr}^{\top} \bw}^2} > r_0}
\le \exp\pth{-\Theta(r_0)},
\eals
so that with probability at least $1-\exp\pth{-\Theta(r_0)}$,
\bal\label{eq:Ur-w-bound}
\ltwonorm{{\bUr}^{\top} \bw}^2 \le
\Expect{}{\ltwonorm{{\bUr}^{\top} \bw}^2} + r_0
\le \sigma_0^2 \tr{\bUr {\bUr}^{\top}} +r_0
= r_0(\sigma_0^2+1),
\eal
which leads to $\circled{3}$ with $\tau = \sqrt{d^{k_0}/n}$.


\end{proof}

\subsection{Proofs of the Lemmas Required for the Proofs in Section~\ref{sec:proofs-key-results}}
\label{sec:lemmas-main-results}

\begin{proof}
{\textbf{\textup{\hspace{-3pt}of
Lemma~\ref{lemma:empirical-loss-convergence}}}.}
First, when $m \gsim  T^{\frac {15}{2}} d^{\frac 52}/{\tau^5}$ with a proper constant, it can be verified that $\bE_{m,n,\eta,R} \le {\tau {\sqrt n}}/{T}$ where $\bE_{m,n,\eta,R}$ is defined by
(\ref{eq:empirical-loss-Et-bound-Em}) of
Lemma~\ref{lemma:empirical-loss-convergence-contraction}.
Also, Theorem~\ref{theorem:sup-hat-g}
and Theorem~\ref{theorem:V_R} hold when (\ref{eq:m-cond-empirical-loss-convergence})
holds.
We then use mathematical induction to prove this lemma. We will first prove that $\bu(t) = \bv(t) + \be(t)$ where $\bv(t) \in \cV_t$,
$\be(t) \in \cE_{t,\tau}$, and $\ltwonorm{\bu(t)} \le c_{\bu} \sqrt n$ for all $t \in [0,T]$.

When $t = 0$, we have
\bal\label{eq:empirical-loss-convergence-seg1}
\bu(0) = - \by &= \bv(0) + \be(0),
\eal
where $\bv(0) \defeq -f^*(\bS) = -\pth{\bI-\eta \bK_n \bPr}^0 f^*(\bS)$,
$\be(0) = -\bw = \bbe_1(0) + \bbe_2(0)$ with
$\bbe_1(0) = -\pth{\bI-\eta \bK_n \bPr} ^0 \bw$
and $\bbe_2(0) = \bzero$. Therefore,
$\bv(0) \in \cV_{0}$ and $\be(0) \in \cE_{0,\tau}$.
Also, it follows from the proof of Lemma~\ref{lemma:yt-y-bound}
that $\ltwonorm{\bu(0)} \le  c_{\bu}{\sqrt n}$ with probability at least
$1 -  \exp\pth{-\Theta(n)}$
over the random noise $\bw$.

Suppose that for all $t_1 \in[0,t]$ with $t \in [0,T-1]$, $\bu(t_1) = \bv(t_1) + \be(t_1)$ where $\bv(t_1) \in \cV_{t_1}$, and
$\be(t_1) = \bbe_1(t_1) + \bbe_2(t_1)$ with
$\bv(t_1) \in \cV_{t_1}$ and $\be(t_1) \in \cE_{t_1,\tau}$ for all $t_1 \in[0,t]$. Then it follows from Lemma~\ref{lemma:empirical-loss-convergence-contraction} that the recursion
$\bu(t'+1)  = \pth{\bI- \eta \bK_n \bPr }\bu(t')
 +\bE(t'+1)$ holds for all $t' \in [0,t]$.
 As a result, we have
\bal\label{eq:empirical-loss-convergence-seg5}
\bu(t+1)  &= \pth{\bI- \eta \bK_n \bPr }\bu(t) +\bE(t+1) \nonumber \\
 & = -\pth{\bI-\eta \bK_n \bPr}^{t+1} f^*(\bS)
 -\pth{\bI-\eta \bK_n }^{t+1} \bw +\sum_{t'=1}^{t+1}
 \pth{\bI-\eta \bK_n\bPr}^{t+1-t'} \bE(t')
\nonumber \\
&=\bv(t+1) + \be(t+1),
\eal
where $\bv(t+1)$ and $\be(t+1)$ are defined as
\bal\label{eq:empirical-loss-convergence-vt-et-def}
\bv(t+1) \defeq-\pth{\bI-\eta \bK_n\bPr}^{t+1} f^*(\bS)\in \cV_{t+1},
\eal
\bal\label{eq:empirical-loss-convergence-et-pre}
&\be(t+1) \defeq \underbrace{-\pth{\bI-\eta \bK_n \bPr }^{t+1} \bw}_{\bbe_1(t+1)}
+ \underbrace{ \sum_{t'=1}^{t+1}
 \pth{\bI-\eta \bK_n\bPr}^{t+1-t'} \bE(t') }_{\bbe_2(t+1)}.
\eal
We now prove the upper bound for $\bbe_2(t+1)$.
With $\eta \in (0, 2)$, we have $\ltwonorm{\bI - \eta \bK_n\bPr} \in (0,1)$.
It follows that
\bal\label{eq:empirical-loss-convergence-et-bound}
&\ltwonorm{\bbe_2(t+1)} \le \sum_{t'=1}^{t+1} \ltwonorm{\bI-\eta \bK_n\bPr}^{t+1-t'}\ltwonorm{\bE(t')}
\le  \tau {\sqrt n},
\eal
where the last inequality follows from the fact
that $\ltwonorm{\bE(t)} \le \bE_{m,n,\eta,R} \le {\tau {\sqrt n}}/{T}$ for all $t \in [T]$. It follows that $\be(t+1) \in \cE_{t+1,\tau}$.
Also, it follows from Lemma~\ref{lemma:yt-y-bound}
that
with probability at least
$1-2\delta-\exp\pth{-\Theta(n)}$
over $\bS$ and $\bw$,
\bals
\ltwonorm{\bu(t+1)} &\le \ltwonorm{\bv(t+1)} + \ltwonorm{\bbe_1(t+1)}
+\ltwonorm{\bbe_2(t+1)} \nonumber \\
&\le \pth{\frac{\gamma_0}{ \sqrt{2e\eta } } + \sigma_0+\tau+1} {\sqrt n} \le \ c_{\bu}{\sqrt n}.
\eals
The above inequality completes the induction step, which also completes the proof. It is noted that
$\ltwonorm{\bbw_r(t) - \bbw_r(0)} \le R$ holds for all $t \in [T]$ by Lemma~\ref{lemma:weight-vector-movement}.

\end{proof}

\begin{lemma}\label{lemma:yt-y-bound}
Let $0 \le t \le T$, $\bv = -\pth{\bI-\eta \bK_n \bPr}^{t} f^*(\bS)$,
 $\be = -\pth{\bI-\eta \bK_n \bPr}^{t} \bw$, and $\eta \in (0,1)$. Suppose $\delta \in (0,1/2)$, then with probability at least
$1-2\delta-\exp\pth{-\Theta(n)}$
over the random training features $\bS$ and the random noise $\bw$,
\bal\label{eq:yt-y-bound}
\ltwonorm{\bv} + \ltwonorm{\be} \le \pth{\Theta(\gamma_0)+\sigma_0+1} \cdot
{\sqrt n}.
\eal
\end{lemma}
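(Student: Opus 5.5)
We bound the two terms separately: $\bv$ deterministically given a bound on $\ltwonorm{f^*(\bS)}$, and $\be$ through the contraction property of $\bI-\eta\bK_n\bPr$ together with a norm bound on the noise. The key structural fact is that $\bK_n$ and $\bPr$ share the eigenvectors $\bU$. Indeed, $\bK_n\bPr = \bU\bSigma\bSigma^{(r_0)}\bU^\top$, so $\bI-\eta\bK_n\bPr = \bU\,\mathrm{diag}(1-\eta\hlambda_1,\ldots,1-\eta\hlambda_{r_0},1,\ldots,1)\,\bU^\top$. Since $\hlambda_i\in(0,1)$ and $\eta\in(0,1)$, every diagonal entry lies in $(0,1]$. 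Hence $\ltwonorm{(\bI-\eta\bK_n\bPr)^t}\le 1$ for all $t\ge0$. This gives $\ltwonorm{\bv}\le\ltwonorm{f^*(\bS)}$ and $\ltwonorm{\be}\le\ltwonorm{\bw}$.

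\emph{Noise term.} Because $\set{w_i}$ are i.i.d. sub-Gaussian with variance proxy $\sigma_0^2$, $\ltwonorm{\bw}^2=\sum_i w_i^2$ is a sum of independent sub-exponential variables with mean at most $\sigma_0^2$. The Hanson--Wright type inequality of \citet{quadratic-tail-bound-Wright1973}, applied with the identity matrix (or Bernstein's inequality), gives
\[
\Prob{\ltwonorm{\bw}^2 > (\sigma_0^2+1)n}\le\exp\pth{-\Theta(n)}.
\]
On the complement, $\ltwonorm{\bw}\le\sqrt{\sigma_0^2+1}\,\sqrt n\le(\sigma_0+1)\sqrt n$.

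\emph{Signal term.} We need $\ltwonorm{f^*(\bS)}^2=n\,\Expect{P_n}{(f^*)^2}\lsim \gamma_0^2 n$ with probability $1-2\delta$ over $\bS$. There are two routes.

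\textbf{Crude route.} Since $f^*\in\cH_K(\gamma_0)$ and $\sup_{\bx}K(\bx,\bx)=1$, the reproducing property gives $\supnorm{f^*}\le\gamma_0$. Therefore $\ltwonorm{f^*(\bS)}\le\gamma_0\sqrt n$ holds deterministically, and no $\delta$ is needed.

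\textbf{Sharper route.} If one wants the constant $\gamma_0/\sqrt{2e\eta}$ that appears in the proof of Lemma~\ref{lemma:empirical-loss-convergence}, use the spectral form. Write
\[
\ltwonorm{\bv}^2=\sum_{i\le r_0}(1-\eta\hlambda_i)^{2t}\bth{{\bUr}^\top f^*(\bS)}_i^2+\ltwonorm{\Proj_{\bUminusr}f^*(\bS)}^2 .
\]
Bound the first sum by $n\gamma_0^2/(2e\eta t)$ via Lemma~\ref{lemma:bounded-Ut-f-in-RKHS}, since $f^*\in\cH_K(\gamma_0)$. Bound the second by Lemma~\ref{lemma:residue-f-star-low-rank}, which holds with probability $1-2\delta$ and gives $n\gamma_0^2\log(2/\delta)\Theta(d^{k_0}/n)\lsim n\gamma_0^2$ under $n\gsim\log(2/\delta)d^{2k_0}$. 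This route explains the $2\delta$ in the stated probability, but the crude bound already suffices for \eqref{eq:yt-y-bound}.

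Combining the two bounds with a union bound gives
\[
\ltwonorm{\bv}+\ltwonorm{\be}\le(\Theta(\gamma_0)+\sigma_0+1)\sqrt n
\]
with probability at least $1-2\delta-\exp\pth{-\Theta(n)}$. The only mildly delicate step is the spectral commutation that makes $\bI-\eta\bK_n\bPr$ a symmetric contraction. This relies on $\bPr$ being built from the same eigenvectors $\bU$ as $\bK_n$; without that, $\bK_n\bPr$ need not be symmetric and its powers need not be norm-bounded by $1$. The noise concentration is standard.
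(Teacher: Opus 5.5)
Your proof is correct, and for the signal term it takes a genuinely different, more elementary route than the paper.

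\textbf{The paper's route for $\ltwonorm{\bv}$.} The paper works spectrally. It splits the sum at $r_0$ and bounds the top-$r_0$ part by $n\gamma_0^2/(2e\eta t)$ via Lemma~\ref{lemma:auxiliary-lemma-1} and Lemma~\ref{lemma:bounded-Ut-f-in-RKHS}. It bounds the tail $\ltwonorm{\Proj_{\bUminusr}(f^*(\bS))}^2$ by Lemma~\ref{lemma:residue-f-star-low-rank}, which is where the $2\delta$ comes from. The case $t=0$ is treated separately via $\ltwonorm{\bv}\le\gamma_0\sqrt n$.

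\textbf{Your crude route.} You combine $\ltwonorm{(\bI-\eta\bK_n\bPr)^t}\le 1$ with $\supnorm{f^*}\le\gamma_0\sup_{x}\sqrt{K(x,x)}=\gamma_0$. This gives $\ltwonorm{\bv}\le\gamma_0\sqrt n$ deterministically for every $t\in[0,T]$ at once. The lemma then holds with probability $1-\exp(-\Theta(n))$, which is stronger than stated.

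It also avoids hypotheses that the paper's argument uses silently. Absorbing the residue term $n\gamma_0^2\log(2/\delta)\,\Theta(d^{k_0}/n)$ into $\Theta(\gamma_0^2)\,n$ requires $n\gtrsim\log(2/\delta)\,d^{k_0}$. Moreover, Lemma~\ref{lemma:residue-f-star-low-rank} itself assumes $n\ge\Theta(\log(2/\delta)\,d^{2k_0})$. Neither condition appears in the statement of this lemma.

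\textbf{What each route buys.} The paper's route gives a $t$-dependent bound: $\ltwonorm{\bv(t)}^2$ decays like $n\gamma_0^2/(\eta t)$ down to the residue level. This is the same computation reused in Theorem~\ref{theorem:empirical-loss-bound}, but the present lemma does not need that extra information.

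\textbf{Noise term.} Your treatment coincides with the paper's: Wright's quadratic-form bound gives $\ltwonorm{\bw}\le(\sigma_0+1)\sqrt n$. You also make explicit the contraction $\ltwonorm{\be}\le\ltwonorm{\bw}$, which the paper uses without comment. Finally, your decomposition puts factor $1$ on the indices $i>r_0$, which is the accurate form of the paper's first display.
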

\begin{proof}
When $t \in [T]$, we have
\bal\label{eq:yt-y-bound-seg1}
\ltwonorm{\bv}^2 &=\sum\limits_{i=1}^{n}
\pth{1-\eta \hlambda_i }^{2t}
\bth{{\bU}^{\top} f^*(\bS)}_i^2  \nonumber \\
&= \sum\limits_{i=1}^{r_0}
\pth{1-\eta \hlambda_i }^{2t}
\bth{{\bU}^{\top} f^*(\bS)}_i^2 +
\sum\limits_{i=r_0+1}^{n}
\pth{1-\eta \hlambda_i }^{2t}
\bth{{\bU}^{\top} f^*(\bS)}_i^2 \nonumber \\
&\le  \sum\limits_{i=1}^{n}
\pth{1-\eta \hlambda_i }^{2t}
\bth{{\bU}^{\top} f^*(\bS)}_i^2 +
\ltwonorm{\Proj_{\bUminusr}(f^*(\bS))}^2  \nonumber \\
&\stackrel{\circled{1}}{\le}
\sum\limits_{i=1}^{n}
\frac{1}{2e\eta \hlambda_i  t}
\bth{{\bU}^{\top} f^*(\bS)}_i^2
+ n\gamma_0^2 \log{\frac{2}{\delta}} \cdot \Theta\pth{\frac{ d^{k_0} }{n}}
\nonumber \\
&\stackrel{\circled{2}}{\le}
\frac{n\gamma_0^2}{ 2e\eta t }+n\gamma_0^2 \log{\frac{2}{\delta}} \cdot \Theta\pth{\frac{ d^{k_0} }{n}} \le \frac{\gamma_0^2}{2e\eta} \cdot n.
\eal

Here $\circled{1}$ follows from Lemma~\ref{lemma:auxiliary-lemma-1} and
(\ref{eq:residue-f-star-low-rank}) in Lemma~\ref{lemma:residue-f-star-low-rank} which holds with probability at least $1-2\delta$, $\circled{2}$ follows
by Lemma~\ref{lemma:bounded-Ut-f-in-RKHS}
since $f^* \in \cF^* \subseteq \cH_{K^{(r_0)}}(\gamma_0) \subseteq \cH_K(\gamma_0)$. Moreover, it follows from the concentration inequality about quadratic forms of sub-Gaussian random variables in \citet{quadratic-tail-bound-Wright1973} that
$\Pr\{\ltwonorm{\bw}^2 -
\Expect{}{\ltwonorm{\bw}^2} > n\}
\le \exp\pth{-\Theta(n)}$,
so that $\ltwonorm{\be} \le \ltwonorm{\bw} \le
\sqrt{\Expect{}{\ltwonorm{\bw}^2}}  + {\sqrt n}= \sqrt{n} (\sigma_0+1)$ with probability at least $1-\exp\pth{-\Theta(n)}$. As a result, (\ref{eq:yt-y-bound}) follows from this inequality and (\ref{eq:yt-y-bound-seg1}) for $t \ge 1$.
When $t = 0$, $\ltwonorm{\bv} \le \gamma_0 {\sqrt n}$, so that (\ref{eq:yt-y-bound}) still holds.

\end{proof}

\begin{lemma}
\label{lemma:empirical-loss-convergence-contraction}
Let $0<\eta<1$, $0 \le t \le T-1$ for $T \ge 1$, and suppose that $\ltwonorm{\hat \by(t') - \by} \le
 c_{\bu}{\sqrt{n}}  $ holds for all $0 \le t' \le t$ and
 the random initialization $\bW(0) \in \cW_0$. Then
\bal\label{eq:empirical-loss-convergence-contraction}
\hat \by(t+1) - \by  &= \pth{\bI- \eta \bK_n }\pth{\hat \by(t) - \by} +\bE(t+1),
\eal
where $\ltwonorm {\bE(t+1)} \le \bE_{m,n,\eta,R}$,
and $\bE_{m,n,\eta,R}$ is defined by
\bal\label{eq:empirical-loss-Et-bound-Em}
\bE_{m,n,\eta,R} \defeq \eta c_{\bu} {\sqrt n}
\pth{ 4 \pth{\frac{2R}{\sqrt {2\pi} \kappa}+
C_2(m/2,d,1/n)} + 2C_1(m/2,d,1/n)}
\lsim {\sqrt {dn}} m^{-\frac 15} T^{\frac 12}.
\eal
\end{lemma}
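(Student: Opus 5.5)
The plan is to expand one step of the update directly, in the style of the analysis of Theorem~\ref{theorem:bounded-NN-class}: write $\hat \by(t+1)-\hat \by(t)$ coordinatewise, isolate a leading linear term that equals $-\eta\bK_n$ applied to the residual driving the update, and collect everything else into $\bE(t+1)$. The bound on $\bE(t+1)$ will come from the uniform-convergence estimates (\ref{eq:good-initialization-sup-hat-h}) and (\ref{eq:good-initialization-sup-hat-V_R}), which hold because $\bW(0)\in\cW_0$. One caveat is important. The GDP update (\ref{eq:GD-two-layer-nn}) is driven by $\bPr\bu(t)$, not by $\bu(t)$. The computation below therefore produces the leading term $-\eta\bK_n\bPr\bu(t)$. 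This coincides with the term $-\eta\bK_n\bu(t)$ in (\ref{eq:empirical-loss-convergence-contraction}) exactly when the residual fed to the step is unprojected (the vanilla-GD update, or $r_0=n$, where $\bPr=\bI_n$). For $r_0<n$ the difference $\eta\bK_n(\bI_n-\bPr)\bu(t)$ is not small and cannot be absorbed into $\bE(t+1)$. I will carry the driving residual as a generic vector $\bz(t)\in\{\bu(t),\bPr\bu(t)\}$ with $\ltwonorm{\bz(t)}\le\ltwonorm{\bu(t)}\le c_{\bu}\sqrt n$; all error estimates use only this norm bound.

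\textbf{Step 1: movement bound.} From the hypothesis $\ltwonorm{\bu(t')}\le c_{\bu}\sqrt n$ for $t'\le t$, each step moves $\bbw_r$ by at most $\eta c_{\bu}/\sqrt m$, since each column of $\bZ_{\bS}$ has norm at most $1/\sqrt m$. Summing over steps gives $\ltwonorm{\bbw_r(t+1)-\bbw_r(0)}\le R$ by (\ref{eq:def-R}).

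\textbf{Step 2: splitting the ReLU part.} For each $i$, write $\sigma(\bbw_r(t+1)^\top\bbx_i)-\sigma(\bbw_r(t)^\top\bbx_i)$. For neurons with $\abth{\bbw_r(0)^\top\bbx_i}>R$ the activation pattern is unchanged, so this difference is exactly $\indict{\bbw_r(t)^\top\bbx_i\ge0}(\bbw_r(t+1)-\bbw_r(t))^\top\bbx_i$. The remaining neurons are counted by $\hat v_R(\bW(0),\bbx_i)$, and each contributes at most the step size in absolute value. Summing these, the $i$-th output change is
\[
-\frac{\eta}{n}\sum_j \bth{\hat h(\bW(t),\bbx_i,\bbx_j)\,\bbx_i^\top\bbx_j}\,z_j(t)
\]
plus an error of size $O\bigl(\eta\,\hat v_R\,\ltwonorm{\bz(t)}/\sqrt n\bigr)$.

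\textbf{Step 3: the augmented feature.} The update of $\bbw_{m+1}$ contributes exactly $-\frac{\eta}{n}\sum_j\hat h(\bW(0),\bbx_i,\bbx_j)\,z_j(t)$, which is the $K^{(0)}$ part of the kernel.

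\textbf{Step 4: replacing empirical kernels by $K$.} Replace $\hat h(\bW(t),\cdot,\cdot)$ by $\hat h(\bW(0),\cdot,\cdot)$. The discrepancy is controlled by $d_{t,r,j}$ and hence by $\hat v_R$, exactly as in (\ref{eq:bounded-Linfty-function-class-E2-1}). Then replace $\hat h(\bW(0),\cdot,\cdot)$ by $K^{(0)}$ and $K^{(1)}$ using $C_1$. Bounding each error matrix in operator norm by $\sqrt n$ times its entrywise supremum gives
\[
\ltwonorm{\bE(t+1)}\le \eta c_{\bu}\sqrt n\bigl(4(\tfrac{2R}{\sqrt{2\pi}\kappa}+C_2)+2C_1\bigr).
\]
Combined with Theorem~\ref{theorem:good-random-initialization} and $R\lsim T/\sqrt m$, this yields $\lsim\sqrt{dn}\,m^{-1/5}T^{1/2}$.

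\textbf{Main obstacle.} The main difficulty is the sign-change accounting in Step 2. Among the neurons near the boundary, one has to charge both the $\hat h(\bW(t))$ versus $\hat h(\bW(0))$ discrepancy and the ReLU nonlinearity to $\hat v_R$ without double counting, since that is what produces the constant $4$. As noted above, the statement with $(\bI-\eta\bK_n)$ is obtained literally only when $\bz(t)=\bu(t)$. For GDP proper, the same argument gives the projected recursion, which is the one actually used in the proof of Lemma~\ref{lemma:empirical-loss-convergence}.
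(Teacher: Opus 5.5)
Your proposal is correct and follows the paper's proof essentially step for step. It uses the same split of neurons by whether $\abth{\bbw_r(0)^\top\bbx_i}>R$ (the paper's $E_{i,R}$ versus $\bar E_{i,R}$), the augmented feature supplying the $\bK^{(0)}$ part, and the passage $\bH^{(1)}(t)\to\bH^{(1)}(0)\to\bK^{(1)}$ via $\hat v_R$ and $C_1$, which gives exactly the constants $4$ and $2$.

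Your caveat is also right. The paper's decomposition keeps $\bPr$ in the leading term and in all of $\bE^{(1)},\dots,\bE^{(5)}$; only its final displays drop or misplace it. So what is really established, and what Lemma~\ref{lemma:empirical-loss-convergence} invokes, is $\bu(t+1)=\pth{\bI-\eta\bK_n\bPr}\bu(t)+\bE(t+1)$.

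One small fix: an $n\times n$ matrix with entries bounded by $a$ has operator norm at most $na$ (via the Frobenius norm, as in the paper), not $\sqrt n\,a$. The factor $na$ is what your stated bound on $\ltwonorm{\bE(t+1)}$ actually requires.
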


\begin{proof}

Because $\ltwonorm{\hat \by(t') - \by} \le {\sqrt{n}} c_{\bu}$ holds for all $t' \in [0,t]$, by Lemma~\ref{lemma:weight-vector-movement}, we have
\bal\label{eq:empirical-loss-convergence-pre1}
\norm{\bbw_r(t') - \bbw_r(0)}{2} & \le  R, \quad \forall \, 0 \le t' \le t+1.
\eal%
We define $\bH^{(0)} \defeq \bF(\bW(0),\bS) \bF(\bW(0),\bS)^{\top}/m \in \RR^{n\times n}$.
We also define two sets of indices
\bals
E_{i,R} \defeq \set{r \in [m] \colon \abth{\bw_{r}(0)^{\top}\bbx_i} > R }, \quad \bar E_{i,R} \defeq [m] \setminus E_{i,R},
\eals%
then we have
\bal\label{eq:empirical-loss-convergence-contraction-seg1}
&\hat \by_i(t+1) - \hat \by_i(t) = \frac{1}{\sqrt m} \sum_{r=1}^m a_r \pth{ \relu{\bbw_{\bS,r}^{\top}(t+1) \bbx_i}  -  \relu{\bbw_{\bS,r}^{\top}(t) \bbx_i} } \nonumber \\
&\hspace{2.9cm}+ \frac{1}{\sqrt m} \pth{\bbw_{m+1}(t+1) - \bbw_{m+1}(t)}^{\top} \bsigma(\bW(0),\bbx_i) \nonumber \\
&=\underbrace{ \frac{1}{\sqrt m} \sum\limits_{r \in E_{i,R}} a_r \pth{ \relu{\bbw_{\bS,r}^{\top}(t+1) \bbx_i}  -  \relu{\bbw_{\bS,r}^{\top}(t) \bbx_i} }}_{\defeq \bD^{(1)}_i} \nonumber \\
&\phantom{=}{+} \underbrace{ \frac{1}{\sqrt m} \sum\limits_{r \in \bar E_{i,R}} a_r \pth{ \relu{\bbw_{\bS,r}^{\top}(t+1) \bbx_i}  -  \relu{\bbw_{\bS,r}^{\top}(t) \bbx_i} }}_{\defeq \bE^{(1)}_i}
\nonumber \\
&\phantom{=}{-} \frac{\eta}{nm}
\bF(\bW(0),\bbx_i)^{\top} \bF(\bW(0),\bS)^{\top}
\bPr (\hat \by(t) -  \by)  \nonumber \\
&=\bD^{(1)}_i + \bE^{(1)}_i - \frac{\eta}{n} \bth{\bH^{(0)}}_i \bPr \pth{\hat \by(t) - \by},
\eal%
and $\bD^{(1)}, \bE^{(1)} \in \RR^n$ are  vectors with their $i$-th element being $\bD^{(1)}_i$ and $\bE^{(1)}_i$ defined on the RHS of
 (\ref{eq:empirical-loss-convergence-contraction-seg1}).
Now we derive the upper bound for $\bE^{(1)}_i$. For all $i \in [n]$ we have
\bal\label{eq:empirical-loss-convergence-contraction-seg2}
\abth{\bE^{(1)}_i} &=  \abth{\frac{1}{\sqrt m}\sum\limits_{r \in \bar E_{i,R}} a_r \pth{ \relu{ \bbw_{\bS,r}(t+1)^\top \bbx_i }  -  \relu{ \bbw_{\bS,r}(t)^\top \bbx_i } } }  \nonumber \\
&\le \frac{1}{\sqrt m}\sum\limits_{r \in \bar E_{i,R}} \abth{ \bbw_{\bS,r}(t+1)^\top \bbx_i  - \bbw_{\bS,r}(t)^\top \bbx_i } \le \frac{1}{\sqrt m}\sum\limits_{r \in \bar E_{i,R}} \ltwonorm{\bbw_{\bS,r}(t+1) - \bbw_{\bS,r}(t) }  \nonumber \\
&\stackrel{\circled{1}}{=} \frac{1}{\sqrt m} \sum\limits_{r \in \bar E_{i,R}} \ltwonorm{\frac{\eta}{n} \bth{\bZ_{\bS}(t)}_{[(r-1)d+1:rd]} \bPr \pth{\hat \by(t) - \by}
}
\stackrel{\circled{2}}{\le} \frac{ c_{\bu}}{\sqrt m}  \sum\limits_{r \in \bar E_{i,R}}  \frac{\eta }{\sqrt m}
= \eta c_{\bu} \cdot \frac{\abth{\bar E_{i,R}}}{m}.
\eal%
Here $\circled{1}, \circled{2}$ follow from (\ref{eq:weight-vector-movement-seg1-pre})
and (\ref{eq:weight-vector-movement-seg1}) in the proof of
Lemma~\ref{lemma:weight-vector-movement}.

Let $m$ be sufficiently large such that $R \le R_0$ for the absolute positive constant $R_0 < \kappa$ specified in Theorem~\ref{theorem:good-random-initialization}.
Since $\bW(0) \in \cW_0$, we have
\bal\label{eq:empirical-loss-convergence-contraction-seg3}
&\sup_{\bx \in \cX}\abth{\hat v_R(\bW(0),\bx)} \le \frac{2R}{\sqrt {2\pi} \kappa} + C_2(m/2,d,1/n),
\eal%
where $\hat v_R(\bW(0),\bx) =  \frac 1m \sum\limits_{r=1}^m \indict{\abth{\bbw_r(0)^{\top} \bx} \le R }$, so that $ \hat v_R(\bW(0),\bbx_i) = \abth{\bar E_{i,R}}/m$.
It follows from (\ref{eq:empirical-loss-convergence-contraction-seg2}) and
(\ref{eq:empirical-loss-convergence-contraction-seg3}) that
$\abth{\bE^{(1)}_i} \le \eta c_{\bu}  \pth{ \frac{2R}{\sqrt {2\pi} \kappa}+ C_2(m/2,d,1/n)}$, so that
$\ltwonorm{\bE^{(1)}}$ can be bounded by
\bal\label{eq:empirical-loss-convergence-contraction-E1-bound}
\ltwonorm{\bE^{(1)}} & \le
 \eta c_{\bu} {\sqrt n} \pth{ \frac{2R}{\sqrt {2\pi} \kappa}+ C_2(m/2,d,1/n)} .
\eal
$\bD^{(1)}_i$ on the RHS of  (\ref{eq:empirical-loss-convergence-contraction-seg1})
is expressed by
\bal\label{eq:empirical-loss-convergence-contraction-seg5}
&\bD^{(1)}_i = \frac{1}{\sqrt m} \sum\limits_{r \in E_{i,R}} a_r \pth{ \relu{\bbw_{\bS,r}^{\top}(t+1) \bbx_i}  -  \relu{\bbw_{\bS,r}^{\top}(t) \bbx_i} } \nonumber \\
&=  \frac{1}{\sqrt m} \sum\limits_{r \in E_{i,R}} a_r \indict{\bbw_{\bS,r}(t)^{\top} \bbx_i \ge 0} \pth{ \bbw_{\bS,r}(t+1)  -  \bbw_{\bS,r}(t) }^\top \bbx_i   \nonumber \\
&=  \frac{1}{\sqrt m} \sum\limits_{r=1}^m a_r \indict{\bbw_{\bS,r}(t)^{\top} \bbx_i \ge 0} \pth{ -\frac{\eta}{n} \bth{\bZ_{\bS}(t)}_{[(r-1)d+1:rd]}
\bPr \pth{\hat \by(t) - \by}
 }^\top \bbx_i \nonumber \\
&\phantom{=}{+}  \frac{1}{\sqrt m} \sum\limits_{r \in \bar E_{i,R} } a_r \indict{\bbw_{\bS,r}(t)^{\top} \bbx_i \ge 0} \pth{\frac{\eta}{n}
\bth{\bZ_{\bS}(t)}_{[(r-1)d+1:rd]}
\bPr\pth{\hat \by(t) - \by}
 }^\top \bbx_i \nonumber \\
&=\underbrace{-\frac{\eta}{n} \bth{\bH^{(1)}(t)}_i \bPr\pth{\hat \by(t) - \by}}_{\defeq \bD^{(2)}_i} \nonumber \\
&\phantom{=}+ \underbrace{\frac{1}{\sqrt m} \sum\limits_{r \in \bar E_{i,R} } a_r \indict{\bbw_{\bS,r}(t)^{\top} \bbx_i \ge 0} \pth{\frac{\eta}{n} \bth{\bZ_{\bS}(t)}_{[(r-1)d+1:rd]}
\bPr\pth{\hat \by(t) - \by}
 }^\top \bbx_i }_{\defeq \bE^{(2)}_i}   \nonumber \\
&=  \bD^{(2)}_i + \bE^{(2)}_i,
\eal%
where $\bH^{(1)}(t) \in \RR^{n \times n}$ is a matrix specified by
\bals
\bH^{(1)}_{pq}(t) = \frac{\bbx_p^\top \bbx_q}{m} \sum_{r=1}^{m} \indict{\bbw_{\bS,r}(t)^\top \bbx_p \ge 0} \indict{\bbw_{\bS,r}(t)^\top \bbx_q \ge 0}, \quad
\forall \,p \in [n], q \in [n].
\eals
Let $\bD^{(2)}, \bE^{(2)} \in \RR^n$ be a vector with their $i$-the element being $\bD^{(2)}_i$ and
$\bE^{(2)}_i$ defined on the RHS of
(\ref{eq:empirical-loss-convergence-contraction-seg5}). $\bE^{(2)}$ can be expressed by $\bE^{(2)} = \frac{\eta}{n} \tilde \bE^{(2)}   \bPr\pth{\hat \by(t) - \by}$ with $\tilde \bE^{(2)} \in \RR^{n \times n}$ and
\bals
\tilde \bE^{(2)}_{pq} = \frac{1}{m} \sum\limits_{r \in \bar E_{i,R}} \indict{\bbw_{\bS,r}(t)^{\top} \bbx_p \ge 0} \indict{\bbw_{\bS,r}(t)^{\top} \bbx_q \ge 0} \bbx_{q}^\top \bbx_p
\le \frac {1}m \sum\limits_{r \in \bar E_{i,R}} 1 = \cdot \frac{\abth{\bar E_{i,R}}}{m}
\eals
for all $p \in [n], q \in [n]$.  The spectral norm of $\tilde \bE^{(2)}$ is bounded by
\bal\label{eq:empirical-loss-convergence-contraction-seg6}
\ltwonorm{\tilde \bE^{(2)}} \le \fnorm{\tilde \bE^{(2)}} \le n \frac{\abth{\bar E_{i,R}}}{m}
\stackrel{\circled{1}}{\le} n \pth{ \frac{2R}{\sqrt {2\pi} \kappa}+
C_2(m/2,d,1/n)},
\eal%
where $\circled{1}$ follows from (\ref{eq:empirical-loss-convergence-contraction-seg3}).
It follows from (\ref{eq:empirical-loss-convergence-contraction-seg6}) that $\ltwonorm{\bE^{(2)}}$ can be bounded by
\bal\label{eq:empirical-loss-convergence-contraction-E2-bound}
\ltwonorm{\bE^{(2)}} &\le \frac{\eta}{n} \ltwonorm{\tilde\bE^{(2)}}
\ltwonorm{\bPr} \ltwonorm{\by(t)-\by} \le \eta c_{\bu} {\sqrt n} \pth{\frac{2R}{\sqrt {2\pi} \kappa}+
C_2(m/2,d,1/n)}.
\eal
$\bD^{(2)}_i$ on the RHS of (\ref{eq:empirical-loss-convergence-contraction-seg5}) is expressed by
\bal\label{eq:empirical-loss-convergence-contraction-seg7}
&\bD^{(2)} = -\frac{\eta}{n} \bH^{(1)}(t)\bPr\pth{\hat \by(t) - \by} \nonumber
\\ &=\underbrace{-\frac{\eta}{n} \bK^{(1)}\bPr  \pth{\hat \by(t) - \by}}_{\defeq \bD^{(3)}} +  \underbrace{\frac{\eta}{n}  \pth{\bK^{(1)}- \bH^{(1)}(0)}\bPr    \pth{\hat \by(t) - \by}}_{\defeq \bE^{(3)}} \nonumber \\
   &\phantom{=}+  \underbrace{\frac{\eta}{n} \pth{\bH^{(1)}(0) - \bH^{(1)}(t)}  \bPr\pth{\hat \by(t) - \by}}_{\defeq \bE^{(4)}}
\nonumber \\
&=\bD^{(3)} + \bE^{(3)} + \bE^{(4)}.
\eal
On the RHS of  (\ref{eq:empirical-loss-convergence-contraction-seg7}), $\bD^{(3)},\bE^{(3)},\bE^{(4)} \in \RR^n$ are vectors which are analyzed as follows. We have
\bal\label{eq:empirical-loss-convergence-contraction-seg8}
\ltwonorm{\bK^{(1)}- \bH^{(1)}(0)} &\le \norm{\bK^{(1)}- \bH^{(1)}(0)}{F}
\le n C_1(m/2,d,1/n),
\eal%
where the last inequality is due to $\bW(0) \in \cW_0$. 

In order to bound  $\bE^{(4)}$, we first estimate the upper bound for $\abth{ \bH^{(1)}_{ij}(t) -\bH^{(1)}_{ij}(0) }$ for all $i,j \in [n]$.
We note that
\bal\label{eq:empirical-loss-convergence-contraction-seg9-pre}
&\indict{\indict{\bbw_{\bS,r}(t)^\top \bbx_i \ge 0} \neq \indict{\bw_{r}(0)^\top \bbx_i \ge 0} } \le \indict{\abth{\bw_{r}(0)^{\top} \bbx_i} \le R} + \indict{\ltwonorm{\bw_{
\bS,r}(t) - \bbw_r(0)} > R}.
\eal%
It follows from (\ref{eq:empirical-loss-convergence-contraction-seg9-pre}) that
\bal\label{eq:empirical-loss-convergence-contraction-seg9}
&\abth{ \bH^{(1)}_{ij}(t) - \bH^{(1)}_{ij}(0) } \nonumber \\
&= \abth{ \frac{\bbx_i^\top \bbx_j}{m} \sum_{r=1}^{m} \pth{ \indict{\bbw_{\bS,r}(t)^\top \bbx_i \ge 0} \indict{\bbw_{\bS,r}(t)^\top  \bbx_j \ge 0} - \indict{\bw_{r}(0)^\top \bbx_i \ge 0} \indict{\bw_{r}(0)^\top \bbx_j \ge 0} }} \nonumber \\
&\le \frac{1}m \sum_{r=1}^{m} \pth{\indict{\indict{\bbw_{\bS,r}(t)^\top \bbx_i \ge 0} \neq \indict{\bbw_r(0)^\top \bbx_i \ge 0}}
+ \indict{\indict{\bbw_{\bS,r}(t)^\top \bbx_j \ge 0} \neq \indict{\bbw_r(0)^\top \bbx_j \ge 0}}} \nonumber \\
&\le \frac{1}m \sum_{r=1}^{m} \pth{ \indict{\abth{\bbw_r(0)^{\top} \bbx_i} \le R}  +\indict{\abth{\bbw_r(0)^{\top} \bbx_j} \le R}  +2  \indict{\ltwonorm{\bw_{
\bS,r}(t) - \bbw_r(0)} > R}   } \nonumber \\
&\le 2v_R(\bW(0),\bbx_i) \stackrel{\circled{1}}{\le}
 \pth{\frac{4R}{\sqrt {2\pi} \kappa}+ 2C_2(m/2,d,1/n)},
\eal%
where $\circled{1}$ follows from
(\ref{eq:empirical-loss-convergence-contraction-seg3}).

It follows from
(\ref{eq:empirical-loss-convergence-contraction-seg8}) and
 (\ref{eq:empirical-loss-convergence-contraction-seg9})
 that $\ltwonorm{\bE^{(3)}},\ltwonorm{\bE^{(4)}} $ are bounded by
\bal
\ltwonorm{\bE^{(3)}} &\le\frac{\eta}{n}
\ltwonorm{\bK^{(1)}- \bH^{(1)}(0)} \ltwonorm{\bPr}\ltwonorm{\hat \by(t) - \by} \le \eta c_{\bu} {\sqrt n} C_1(m/2,d,1/n),
\label{eq:empirical-loss-convergence-contraction-E3-bound} \\
\ltwonorm{\bE^{(4)}} &\le\frac{\eta}{n}
\ltwonorm{\bH^{(1)}(0) - \bH^{(1)}(t)} \ltwonorm{\bPr}\ltwonorm{\hat \by(t) - \by} \nonumber \\
&\le\eta c_{\bu} {\sqrt n} \pth{\frac{4R}{\sqrt {2\pi} \kappa}+ 2C_2(m/2,d,1/n)} . \label{eq:empirical-loss-convergence-contraction-E4-bound}
\eal

It follows from (\ref{eq:empirical-loss-convergence-contraction-seg5})
and (\ref{eq:empirical-loss-convergence-contraction-seg7}) that
\bal\label{eq:empirical-loss-convergence-contraction-seg12}
\bD^{(1)}_i &=  \bD^{(3)}_i +  \bE^{(2)}_i+\bE^{(3)}_i + \bE^{(4)}_i.
\eal
We also have
\bal\label{eq:empirical-loss-convergence-contraction-seg13}
- \frac{\eta}{n}{\bH^{(0)}} \bPr\pth{\hat \by(t) - \by}
= \underbrace{- \frac{\eta}{n}\pth{\bH^{(0)}-\bK^{(0)}} \bPr\pth{\hat \by(t) - \by}}_{\defeq \bE^{(5)}}
- \frac{\eta}{n} \bK^{(0)} \bPr\pth{\hat \by(t) - \by}.
\eal
Similar to (\ref{eq:empirical-loss-convergence-contraction-E3-bound}),
$\bE^{(5)}$ is bounded by
\bal\label{eq:empirical-loss-convergence-contraction-E5-bound}
\ltwonorm{\bE^{(5)}} \le \frac{\eta}{n}
\ltwonorm{\bH^{(0)}-\bK^{(0)}} \ltwonorm{\bPr}\ltwonorm{\hat \by(t) - \by}
\le \eta c_{\bu} {\sqrt n} C_1(m/2,d,1/n).
\eal
It then follows from (\ref{eq:empirical-loss-convergence-contraction-seg1})
and (\ref{eq:empirical-loss-convergence-contraction-seg13}) that
\bal\label{eq:empirical-loss-convergence-contraction-seg13}
&\hat \by_i(t+1) - \hat \by_i(t) = \bD^{(1)}_i + \bE^{(1)}_i
- \frac{\eta}{n} \bth{\bH^{(0)}}_i\pth{\hat \by(t) - \by} \nonumber \\
&=\bD^{(3)}_i - \frac{\eta}{n} \bth{\bK^{(0)}}_i\bPr\pth{\hat \by(t) - \by}+ \underbrace{\bE^{(1)}_i + \bE^{(2)}_i+\bE^{(3)}_i
+ \bE^{(4)}_i + \bE^{(5)}_i}_{\defeq \bE_i}
\nonumber \\
&=-\frac{\eta}{n}\bK\pth{\hat \by(t) - \by}+ \bE_i,
\eal
where $\bE \in \RR^n$ with its $i$-th element being $\bE_i$, and $\bE = \bE^{(1)}
+\bE^{(2)}+\bE^{(3)} + \bE^{(4)} + \bE^{(5)}$. It then follows from
(\ref{eq:empirical-loss-convergence-contraction-E1-bound}),
(\ref{eq:empirical-loss-convergence-contraction-E2-bound}),
(\ref{eq:empirical-loss-convergence-contraction-E3-bound}),
(\ref{eq:empirical-loss-convergence-contraction-E4-bound}),
and (\ref{eq:empirical-loss-convergence-contraction-E5-bound})
that
\bal\label{eq:empirical-loss-convergence-contraction-E-bound}
&\ltwonorm {\bE} \le \eta c_{\bu} {\sqrt n}
\pth{ 4\pth{\frac{2R}{\sqrt {2\pi} \kappa}+
C_2(m/2,d,1/n)} + 2C_1(m/2,d,1/n)}.
\eal
Finally, (\ref{eq:empirical-loss-convergence-contraction-seg13})
can be rewritten as
\bals
\hat \by(t+1) - \by
&=\pth{\bI-\frac{\eta}{n} \bK}\bPr\pth{\hat \by(t) - \by} + \bE(t+1),
\eals
which proves (\ref{eq:empirical-loss-convergence-contraction})
with the upper bound for $\ltwonorm {\bE} $ in
(\ref{eq:empirical-loss-convergence-contraction-E-bound}).

\end{proof}

\begin{lemma}\label{lemma:weight-vector-movement}
Suppose that $t \in [0,T-1]$ for $T \ge 1$, and $\ltwonorm{\hat \by(t') - \by} \le {\sqrt n} c_{\bu} $ holds for all $0 \le t' \le t$. Then
\bal\label{eq:R}
\ltwonorm{\bbw_{\bS,r}(t') - \bbw_r(0)} \le R, \quad \forall\, 0 \le t' \le t+1.
\eal
\end{lemma}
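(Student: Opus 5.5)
The plan is to unroll the GDP recursion for the first-layer weights and bound each per-step displacement uniformly in $t'$. By (\ref{eq:bounded-Linfty-function-class-wr}), for every $t' \in [0,t+1]$,
\bals
\bbw_{\bS,r}(t') - \bbw_r(0) = - \sum_{s=0}^{t'-1} \frac{\eta}{n} \bth{\bZ_{\bS}(s)}_{[(r-1)d+1:rd]} \bPr \pth{\hat \by(s) - \by},
\eals
and only the iterates $s \le t$ appear on the right. The hypothesis $\ltwonorm{\hat \by(s) - \by} \le c_{\bu}\sqrt n$ is available for exactly these $s$. The triangle inequality then reduces the claim to a bound on each summand of order $\eta c_{\bu}/\sqrt m$. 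Summing at most $t+1 \le T$ such terms gives
\bals
\ltwonorm{\bbw_{\bS,r}(t') - \bbw_r(0)} \le \frac{\eta c_{\bu} T}{\sqrt m} = R
\eals
by the definition (\ref{eq:def-R}).

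For the per-step bound, write the $r$-th block of $\bZ_{\bS}(s)$ using (\ref{eq:GD-Z-two-layer-nn}) and set $\bz \defeq \bPr(\hat \by(s)-\by)$. Then
\bals
\bth{\bZ_{\bS}(s)}_{[(r-1)d+1:rd]} \bz = \frac{a_r}{\sqrt m} \sum_{i=1}^n \indict{\bbw_r(s)^\top \bbx_i \ge 0}\, \bz_i\, \bbx_i.
\eals
Since $\abth{a_r}=1$ and $\ltwonorm{\bbx_i}=1$, this vector is $\frac{1}{\sqrt m}\bX \bD \bz$, where $\bX = [\bbx_1,\ldots,\bbx_n] \in \RR^{d\times n}$ and $\bD$ is a diagonal $0/1$ matrix. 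Hence its norm is at most $\frac{1}{\sqrt m}\ltwonorm{\bX}\ltwonorm{\bz}$. Because $\bPr$ is an orthogonal projection, $\ltwonorm{\bz} \le c_{\bu}\sqrt n$. The crude bound $\ltwonorm{\bX} \le \fnorm{\bX} = \sqrt n$ then gives a per-step displacement of at most
\bals
\frac{\eta}{n}\cdot\frac{1}{\sqrt m}\cdot \sqrt n \cdot c_{\bu}\sqrt n = \frac{\eta c_{\bu}}{\sqrt m}.
\eals
This is the bound (\ref{eq:weight-vector-movement-seg1-pre})/(\ref{eq:weight-vector-movement-seg1}) that the earlier proof of Lemma~\ref{lemma:empirical-loss-convergence-contraction} quotes. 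I would label these displays in the proof so that reference resolves.

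The only delicate point is the constant. The naive route via $\sum_i \abth{\bz_i} \le \sqrt n \ltwonorm{\bz}$ gives the same order but could hide an extra factor. The Frobenius bound on $\bX$ avoids this and yields exactly $\eta c_{\bu}/\sqrt m$ with no loss, matching $R$ without adjusting constants.

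There is also an induction subtlety in the statement: it asks for $t' = t+1$ using residual bounds only up to $t$. This is fine because $\bbw(t+1)$ depends only on residuals at steps $0,\ldots,t$, so no circularity arises. I expect no genuine obstacle; the argument is a direct telescoping plus operator-norm estimate.
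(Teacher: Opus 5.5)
Your proposal is correct and follows the paper's proof essentially verbatim. The paper likewise bounds each step by $\frac{\eta}{n}\ltwonorm{[\bZ_{\bS}(t)]_{[(r-1)d+1:rd]}}\ltwonorm{\bPr}\ltwonorm{\hat \by(t)-\by} \le \eta c_{\bu}/\sqrt m$, using $\ltwonorm{[\bZ_{\bS}(t)]_{[(r-1)d+1:rd]}} \le \sqrt{n/m}$ (which your Frobenius-norm argument justifies), and then telescopes over at most $T$ steps to reach $R$; your observation that $\bbw_{\bS,r}(t+1)$ depends only on the residuals at steps $0,\ldots,t$ is exactly why the bound extends to $t'=t+1$.
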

\begin{proof}
Let $\bth{\bZ_{\bS}(t)}_{[(r-1)d+1:rd]}$ denote the submatrix of $\bZ_{\bS}(t)$ formed by the rows of $\bZ_{\bQ}(t)$ with row indices in $[(r-1)d+1:rd]$.
By the GD update rule we have for $t \in [0,T-1]$ that
\bal\label{eq:weight-vector-movement-seg1-pre}
&\bbw_{\bS,r}(t+1) - \bbw_{\bS,r}(t) = -\frac{\eta}{n} \bth{\bZ_{\bS}(t)}_{[(r-1)d+1:rd]} \bPr\pth{\hat \by(t) - \by},
\eal%

We have $\ltwonorm{\bth{\bZ_{\bS}(t)}_{[(r-1)d+1:rd]}} \le \sqrt{n /m}$.
It then follows from (\ref{eq:weight-vector-movement-seg1-pre}) that
\bal\label{eq:weight-vector-movement-seg1}
\ltwonorm{\bbw_{\bS,r}(t+1) - \bbw_{\bS,r}(t)}
&\le \frac{\eta}{n}
\ltwonorm{\bth{\bZ_{\bS}(t)}_{[(r-1)d+1:rd]}}\ltwonorm{\bPr}\ltwonorm{\hat \by(t)-\by}
\le  \frac{\eta c_{\bu}}{\sqrt m} .
\eal%
Note that (\ref{eq:R}) trivially holds for $t'=0$. For $t' \in [1,t+1]$, it follows from
 (\ref{eq:weight-vector-movement-seg1}) that
\bal\label{eq:weight-vector-movement-proof}
\ltwonorm{ \bbw_{\bS,r}(t') - \bbw_r(0) }
& \le \sum_{t''=0}^{t'-1} \ltwonorm{\bbw_{\bS,r}(t''+1) - \bbw_{\bS,r}(t'')}  \le \frac{\eta }{\sqrt m}  \sum_{t''=0}^{t'-1} c_{\bu}
  \le \frac{\eta c_{\bu}  T }{\sqrt m} =R,
\eal%
which completes the proof.
\end{proof}

\begin{lemma}\label{lemma:bounded-Linfty-vt-sum-et}
Suppose $n \ge \Theta(\log({2}/{\delta})\cdot d^{2k_0})$ and $\delta \in (0,1/2)$. Let
$h_t(\cdot) = \sum_{t'=0}^{t-1} h(\cdot,t')$ for $t \in [T]$, $T \le
\hat T$ where
\bals
h(\cdot,t') &= v(\cdot,t') + \hat e(\cdot,t'), \\
v(\cdot,t')  & -\frac{\eta}{n} \sum_{j=1}^n
K(\cdot, \bbx_j)  \bth{\bPr \bv(t')}_j  , \\
\hat e(\cdot,t') &= \frac{\eta}{n}
\sum\limits_{j=1}^n  K(\cdot, \bbx_j)\bth{\bPr \be(t')}_j,
\eals
where $\bv(t') \in \cV_{t'}$,
$\be(t') \in \cE_{t',\tau}$ for all $0 \le t' \le t-1$.
Suppose that $\tau \le 1/(\eta T)$,
then with probability at least $1 - \delta -\exp\pth{-\Theta(r_0)}$
over the random training features $\bS$ and the random noise $\bw$,
\bal\label{eq:bounded-h}
\norm{h_t}{\cH_K} \le B_h = \gamma_0 + \Theta(1),
\eal
where $r_0 = m_{k_0}$.
\end{lemma}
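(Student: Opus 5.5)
The plan is to write $h_t$ as a single kernel expansion, $h_t(\cdot)=\frac{\eta}{n}\sum_{j=1}^n K(\cdot,\bbx_j)\alpha_j$ with $\balpha=\sum_{t'=0}^{t-1}\bPr\pth{-\bv(t')+\be(t')}$, and use $\norm{h_t}{\cH_K}^2=\frac{\eta^2}{n^2}\balpha^\top\bK\balpha=\frac{\eta^2}{n}\balpha^\top\bK_n\balpha$. I then split $h_t=\sum_{t'}v(\cdot,t')+\sum_{t'}\hat e(\cdot,t')$ into a signal part, a noise part from $\bbe_1$, and a residual part from $\bbe_2$, bound each $\cH_K$-norm separately, and combine them by the triangle inequality. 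Since $\bPr$ commutes with $\bK_n=\bU\bSigma\bU^\top$, everything diagonalizes in the basis $\bU^{(r_0)}$, and on that basis $\pth{\bI-\eta\bK_n\bPr}^{t'}$ acts as $(1-\eta\hlambda_i)^{t'}$ for $i\le r_0$.

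\textbf{Signal part.} Here $-\sum_{t'}\bPr\bv(t')=\sum_{t'}\bPr(\bI-\eta\bK_n\bPr)^{t'}f^*(\bS)$. Summing the geometric series on the top $r_0$ eigendirections gives coefficients $\frac{1-(1-\eta\hlambda_i)^t}{\eta\hlambda_i}\bth{\bU^\top f^*(\bS)}_i$. Hence
\bals
\norm{\textstyle\sum_{t'} v(\cdot,t')}{\cH_K}^2=\frac1n\sum_{i\le r_0}\frac{\pth{1-(1-\eta\hlambda_i)^t}^2}{\hlambda_i}\bth{\bU^\top f^*(\bS)}_i^2\le \frac1n\sum_{i=1}^n \frac{\bth{\bU^\top f^*(\bS)}_i^2}{\hlambda_i}=\frac1n f^*(\bS)^\top\bK_n^{-1}f^*(\bS).
\eals
The last quantity is the squared RKHS norm of the interpolant of $f^*$ on $\bS$, i.e.\ of the orthogonal projection of $f^*$ onto $\cH_{\bS}$. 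It is therefore at most $\norm{f^*}{\cH_K}^2\le\gamma_0^2$; this is exactly Lemma~\ref{lemma:bounded-Ut-f-in-RKHS}. This step gives the leading $\gamma_0$ with no probability loss.

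\textbf{Noise part from $\bbe_1$.} The same computation gives
\bals
\norm{\cdot}{\cH_K}^2=\frac1n\sum_{i\le r_0}\frac{\pth{1-(1-\eta\hlambda_i)^t}^2}{\hlambda_i}\bth{\bU^\top\bw}_i^2\le \frac{\eta^2t^2}{n}\sum_{i\le r_0}\hlambda_i\bth{\bU^\top\bw}_i^2,
\eals
using $1-(1-x)^t\le \min\{1,tx\}$, which gives the bound $\min\{1/\hlambda_i,\eta^2t^2\hlambda_i\}\le \eta^2t^2\hlambda_i$. Then $\hlambda_i\le 1$ and the quadratic-form concentration of \citet{quadratic-tail-bound-Wright1973}, already used in (\ref{eq:Ur-w-bound}), give $\ltwonorm{\bU^{(r_0)\top}\bw}^2\le r_0(\sigma_0^2+1)$ with probability $1-\exp(-\Theta(r_0))$. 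The resulting bound is of order $\eta^2T^2 r_0/n$. With $T\asymp n/d^{k_0}$ this is too large, so I would instead use the sharper $\min\{1/\hlambda_i,\eta^2t^2\hlambda_i\}\le \eta t$ to get $\eta t\, r_0(\sigma_0^2+1)/n\lsim \eta T d^{k_0}/n=\Theta(1)$. This is where $T\lsim n/d^{k_0}$ enters.

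\textbf{Residual part from $\bbe_2$, and combining.} For the $\bbe_2$ part, the term at step $t'$ has norm $\frac{\eta}{\sqrt n}\sqrt{(\bPr\bbe_2)^\top\bK_n\bPr\bbe_2}\le \eta\sqrt{\hlambda_1}\,\tau\le\eta\tau$. Summing over $t'\le T$ gives at most $\eta T\tau\le 1$ by the assumption $\tau\le 1/(\eta T)$. Adding the three bounds yields $\norm{h_t}{\cH_K}\le\gamma_0+\Theta(1)=B_h$.

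The main obstacle is the noise term. There I must make sure the effective-dimension factor $\sum_{i\le r_0}\min\{1/\hlambda_i,\eta t\}$ really stays $\lsim \eta t\, r_0$, and that this is $O(n)$ under the prescribed $T$ and $n\ge\Theta(\log(2/\delta)d^{2k_0})$. The allotted $\delta$ budget is presumably spent on a high-probability lower bound for $\hlambda_{r_0}$ near $\mu_{k_0}$ if a finer bound is needed.
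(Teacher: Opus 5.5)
Your proof is correct, but it handles the $\bbe_1$ (noise) term by a genuinely different argument. Your signal and $\bbe_2$ bounds are the same as the paper's.

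\textbf{Your noise bound.} You use the early-stopping bound $\pth{1-(1-\eta\hlambda_i)^t}^2/\hlambda_i\le \eta t$. This makes the noise part at most $\eta t\,\ltwonorm{{\bUr}^{\top}\bw}^2/n \lsim \eta T r_0/n$. That quantity is $\Theta(1)$ only because you import $T\asymp n/d^{k_0}$.

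\textbf{The paper's noise bound.} The paper drops the time factor, using $\pth{1-(1-\eta\hlambda_i)^t}^2\le 1$, and controls $1/\hlambda_i$ spectrally instead. By Lemma~\ref{lemma:gap-spectrum-Tk-Tn}, with probability $1-\delta$, $\hlambda_{r_0}\ge \mu_{k_0}-2\sqrt{2\log(2/\delta)/n}\ge\Theta(d^{-k_0})$ once $n\ge\Theta(\log(2/\delta)d^{2k_0})$. This gives a noise term $\lsim d^{k_0}\cdot r_0/n\le\Theta(1)$. Your guess in the last paragraph is right: this is exactly where the $\delta$ and the $d^{2k_0}$ sample condition are spent.

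\textbf{Trade-off.} Your route needs no spectral concentration and no $n\ge d^{2k_0}$. It also loses only $\exp(-\Theta(r_0))$ probability, since the signal and $\bbe_2$ bounds are deterministic. However, it needs the extra hypothesis $\eta T\lsim n/r_0$, which is not among the lemma's stated assumptions. The only effective constraint on $T$ there is $\tau\le 1/(\eta T)$, and $\hat T$ is never defined. The lemma is also invoked in Theorem~\ref{theorem:bounded-NN-class} for general $T$.

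This costs nothing for Theorem~\ref{theorem:LRC-population-NN-fixed-point}, where $T\asymp n/d^{k_0}$ and $\eta=\Theta(1)$. To prove the lemma as stated, uniformly in $t$, either add $\eta T\lsim n/r_0$ as an explicit hypothesis, or use the paper's eigenvalue lower bound. The two combine naturally into $\min\{\eta t,\,1/\hlambda_{r_0}\}$.
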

\begin{proof}
We have $\bv(t) = -\pth{\bI- \eta \bK_n \bPr }^t f^*(\bS)$,
$\be(t) = \bbe_1(t) + \bbe_2(t)$ with
$\bbe_1(t) = -\pth{\bI-\eta\bK_n \bPr}^t \bw$,
$\ltwonorm{\bbe_2(t)} \le {\sqrt n} \tau$.
We define
\bal\label{eq:bounded-Linfty-vt-sum-et-hat-e1-hat-e2}
\hat e_1(\cdot,t') \defeq- \frac{\eta}{n}
\sum\limits_{j=1}^n  K(\bbx_j,\bx) \bth{\bPr \bbe_1(t')}_j,
\quad
\hat e_2(\cdot,t') \defeq- \frac{\eta}{n}
\sum\limits_{j=1}^n  K(\bbx_j,\bx) \bth{\bPr\bbe_2(t')}_j,
\eal

Let $\bSigma$ be the diagonal matrix
containing eigenvalues of $\bK_n$, we then have
\bal\label{eq:bounded-Linfty-vt-sum-seg1}
\sum_{t'=0}^{t-1} v(\bx,t') &=\frac{\eta}{n} \sum\limits_{j=1}^n  \sum_{t'=0}^{t-1}
\bth{\bPr \pth{\bI- \eta \bK_n \bPr}^{t'} f^*(\bS)}_j K(\bbx_j,\bx) \nonumber \\
&=\frac{\eta}{n} \sum\limits_{j=1}^n \sum_{t'=0}^{t-1}
\bth{\bPr \bU \pth{\bI-\eta\bSigma^{(r_0)} }^{t'} {\bU}^{\top} f^*(\bS)}_j K(\bbx_j,\bx).
\eal
We then have
\bal\label{eq:bounded-Linfty-vt-sum-seg2}
&\norm{\sum_{t'=0}^{t-1} v(\cdot,t')}{\cH_K}^2 \nonumber \\
&= \frac{\eta^2}{n^2} f^*(\bS)^{\top}
\bU \sum_{t'=0}^{t-1} \pth{\bI-\eta \bSigma^{(r_0)}}^{t'} {\bU}^{\top}
\bPr \bK\bPr  \bU \sum_{t'=0}^{t-1} \pth{\bI-\eta \bSigma^{(r_0)}}^{t'}
{\bU}^{\top} f^*(\bS) \nonumber \\
&= \frac 1n \ltwonorm{\eta\pth{\bK_n}^{1/2} \bPr \bU \sum_{t'=0}^{t-1} \pth{\bI-\eta \bSigma^{(r_0)}}^{t'} {\bU}^{\top} f^*(\bS)}^2 \nonumber \\
&\le \frac 1n \sum\limits_{i=1}^{r_0} \frac{\pth{1-
\pth{1-\eta \hlambda_i }^t}^2}
{\hlambda_i}\bth{{\bU}^{\top} f^*(\bS)}_i^2
\le
\frac 1n \sum\limits_{i=1}^{n} \frac{\pth{1-
\pth{1-\eta \hlambda_i }^t}^2}
{\hlambda_i}\bth{{\bU}^{\top} f^*(\bS)}_i^2
\le  \gamma_0^2,
\eal
where the last inequality follows from
Lemma~\ref{lemma:bounded-Ut-f-in-RKHS} since $f^* \in \cF^* \subseteq \cH_{K^{(r_0)}}(\gamma_0) \subseteq \cH_K(\gamma_0)$.
We define
$E_1 \defeq \norm{\sum_{t'=0}^{t-1} \hat e_1(\cdot,t')}{\cH_K}^2$
and
$E_2 \defeq \norm{\sum_{t'=0}^{t-1} \hat e_2(\cdot,t')}{\cH_K}$.
It follows from  (\ref{eq:Ur-w-bound}) in the proof of
Theorem~\ref{theorem:empirical-loss-bound} that with probability at least $1-\exp\pth{-\Theta(r_0)}$,
$\ltwonorm{{\bUr}^{\top} \bw}^2
\lsim r_0 = \Theta(d^{k_0})$.

With $n \ge \Theta(\log({2}/{\delta})\cdot d^{2k_0})$
and $r \in [r_0]$, it follows from Lemma~\ref{lemma:gap-spectrum-Tk-Tn} that with probability $1-\delta$ over $\bS$,
 we have
\bal\label{eq:bounded-Linfty-vt-sum-hlambda-bound}
\hlambda_{r} \ge \hlambda_{r_0} \ge \lambda_{r-1}
-2\sqrt{\frac{2\log{\frac{2}{\delta}}}{n}}
\ge  \mu_{k_0} -2\sqrt{\frac{2\log{\frac{2}{\delta}}}{n}}
\ge \Theta(d^{-k_0}).
\eal
It then follows from (\ref{eq:bounded-Linfty-vt-sum-hlambda-bound}) that
\bal\label{eq:bounded-Linfty-hat-et-sum-E1}
&E_1
\le \frac 1n \sum\limits_{i=1}^{r_0} \frac{\pth{1-
\pth{1-\eta \hlambda_i }^t}^2}
{\hlambda_i}\bth{{\bU}^{\top} \bw}_i^2
\le\frac {\Theta(d^{k_0})}{n} \cdot \Theta(d^{k_0}) \le \Theta(1).
\eal
We now find the upper bound for $E_2$. We have
\bals
\norm{\hat e_2(\cdot,t')}{\cH_K}^2
&\le \frac{\eta^2}{n^2} \bbe_2^{\top}(t')\bK\bbe_2(t')
\le \eta^2 \hlambda_1 \tau^2,
\eals
so that
\bal\label{eq:bounded-Linfty-hat-et-sum-E2}
&E_2
\le \sum_{t'=0}^{t-1} \norm{\hat e_2(\cdot,t')}{\cH_K}
\le  T \eta \sqrt{\hlambda_1} \tau \le 1,
\eal
if $\tau \le 1/(\eta T) $ since $\hlambda_1 \in (0, 1)$.

Finally, it follows from (\ref{eq:bounded-Linfty-vt-sum-seg1}),
(\ref{eq:bounded-Linfty-hat-et-sum-E1}),
and (\ref{eq:bounded-Linfty-hat-et-sum-E2})
that
\bals
\norm{h_t}{\cH_K}  &\le \norm{\sum_{t'=0}^{t-1} \hat v(\cdot,t')}{\cH_K}
+\norm{\sum_{t'=0}^{t-1} \hat e_1(\cdot,t')}{\cH_K} + \norm{\sum_{t'=0}^{t-1} \hat e_2(\cdot,t')}{\cH_K} \le \gamma_0 + \Theta(1).
\eals
\end{proof}

\begin{lemma}[In the proof of
{\citep[Lemma 8]{RaskuttiWY14-early-stopping-kernel-regression}}]
\label{lemma:bounded-Ut-f-in-RKHS}
For any $f \in \cH_{K}(\gamma_0)$, we have
\bal\label{eq:b.ounded-Ut-f-in-RKHS}
\frac 1n \sum_{i=1}^n \frac{\bth{\bU^{\top}f(\bS')}_i^2}{\hlambda_i} \le \gamma_0^2.
\eal

\end{lemma}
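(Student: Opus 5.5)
The plan is to use the standard eigen-decomposition argument from \citet[Lemma 8]{RaskuttiWY14-early-stopping-kernel-regression}. We show that the quadratic form $\frac 1n f(\bS)^{\top} (n\bK^{-1}) f(\bS) = \frac 1n\sum_i [\bU^{\top}f(\bS)]_i^2/\hlambda_i$ is exactly the squared $\cH_K$-norm of the minimum-norm interpolant of $f$ on $\bS$, and that this interpolant is the orthogonal projection of $f$ onto $\cH_{\bS}$.

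First, write $\bK = n\bK_n = n\bU\bSigma\bU^{\top}$, which is invertible since $\bK_n$ is non-singular. Then
\[
\frac 1n \sum_{i=1}^n \frac{\bth{\bU^{\top}f(\bS)}_i^2}{\hlambda_i} = f(\bS)^{\top}\bU (n\bSigma)^{-1}\bU^{\top} f(\bS) = f(\bS)^{\top}\bK^{-1} f(\bS).
\]

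Second, define $g = \sum_{j=1}^n K(\cdot,\bbx_j)\alpha_j$ with $\balpha = \bK^{-1} f(\bS)$. By construction $g(\bbx_i) = [\bK\balpha]_i = f(\bbx_i)$ for all $i \in [n]$, and
\[
\norm{g}{\cH_K}^2 = \balpha^{\top}\bK\balpha = f(\bS)^{\top}\bK^{-1}f(\bS).
\]

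Third, show that $g = \Proj_{\cH_{\bS}} f$. By the reproducing property, for every $j$ we have $\iprod{f-g}{K(\cdot,\bbx_j)}_{\cH} = f(\bbx_j) - g(\bbx_j) = 0$. Hence $f-g \perp \cH_{\bS}$, and since $g \in \cH_{\bS}$, $g$ is indeed the orthogonal projection of $f$ onto $\cH_{\bS}$. Pythagoras then gives $\norm{g}{\cH_K}^2 \le \norm{f}{\cH_K}^2 \le \gamma_0^2$, which chains with the first two steps to yield the claim.

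There is no real obstacle here. The only points to check are that $\bK$ is invertible, which holds because the training features are distinct and the paper already asserts non-singularity of $\bK_n$, and that the scaling between $\bK$ and $\bK_n$ is handled correctly: the factor $1/n$ in the statement cancels against $\hlambda_i$ being eigenvalues of $\bK/n$.
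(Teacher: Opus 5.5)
Your proof is correct. It is the standard projection-onto-$\cH_{\bS}$ argument that the paper imports from Raskutti, Wainwright and Yu without reproducing it: the quadratic form equals $f(\bS)^{\top}\bK^{-1}f(\bS)=\balpha^{\top}\bK\balpha=\norm{\Proj_{\cH_{\bS}}f}{\cH_K}^2\le\norm{f}{\cH_K}^2\le\gamma_0^2$. An equivalent variant writes $\Proj_{\cH_{\bS}}f=\sum_{j}\alpha_jK(\cdot,\bbx_j)$, so that $f(\bS)=\bK\balpha$ and $[\bU^{\top}f(\bS)]_i=n\hlambda_i[\bU^{\top}\balpha]_i$; this avoids inverting $\bK$, but relying on the paper's non-singularity assumption as you do is fine.
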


\begin{lemma}
\label{lemma:auxiliary-lemma-1}
For any positive real number $a \in (0,1)$ and natural number $t$,
we have
\bal\label{eq:auxiliary-lemma-1}
(1-a)^t \le e^{-ta} \le \frac{1}{eta}.
\eal
\end{lemma}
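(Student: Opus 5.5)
Lemma~\ref{lemma:auxiliary-lemma-1} is elementary, and I will prove its two inequalities separately.

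\textbf{First inequality.} I will use the standard bound $1+x \le e^{x}$, valid for all real $x$. It follows from convexity of $x \mapsto e^{x}$, since the tangent line at $0$ is $1+x$. Setting $x=-a$ gives $0 < 1-a \le e^{-a}$ for $a \in (0,1)$. Because both sides are nonnegative, raising them to the power $t \in \NN$ preserves the inequality, so $(1-a)^t \le e^{-ta}$. For $t=0$ both sides equal $1$.

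\textbf{Second inequality.} Set $s = ta > 0$; the case $t \ge 1$ is the meaningful one, and for $t=0$ the right-hand side is read as $+\infty$. I need $e^{-s} \le 1/(es)$, which is equivalent to $s e^{-s} \le e^{-1}$. Define $\phi(s) = s e^{-s}$. Its derivative is $\phi'(s) = (1-s)e^{-s}$, so $\phi$ increases on $(0,1]$ and decreases on $[1,\infty)$. Hence $\phi$ attains its global maximum on $(0,\infty)$ at $s=1$, where $\phi(1) = e^{-1}$. This gives $e^{-ta} \le 1/(eta)$.

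Combining the two inequalities yields the chain in (\ref{eq:auxiliary-lemma-1}). Neither step is a real obstacle. The only care needed is the nonnegativity of $1-a$, which is what allows raising the first inequality to the $t$-th power.
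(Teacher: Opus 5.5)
Your proof is correct and follows the same route as the paper, which combines $1-a \le e^{-a}$ (written there as $\log(1-a) \le a$, a typo for $\log(1-a) \le -a$) with $\sup_{u>0} u e^{-u} \le 1/e$. Your version is slightly more careful, since it makes the nonnegativity of $1-a$ and the degenerate case $t=0$ explicit.
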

\begin{proof}
The result follows from the facts that
$\log(1-a) \le a$ for $a \in (0,1)$ and $\sup_{u \in \RR}
ue^{-u} \le 1/e$.
\end{proof}

\noindent \textbf{Background about the Integral Operator on $\cH_{\bS}$.} Suppose $K$ is a PSD kernel defined over $\cX \times \cX$ and let the empirical Gram matrix computed by $K$ on the training features $\bS$
be $\bK_n$ with the eigenvalues
$\hlambda_1 \ge \ldots \ge \hlambda_n \ge 0$. We need the following background in the RKHS spanned by
$\set{K(\cdot,\bbx_i)}_{i=1}^n$ for the proof of Lemma~\ref{lemma:residue-f-star-low-rank}. Herein we introduce the operator $T_n \colon \cH_{\bS}  \to \cH_{\bS} $ which is  defined by $T_n g \defeq \frac 1n \sum_{i=1}^n K(\cdot,\bbx_i) g(\bbx_i)$ for every $g \in \cH_{\bS} $. It can be verified that the
eigenvalues of $T_n$ coincide with the eigenvalues of $\bK_n$, that is,
the eigenvalues of $T_n$ are
$\set{\hlambda_i}_{i=1}^n$. By the spectral theorem, all the normalized eigenfunctions of $T_n$, denoted by $\set{{\Phi}^{(k)}}_{k = 0}^{n-1}$ with ${\Phi}^{(k)} = 1/{\sqrt{n \hlambda_{k+1}}} \cdot \sum_{j=1}^n
K(\cdot,\bbx_j) \bth{\bU^{[k+1]}}_j$ for $k \in [0\colon n-1]$, is an orthonormal basis of $\cH_{\bS}$. The eigenvalue of $T_n$ corresponding to the eigenfunction ${\Phi}^{(k)}$
is $\hat \lambda_{k+1}$ for $0 \le k \le n-1$. Since $\cH_{\bS} \subseteq \cH_{K}$, we can complete $\set{{\Phi}^{(k)}}_{k = 0}^{n-1}$ so that
$\set{{\Phi}^{(k)}}_{k \ge 0}$ is an orthonormal basis of the RKHS $\cH_{K}$.

\begin{lemma}
\label{lemma:residue-f-star-low-rank}
Suppose $\delta \in (0,1/2)$ and $n \ge \Theta(\log({2}/{\delta})\cdot d^{2k_0})$. Let $\Proj_{\bUminusr} = \Proj_{\Span({\bUr})^{\perp}}$. Then with probability at least $1-2\delta$ over the random training features $\bS$,
\bal\label{eq:residue-f-star-low-rank}
\ltwonorm{\Proj_{\bUminusr}(f^*(\bS))}^2 \le n\gamma_0^2 \log{\frac{2}{\delta}} \cdot \Theta\pth{\frac{ d^{k_0} }{n}}.
\eal
\end{lemma}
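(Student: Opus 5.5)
We plan to translate the RKHS statement of Theorem~\ref{theroem:low-dim-target-function-satisfy-assump} into the Euclidean bound on $\Proj_{\bUminusr}(f^*(\bS))$. The link is the isometry between $\cH_{\bS}$ and the coordinates in the eigenbasis $\bU$. For $g\in\cH_K$, the reproducing property and the formula ${\Phi}^{(k)} = \pth{n\hlambda_{k+1}}^{-1/2}\sum_j K(\cdot,\bbx_j)\bth{\bU^{[k+1]}}_j$ give
\bals
\iprod{g}{{\Phi}^{(k)}}_{\cH_K} = \pth{n\hlambda_{k+1}}^{-1/2}\bth{\bU^{\top} g(\bS)}_{k+1}, \quad k\in[0\colon n-1].
\eals
Hence for $f^*$ we obtain
\bals
\ltwonorm{\Proj_{\bUminusr}(f^*(\bS))}^2=\sum_{i=r_0+1}^n \bth{\bU^{\top}f^*(\bS)}_i^2 = n\sum_{k=r_0}^{n-1}\hlambda_{k+1}\iprod{f^*}{{\Phi}^{(k)}}_{\cH_K}^2 \le n\,\hlambda_{r_0+1}\sum_{q=r_0}^{\infty}\iprod{f^*}{{\Phi}^{(q)}}_{\cH_K}^2 .
\eals

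The first key step is to verify that $f^*$ satisfies the hypothesis of Theorem~\ref{theroem:low-dim-target-function-satisfy-assump}. Write $f^*=\sum_{\ell\le k_0}\sum_j a_{\ell j}Y_{\ell j}$ and set $v=\sqrt{\mu_\ell}\,Y_{\ell j}$, so that $\beta=a_{\ell j}/\sqrt{\mu_\ell}$ and $\sum\beta^2\le\gamma_0^2$ by the definition of $\cF^*$. On an event of probability $1-\delta$ this gives
\bals
\sum_{q\ge r_0}\iprod{f^*}{{\Phi}^{(q)}}^2\le 32\gamma_0^2\log(2/\delta)/\pth{(\mu_{k_0}-\mu_{k_0+1})^2 n}.
\eals

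The second key step is a spectral-gap and eigenvalue estimate. Using Theorem~\ref{theorem:eigenvalue-NTK} ($\mu_\ell\asymp d^{-\ell}$, with a gap of the same order), we get $(\mu_{k_0}-\mu_{k_0+1})^{-2}\asymp d^{2k_0}$. Using Lemma~\ref{lemma:gap-spectrum-Tk-Tn}, on another event of probability $1-\delta$ we have
\bals
\hlambda_{r_0+1}\le \lambda_{r_0}+2\sqrt{2\log(2/\delta)/n}\lsim d^{-(k_0+1)}+\sqrt{\log(2/\delta)/n}.
\eals
Under $n\ge\Theta(\log(2/\delta)d^{2k_0})$, the square-root term is at most order $d^{-k_0}$, so $\hlambda_{r_0+1}\lsim d^{-k_0}$.

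Combining the two steps, on both events (probability at least $1-2\delta$ by the union bound),
\bals
\ltwonorm{\Proj_{\bUminusr}(f^*(\bS))}^2\lsim n\, d^{-k_0}\cdot \gamma_0^2\log(2/\delta)\, d^{2k_0}/n = n\gamma_0^2\log(2/\delta)\,\Theta(d^{k_0}/n).
\eals
This is exactly the claimed bound.

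The main obstacle is the eigenvalue step: controlling $\hlambda_{r_0+1}$ at order $d^{-k_0}$ rather than merely $O(1)$. That requires the empirical-to-population eigenvalue perturbation of Lemma~\ref{lemma:gap-spectrum-Tk-Tn} together with the sample size condition, and correct bookkeeping so that the $d^{2k_0}$ from the gap is cancelled by one factor of $d^{-k_0}$ from $\hlambda_{r_0+1}$ and one factor from $n$. If the perturbation bound only gives $\sqrt{\log(2/\delta)/n}$, the condition $n\gsim \log(2/\delta)d^{2k_0}$ is precisely what makes it $\lsim d^{-k_0}$, so we will invoke it there.
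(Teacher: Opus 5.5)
Your proposal is correct and follows the paper's proof essentially step for step. You reduce $\ltwonorm{\Proj_{\bUminusr}(f^*(\bS))}^2$ to $n\hlambda_{r_0+1}\sum_{q\ge r_0}\iprod{f^*}{\Phi^{(q)}}_{\cH_K}^2$, bound the sum by Theorem~\ref{theroem:low-dim-target-function-satisfy-assump}, get $\hlambda_{r_0+1}\lsim d^{-k_0}$ from Lemma~\ref{lemma:gap-spectrum-Tk-Tn} and the sample-size condition, and union-bound the two events. The only difference is cosmetic: you obtain the identity $\bth{\bU^{\top}f^*(\bS)}_{k+1}^2 = n\hlambda_{k+1}\iprod{f^*}{\Phi^{(k)}}_{\cH_K}^2$ directly from the reproducing property, whereas the paper routes the same computation through $\bar f^{*,r_0}$ and the quadratic form $\iprod{T_n \bar f^{*,r_0}}{\bar f^{*,r_0}}$.
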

\begin{proof}
We have
$\Proj_{\cH_{\bS}}(f^*) =
\sum\limits_{k=0}^{n-1} \iprod{f^*}{\Phi^{(k)}}\Phi^{(k)}$,
$\Proj_{\cH_{\bS,r_0}}(f^*) = \sum\limits_{k=0}^{r_0-1} \iprod{f^*}{\Phi^{(k)}}\Phi^{(k)}$,
and define
\bals
\bar f^{*,r_0} &\defeq \Proj_{\cH_{\bS}}(f^*) - \Proj_{\cH_{\bS,r_0}}(f^*)
=\sum\limits_{q=r_0}^n \iprod{f^{*}}{{\Phi}^{(q)}}{\Phi}^{(q)}.
\eals
Let $\bUminusr \in \RR^{n \times (n-r_0)}$ be the submatrix formed by all the columns of $\bU$ except for the top $r_0$ columns in $\bUr$.
It follows by the introduction to the space $\cH_{\bS}$ before
Lemma~\ref{lemma:residue-f-star-low-rank}
that $\set{{\Phi}^{(k)}}_{k = 0}^{n-1}$ is an orthonormal basis of $\cH_{\bS}$, and ${\Phi}^{(k)}$ is the eigenfunction of the operator $T_n$ with the corresponding eigenvalue $\hlambda_{k+1}$.
 Therefore, $\bUminusr {\bUminusr}^{\top} \Phi^{(k)}(\bS)   = 0$ for all $k \in [r_0-1]$. As a result, with probability at least $1-\delta$,
\bal\label{eq:residue-f-star-low-rank-seg1}
\frac 1n \ltwonorm{\bUminusr {\bUminusr}^{\top}
f^*(\bS)}^2 =
\frac 1n \ltwonorm{\bUminusr {\bUminusr}^{\top}
\pth{\Proj_{\cH_{\bS}}(f^*)}(\bS) }^2
= \frac 1n \sum\limits_{i=1}^n \pth{\bar f^{*,r_0}(\bbx_i)}^2
&.
\eal
We have
\bal\label{eq:residue-f-star-low-rank-seg2}
\iprod{T_n \bar f^{*,r_0}}{\bar f^{*,r_0}}
=  \iprod{\frac 1n \sum\limits_{i=1}^n K(\cdot,\bbx_i) \bar f^{*,r_0}(\bbx_i)}{\bar f^{*,r_0}}_{\cH_K}
=  \frac 1n \sum\limits_{i=1}^n \pth{\bar f^{*,r_0}(\bbx_i)}^2.
\eal
On the other hand, with probability $1-\delta$,
\bal\label{eq:residue-f-star-low-rank-seg3}
\iprod{T_n \bar f^{*,r_0}}{\bar f^{*,r_0}}
&=\iprod{T_n \sum\limits_{q=r_0}^n \iprod{\bar f^{*,r_0}}{{\Phi}^{(q)}}{\Phi}^{(q)} }{ \sum\limits_{q=r_0}^n \iprod{\bar f^{*,r_0}}{{\Phi}^{(q)}}{\Phi}^{(q)} }_{\cH_K} \nonumber \\
&= \sum\limits_{q=r_0}^n \hlambda_{q+1}\iprod{\bar f^{*,r_0}}{{\Phi}^{(q)}}^2
\le\ \hlambda_{r_0+1} \sum\limits_{q=r_0}^n \iprod{f^*}{{\Phi}^{(q)}}^2
\stackrel{\circled{1}}{\le} \hlambda_{r_0+1}  \zeta_{n,\gamma_0,r_0,\delta},
\eal
where $\circled{1}$ is due to Theorem~\ref{theroem:low-dim-target-function-satisfy-assump}.
It follows from (\ref{eq:residue-f-star-low-rank-seg1})-(\ref{eq:residue-f-star-low-rank-seg3}) that
\bal\label{eq:residue-f-star-low-rank-seg4}
\frac 1n \ltwonorm{\bUminusr {\bUminusr}^{\top}
f^*(\bS)}^2 \le \hlambda_{r_0+1}  \zeta_{n,\gamma_0,r_0,\delta}.
\eal
We now find the upper bound for $\hlambda_{r_0+1}$. It follows from Lemma~\ref{lemma:gap-spectrum-Tk-Tn}
that $\abth{\lambda_j - \hlambda_j} \le 2\sqrt{\frac{2\log{\frac{2}{\delta}}}{n}}$ for all $j \in [n]$ with probability at least $1-\delta$. Furthermore, it follows from
Theorem~\ref{theorem:eigenvalue-NTK} that $\lambda_{r_0} = \mu_{k_0+1} = \Theta(d^{-k_0-1})$ with $r_0 = m_{k_0}$. As a result, we have
\bal\label{eq:residue-f-star-low-rank-seg5}
\hlambda_{r_0+1} \le \lambda_{r_0} + 2\sqrt{\frac{2\log{\frac{2}{\delta}}}{n}}
\le \Theta(d^{-k_0}),
\eal
where the last inequality holds with probability $1-\delta$ over $\bS$ due to Lemma~\ref{lemma:gap-spectrum-Tk-Tn} and $n \ge \Theta(\log({2}/{\delta})\cdot d^{2k_0})$.
It then follows from (\ref{eq:residue-f-star-low-rank-seg4}) and (\ref{eq:residue-f-star-low-rank-seg5}) that
\bals
\frac 1n \ltwonorm{\bUminusr {\bUminusr}^{\top}
f^*(\bS)}^2 &\le \Theta(d^{-k_0}) \cdot \zeta_{n,\gamma_0,r_0,\delta}
= \Theta(d^{-k_0})  \cdot \frac{32\gamma_0^2 \log{\frac{2}{\delta}} }{\pth{\mu_{k_0} - \mu_{k_0+1}}^2 n}
\nonumber \\
&= \Theta(d^{-k_0}) \cdot  \frac{32\gamma_0^2 \log{\frac{2}{\delta}} }{\pth{\Theta(d^{-k_0}) - \Theta(d^{-k_0-1})}^2 n}
\nonumber \\
&=  \gamma_0^2 \log{\frac{2}{\delta}} \cdot \Theta\pth{\frac{ d^{k_0} }{n}},
\eals
which proves (\ref{eq:residue-f-star-low-rank}).

\end{proof}

\begin{lemma}
[{\citep[Proposition 10]{RosascoBV10-integral-operator}}]
\label{lemma:gap-spectrum-Tk-Tn}

Let $\delta \in (0,1)$, then with probability $1-\delta$ over the training features $\bS$, for all $j \in [n]$,
\bal\label{eq:gap-spectrum-Tk-Tn}
&
\abth{\lambda_{j-1} - \hlambda_j} \le 2\sqrt{\frac{2\log{\frac{2}{\delta}}}{n}}.
\eal%
\end{lemma}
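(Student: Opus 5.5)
The statement to prove is Lemma~\ref{lemma:gap-spectrum-Tk-Tn}, a concentration bound for the spectrum of the empirical kernel operator. I would follow the standard argument of \citet{RosascoBV10-integral-operator}: compare $T_n$ and $T_K$ as operators on the RKHS $\cH_K$ and then transfer the operator-norm bound to the eigenvalues.

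\textbf{Step 1: the operators on $\cH_K$.} Extend $T_n$ to all of $\cH_K$ by $T_n g = \frac1n\sum_{i=1}^n K(\cdot,\bbx_i)\iprod{g}{K(\cdot,\bbx_i)}_{\cH}$. This is a finite-rank positive self-adjoint operator. Its nonzero eigenvalues are exactly the eigenvalues $\hlambda_1\ge\cdots\ge\hlambda_n$ of $\bK_n$, by the standard correspondence between $\bK_n$ and the sampling operator $S_n^*S_n$. Likewise, let $L_K g = \int K(\cdot,\bx)g(\bx)\diff\mu(\bx)$ act on $\cH_K$. Its eigenvalues coincide with those of $T_K$ on $L^2(\cX,\mu)$, namely $\lambda_0\ge\lambda_1\ge\cdots$, since both equal $S S^*$ and $S^*S$ for the inclusion $S\colon \cH_K\to L^2$. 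The index shift $\lambda_{j-1}\leftrightarrow\hlambda_j$ comes only from the paper's convention of indexing $\lambda$ from $0$ and $\hlambda$ from $1$.

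\textbf{Step 2: Hilbert--Schmidt concentration.} Write $T_n - L_K = \frac1n\sum_i(\xi_i-\E\xi_i)$, where $\xi_i = K(\cdot,\bbx_i)\otimes K(\cdot,\bbx_i)$ are i.i.d.\ random elements of the Hilbert space of Hilbert--Schmidt operators on $\cH_K$. Each satisfies $\norm{\xi_i}{HS} = K(\bbx_i,\bbx_i)\le \sup_{\bx}K(\bx,\bx)=1$. Applying a Hoeffding/Pinelis-type inequality for bounded i.i.d.\ vectors in a Hilbert space, with $\norm{\xi_i-\E\xi_i}{HS}\le 2$, gives with probability at least $1-\delta$
\[
\norm{T_n-L_K}{HS}\le 2\sqrt{\frac{2\log\frac{2}{\delta}}{n}}.
\]
Matching the exact constant requires the version used in \citet{RosascoBV10-integral-operator}; Pinelis' inequality in its form $\norm{\cdot}{}\le \frac{2M\log(2/\delta)}{n}+\sqrt{\frac{2\sigma^2\log(2/\delta)}{n}}$ may need $M$ and $\sigma$ absorbed carefully, so this is where care with constants is needed.

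\textbf{Step 3: transfer to eigenvalues.} Both operators are compact, positive and self-adjoint. By Weyl's inequality (Lidskii/Kato), $\sup_j|\lambda_{j-1}-\hlambda_j|\le\norm{T_n-L_K}{op}\le\norm{T_n-L_K}{HS}$. Here $\hlambda_j=0$ for $j>n$, and the eigenvalues are matched in decreasing order including multiplicities. This proves the claim for all $j\in[n]$ simultaneously.

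The main obstacle is Step 1. One must verify that the nonzero spectrum of $T_n$ on $\cH_K$ coincides with the spectrum of $\bK_n$, and that the spectrum of $L_K$ on $\cH_K$ coincides with that of $T_K$ on $L^2$. I would handle both with the $A^*A$ versus $AA^*$ argument, where $A$ is the sampling operator $S_n\colon g\mapsto (g(\bbx_i)/\sqrt n)_i$ in the empirical case and the inclusion $S$ in the population case.
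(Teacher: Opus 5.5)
Your proposal is correct and is essentially the argument behind \citet[Proposition 10]{RosascoBV10-integral-operator}, which the paper invokes without proof. That argument has two steps: Hilbert--Schmidt concentration of $T_n$ around the population operator on $\cH_K$, and then a Weyl/Kato eigenvalue perturbation bound for positive compact operators. The first step is their Theorem 7, the same bound quoted in the proof of Lemma~\ref{lemma:small-correlation-eigenvector-eigenfunction}, here with $\tau_0^2=\sup_{\bx}K(\bx,\bx)=1$, and the index shift comes only from the indexing conventions.

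On your concern about constants: the Hoeffding form of Pinelis' inequality, $\Pr\bigl[\|\frac1n\sum_i(\xi_i-\E\xi_i)\|_{\textup{HS}}\ge\epsilon\bigr]\le 2\exp(-n\epsilon^2/(2C^2))$ with $C=2$, gives exactly $2\sqrt{2\log(2/\delta)/n}$. No Bernstein-type bookkeeping is needed.
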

\begin{remark}
We remark that the sequence $\set{\lambda_j}_{j \ge 0}$ starts with index $0$, so that $\lambda_{j-1} $ is in fact the $j$-th element in the extended enumeration of the distinct eigenvalues of $T_K$. The extended enumeration \citep{RosascoBV10-integral-operator}
of the distinct eigenvalues of $T_K$ is a sequence where each nonzero eigenvalue of $T_K$ appears as many times as its multiplicity and the other values (if any) are zero.
\end{remark}

\begin{lemma}\label{lemma:pupulartion-RC-low-rank-proj}
Suppose $n \ge r_0$. Then with probability at least $1-\delta$ over the random training features $\bS$, for every $r > 0$, we have
\bal\label{eq:pupulartion-RC-low-rank-proj}
\cfrakR \pth{\set{f-f^* \colon f \in \cF(B_h,w,\bS,r_0), \Expect{P}{(f-f^*)^2} \le r}}
\le  \sqrt{\log{\frac{2}{\delta}} } \cdot \Theta\pth{\frac{d^{k_0}}{n}} + \sqrt{\frac{rr_0}{n}}  + 2w.
\eal
\end{lemma}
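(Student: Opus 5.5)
The plan is to split each element of the localized class. Write every $f \in \cF(B_h,w,\bS,r_0)$ as $f = h + e$ with $h \in \cH_K(B_h)\cap\cH_{\bS,r_0}$ and $\supnorm{e}\le w$. Then $f - f^* = (h - f^*) + e$, and subadditivity of the supremum gives
\[
\cfrakR(\set{f-f^*}) \le \cfrakR(\set{h - f^*}) + \cfrakR(\set{e\colon \supnorm{e}\le w}).
\]
The second term is at most $w$. The localization constraint $\Expect{P}{(f-f^*)^2}\le r$ transfers to $h$ in the form $\Expect{P}{(h-f^*)^2}\le (\sqrt r + w)^2 \le 2r+2w^2$. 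The $2w$ in the target bound absorbs these losses, since $w<1$ and the resulting extra term $\sqrt{w^2 r_0/n}\le w$ because $r_0\le n$.

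For the main term, I will separate the low-dimensional and residual parts of $h - f^*$ using the population eigenbasis $\set{v_j}$. Let $P^{T_K}_{r_0}$ denote the projection onto $\Span\set{v_j}_{j<r_0}$. Then
\[
h - f^* = P^{T_K}_{r_0}(h-f^*) + (I - P^{T_K}_{r_0})h,
\]
because $f^*$ lies in the top $r_0$ eigenspace. The first piece lives in a fixed space of dimension $r_0$, namely spherical harmonics of degree at most $k_0$, which is orthonormal in $L^2(\mu)$. Its $L^2(P)$ norm is bounded by that of $h-f^*$, which is at most $\sqrt{2r+2w^2}$. The standard Cauchy--Schwarz bound for a linear class with $L^2$-orthonormal basis,
\[
\Expect{}{\sup_{\norm{\beta}{2}\le\rho}\tfrac1n\sum_i\sigma_i\textstyle\sum_{j<r_0}\beta_j e_j(\bbx_i)} \le \rho\sqrt{r_0/n},
\]
then gives the $\sqrt{r r_0/n}$ term up to the $w$ correction.

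The residual piece uses (\ref{eq:training-func-low-rank-residue}) of Theorem~\ref{theroem:low-dim-target-function-satisfy-assump}. With probability $1-\delta$ over $\bS$, every $h$ in the class satisfies $\norm{(I-P^{T_K}_{r_0})h}{\cH_K}^2 = \sum_{q\ge r_0}\iprod{h}{v_q}_{\cH_K}^2 \le \zeta_{n,B_h,r_0,\delta}$, and the bound is uniform since it comes from the operator-difference bound. The functions $g=(I-P^{T_K}_{r_0})h$ therefore lie in an $\cH_K$ ball of radius $\sqrt{\zeta}$. Their Rademacher complexity is at most $\sqrt{\zeta}\sqrt{\sup_x K(x,x)/n}\le\sqrt{\zeta/n}$. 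Using $\mu_{k_0}-\mu_{k_0+1}\asymp d^{-k_0}$ from Theorem~\ref{theorem:eigenvalue-NTK}, $\zeta\asymp B_h^2\log(2/\delta)\, d^{2k_0}/n$, so this term is $\sqrt{\log(2/\delta)}\cdot\Theta(d^{k_0}/n)$, which is the first term of the claim.

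The main obstacle is a conditioning subtlety. The class $\cH_{\bS,r_0}$ depends on the sample, while $\cfrakR$ takes an expectation over $\bS$. I would handle this by working on the high-probability event where the uniform residual bound holds: there the whole class is contained in the data-independent class $\set{a+g}$, with $a$ in the degree-$\le k_0$ harmonics and $\norm{g}{\cH_K}\le\sqrt\zeta$. The Rademacher complexity of this data-independent superset is then bounded as above, and doing so is what justifies the stated "with probability at least $1-\delta$ over $\bS$".
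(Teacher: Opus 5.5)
Your proposal is correct and follows essentially the same route as the paper. You split $f-f^*$ into three parts: the population top-$r_0$ part $\Proj_{\cH_{r_0}}(h)-f^*$; the residual $h-\Proj_{\cH_{r_0}}(h)$, whose $\cH_K$-norm is controlled uniformly via (\ref{eq:training-func-low-rank-residue}) so that its Rademacher complexity is at most $\sqrt{\zeta_{n,B_h,r_0,\delta}/n}$; and the $L^\infty$ error $e$. The differences are minor: you transfer the localization through $L^2(P)$-orthogonality (avoiding the paper's extra $\lambda_{r_0}\zeta_{n,B_h,r_0,\delta}$ term), use a direct Cauchy--Schwarz bound instead of Lemma~\ref{lemma:LRC-population-NN} with $Q=r_0$, and make explicit, via the data-independent superset, the $\bS$-dependence of the class that the paper leaves implicit.
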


\begin{proof}
Let $\cH_{K,r_0} = \overline{\Span{\set{v_q}_{q=0}^{r_0-1}}}$ be the subspace in $\cH_K$ spanned by $\set{v_q}_{q=0}^{r_0-1}$, and we
define  $\hat \cF_r \defeq \set{f \in \cF(B_h,w,\bS,r_0), \Expect{P}{(f-f^*)^2} \le r}$.
For every $f \in \hat \cF_r$, we have
$f = h+e$ such that $\supnorm{e} \le w$ and $h \in \cH_K(B_h)\cap \cH_{\bS,r_0}$, and
$\Expect{P}{(h-f^*)^2} \le 2(r+w^2)$. Furthermore, we have $\Proj_{\cH_{r_0}}(h) = \sum_{j = 0}^{r_0-1} \alpha_j v_j$
with $\alpha_j = \iprod{h}{v_j}_{\cH_K}$ for all $j \ge 0$.
We define $\bar h =h - \Proj_{\cH_{r_0}}(h)$, then $\bar h\in \cH_K(B_h)$. We have
\bal\label{eq:pupulartion-RC-low-rank-proj-seg1}
\Expect{P}{{\bar h}^2}
&= \Expect{P}{\pth{\sum\limits_{j \ge r_0} \alpha_j v_j }^2}
= \sum\limits_{j \ge r_0} \alpha_j^2 \lambda_j
\le \lambda_{r_0} \cdot \sum\limits_{j \ge r_0} \alpha_j^2
\stackrel{\circled{1}}{\le}
\lambda_{r_0} \zeta_{n,B_h,r_0,\delta} \nonumber \\
&\stackrel{\circled{2}}{\le} \log{\frac{2}{\delta}} \cdot \Theta\pth{\frac{ d^{k_0} }{n}} \defeq r_{n,k_0,\delta},
\eal
where $\circled{1}$ holds with probability at least $1-\delta$
 over $\bS$ by (\ref{eq:training-func-low-rank-residue}) of
Theorem~\ref{theroem:low-dim-target-function-satisfy-assump}, and $\circled{2}$ follows from the similar argument in the last part of the proof
of Lemma~\ref{lemma:residue-f-star-low-rank} with
$\lambda_{r_0} = \mu_{k_0+1} = \Theta(d^{-k_0-1})$.
It then follows from (\ref{eq:pupulartion-RC-low-rank-proj-seg1})
and the Cauchy-Schwarz inequality that for every $f \in \hat \cF_r$,
\bal\label{eq:pupulartion-RC-low-rank-proj-seg2}
\Expect{P}{(\Proj_{\cH_{r_0}}(h)-f^*)^2}
&\le 2 \Expect{P}{(h- f^*)^2} +
 2\Expect{P}{{\bar h}^2}
 \le 4(r+w^2)+ 2r_{n,k_0,\delta}.
\eal
We then have
\bal\label{eq:pupulartion-RC-low-rank-proj-seg3}
&\cfrakR \pth{\set{\Proj_{\cH_{r_0}}(h)-f^* \colon  f \in  \hat \cF_r}} \nonumber \\
&\stackrel{\circled{3}}{\le} \cfrakR \pth{\set{\Proj_{\cH_{r_0}}(h)-f^* \colon\Expect{P}{(\Proj_{\cH_{r_0}}(h)-f^*)^2}\le4(r+w^2)+ 2r(n,k_0,\delta)} }  \nonumber \\
&\stackrel{\circled{4}}{\le}
2\cfrakR \pth{\set{f \in \cH_{K^{(r_0)}}(B_h) \colon  \Expect{P}{f^2}\le  r+w^2+ \frac{r_{n,k_0,\delta}}{2}} }
\stackrel{\circled{5}}{\le}
\sqrt{r+w^2+ \frac{r_{n,k_0,\delta}}{2}} \cdot \sqrt{\frac{r_0}{n}}.
\eal
Here $\circled{3}$ follows from (\ref{eq:pupulartion-RC-low-rank-proj-seg2}).
Since $\Proj_{\cH_{r_0}}(f),f^* \in \cH_{r_0} \cap \cH_K(B_h) \subseteq \cH_{K^{(r_0)}}(B_h)$, we have $(\Proj_{\cH_{r_0}}(f)-f^*)/2 \in \cH_{K^{(r_0)}}(B_h)$ due to the fact that
$\cH_{K^{(r_0)}}(B_h)$ is symmetric and convex, and
it follows that $\circled{4}$
holds.  $\circled{5}$ follows from Lemma~\ref{lemma:LRC-population-NN} with $Q = r_0$ in (\ref{eq:varphi-LRC-population-NN}) of
Lemma~\ref{lemma:LRC-population-NN}.

We then derive the upper bound for
$\cfrakR \pth{\set{\bar h \colon f \in  \hat \cF_r}}$. First, it follows from Theorem~\ref{theroem:low-dim-target-function-satisfy-assump} and the argument similar to (\ref{eq:pupulartion-RC-low-rank-proj-seg1}) that
\bal\label{eq:pupulartion-RC-low-rank-proj-bar-h-RKHS-norm}
\norm{\bar h}{\cH_K}^2 =\sum\limits_{j \ge r_0} \alpha_j^2
\le \zeta_{n,B_h,r_0,\delta} \le B_h^2 \log{\frac{2}{\delta}} \cdot \Theta\pth{\frac{ d^{2k_0} }{n}} \defeq B_{\bar h}^2.
\eal
We then have
\bal\label{eq:pupulartion-RC-low-rank-proj-seg4}
&\cfrakR \pth{\set{\bar h \colon f \in  \hat \cF_r}}
= \Expect{\set{\bbx_i}_{i=1}^n, \set{\sigma_i}_{i=1}^n}{\sup_{\bar h \in \cH_K(B_{\bar h})} {\frac{1}{n} \sum\limits_{i=1}^n {\sigma_i}{\bar h(\bbx_i)}} } \nonumber \\
&\le \frac{B_{\bar h}}{n}\Expect{\set{\bbx_i}_{i=1}^n, \set{\sigma_i}_{i=1}^n}{\sup_{f \in \cF} { \norm{\sum\limits_{i=1}^n \sigma_i K(\cdot,\bbx_i)}{\cH_K}  }}
\le \frac{B_{\bar h}}{\sqrt n} \le \sqrt{\log{\frac{2}{\delta}}}
\cdot \Theta\pth{\frac{ d^{k_0} }{n}}.
\eal
Finally, it follows from
(\ref{eq:pupulartion-RC-low-rank-proj-seg3}) and (\ref{eq:pupulartion-RC-low-rank-proj-seg4})
that
\bal
&\cfrakR \pth{\set{f-f^* \colon f \in  \hat \cF_r}}
\le \cfrakR \pth{\set{\Proj_{\cH_{r_0}}(h)-f^* \colon  f \in  \hat \cF_r}} +\cfrakR \pth{\set{\bar h \colon f \in  \hat \cF_r}}  + w \nonumber \\
&\le  \sqrt{\log{\frac{2}{\delta}} } \cdot \Theta\pth{\frac{d^{k_0}}{n}}
+w  \sqrt{\frac{r_0}{n}} + \sqrt{\frac{rr_0}{n}} + w
\le \ \sqrt{\log{\frac{2}{\delta}} } \cdot \Theta\pth{\frac{d^{k_0}}{n}} + \sqrt{\frac{rr_0}{n}}  + 2w,
\eal
which proves (\ref{eq:pupulartion-RC-low-rank-proj}).

\end{proof}

\begin{lemma}
[{\citet[Lemma C.9]{Yang2025-generalization-two-layer-regression},\citet[Lemma VI.4]{yang2024gradientdescentfindsoverparameterized}}]
\label{lemma:LRC-population-NN}
For every $B,w > 0$, the function class $\cF(B,w)$ is defined as
$\cF(B,w) \defeq \set{f \colon f = h+e, h \in \cH_{K}(B),
\supnorm{e} \le w}$. Then for every $r > 0$,
\bal\label{eq:LRC-population-NN}
&\cfrakR
\pth{\set{f \in \cF(B,w) \colon \Expect{P}{f^2} \le r}}
\le \varphi_{B,w}(r),
\eal%
where
\bal\label{eq:varphi-LRC-population-NN}
\varphi_{B,w}(r) &\defeq
\min_{Q \colon Q \ge 0} \pth{({\sqrt r} + w) \sqrt{\frac{Q}{n}} +
B
\pth{\frac{\sum\limits_{q = Q+1}^{\infty}\lambda_q}{n}}^{1/2}} + w.
\eal
\end{lemma}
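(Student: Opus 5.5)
The plan is the classical Mendelson-type argument for localized Rademacher complexities of RKHS balls, with one extra step to peel off the $L^\infty$ perturbation $e$. Fix $r>0$ and an integer $Q \ge 0$, and prove that the Rademacher complexity is at most $(\sqrt r + w)\sqrt{Q/n} + B\big(\sum_{q > Q}\lambda_q/n\big)^{1/2} + w$. Since $Q$ is arbitrary, taking the minimum over $Q$ then gives $\varphi_{B,w}(r)$.

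\emph{Removing $e$.} Take any $f = h+e$ in the localized class. Since $\supnorm{e} \le w$, we have $\frac1n\sum_i \sigma_i e(\bbx_i) \le w$ pointwise, which contributes the additive $+w$. By the triangle inequality in $L^2(\mu)$, $\norm{h}{L^2(\mu)} \le \sqrt r + w$. Thus it suffices to bound $\Expect{}{\sup_h \frac1n \sum_i \sigma_i h(\bbx_i)}$ over the set
\[
\set{h \in \cH_K(B) \colon \norm{h}{L^2(\mu)} \le \sqrt r + w}.
\]

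\emph{Spectral split of $h$.} Expand $h = \sum_{j\ge 0}\beta_j e_j$ in the eigenbasis of $T_K$. The two constraints become $\sum_j \beta_j^2 \le (\sqrt r+w)^2$ and $\sum_j \beta_j^2/\lambda_j \le B^2$. The expansion converges pointwise (indeed uniformly, by Mercer's theorem and continuity of $K$ on the compact $\cX$), so
\[
\sum_i \sigma_i h(\bbx_i) = \sum_j \beta_j Z_j, \qquad Z_j \defeq \sum_i \sigma_i e_j(\bbx_i).
\]
Split this sum into the head $j \le Q$ and the tail $j > Q$ (re-indexing by one if needed to match the $q = Q+1$ convention of $\varphi_{B,w}$).

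For the head, Cauchy--Schwarz with the $L^2$ constraint gives the bound $(\sqrt r+w)\big(\sum_{j\le Q} Z_j^2\big)^{1/2}$.

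For the tail, write $\beta_j Z_j = (\beta_j/\sqrt{\lambda_j})(\sqrt{\lambda_j}Z_j)$. Cauchy--Schwarz with the RKHS constraint then gives $B\big(\sum_{j>Q}\lambda_j Z_j^2\big)^{1/2}$.

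Both bounds are free of $h$, so the supremum over $h$ can be taken before the expectation. Apply Jensen's inequality and compute second moments: by independence of the $\sigma_i$ and orthonormality of $e_j$ in $L^2(\mu)$, $\Expect{}{Z_j^2} = n\,\Expect{P}{e_j^2} = n$. Dividing by $n$ yields $(\sqrt r+w)\sqrt{Q/n}$ for the head and $B\big(\sum_{j>Q}\lambda_j/n\big)^{1/2}$ for the tail. Summability of the tail uses $\sum_j \lambda_j = \Expect{P}{K(\bx,\bx)} \le 1$.

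\emph{Expected obstacle.} The only delicate point is justifying the interchange of the infinite expansion with point evaluation and with the supremum and expectation. I would handle it by noting that point evaluation is continuous on $\cH_K$, and $\sum_j \beta_j e_j$ converges in $\cH_K$ because $\sum_j \beta_j^2/\lambda_j < \infty$. Hence $h(\bbx_i) = \sum_j \beta_j e_j(\bbx_i)$ for every sample point, and the Cauchy--Schwarz bounds hold pathwise. The remaining bookkeeping, namely the off-by-one indexing in $Q$ and absorbing the $w$ terms, is routine.
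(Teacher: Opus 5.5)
Your argument is correct, and it is the standard Mendelson-type route for this bound: you peel off $e$ (cost $w$, and the $L^2$ radius grows to $\sqrt r+w$), expand $h$ in the eigenbasis of $T_K$, and apply Cauchy--Schwarz against the $L^2$ constraint on the first $Q$ coordinates and against the RKHS constraint on the rest. Jensen with $\E Z_j^2=n$ then finishes it. The paper gives no proof of this lemma and only cites it, so there is no different route to compare against.

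\textbf{The index shift is not routine bookkeeping.}

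This paper indexes eigenvalues from $0$ ($\lambda_0\ge\lambda_1\ge\cdots$). A head of $Q$ coordinates is therefore $j\in\{0,\dots,Q-1\}$, and your tail term becomes $B(\sum_{q\ge Q}\lambda_q/n)^{1/2}$. Your literal split $j\le Q$ keeps the displayed tail but costs $\sqrt{(Q+1)/n}$ in the head.

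No re-indexing gives $\sqrt{Q/n}$ together with $\sum_{q\ge Q+1}\lambda_q$. With $0$-based indexing that bound is false for general kernels, and the paper does apply the lemma to kernels other than $K$, namely $K^{(r_0)}$. Here is a counterexample:
\begin{itemize}
\item Take the constant kernel $K\equiv c_0>0$. Then $\lambda_0=c_0$ and $\lambda_q=0$ for $q\ge 1$.
\item Choosing $Q=0$ gives $\varphi_{B,w}(r)\le w$.
\item The localized class contains every constant $c$ with $|c|\le\min\{B\sqrt{c_0},\sqrt r\}$ (take $e=0$; note $\norm{c}{\cH_K}=|c|/\sqrt{c_0}$).
\item Its Rademacher complexity is therefore at least $\min\{B\sqrt{c_0},\sqrt r\}\,\E|\frac1n\sum_i\sigma_i|\ge\min\{B\sqrt{c_0},\sqrt r\}/\sqrt{2n}$, which exceeds $w$ once $w$ is small.
\end{itemize}

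What you actually prove is the statement with eigenvalues indexed from $1$, or equivalently with $\sum_{q\ge Q}\lambda_q$ in the paper's convention, and that is the correct form. The paper's use of the lemma is unaffected by the shift: it takes $Q=r_0$ for $K^{(r_0)}$, whose eigenvalues vanish from index $r_0$ onward.
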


\begin{lemma}
[{\citet[Lemma B.9]{yang2024gradientdescentfindsoverparameterized}}]
\label{lemma:sub-root-fix-point-properties}
Suppose $\psi \colon [0,\infty) \to [0,\infty)$ is a sub-root function with the unique fixed point $r^*$. Then the following properties hold.

\begin{itemize}[leftmargin=8pt]
\item[(1)] Let $a \ge 0$, then $\psi(r) + a$ as a function of $r$ is also a sub-root function with fixed point $r^*_a$, and
$r^* \le r^*_a \le r^* + 2a$.
\item[(2)] Let $b \ge 1$, $c \ge 0$ then $\psi(br+c)$ as a function of $r$ is also a sub-root function with fixed point $r^*_b$, and
$r^*_b \le br^* +2c/b$.
\item[(3)] Let $b \ge 1$, then $\psi_b(r) = b\psi(r)$ is also a sub-root function with fixed point $r^*_b$, and
$r^*_b \le b^2r^* $.
\end{itemize}
\end{lemma}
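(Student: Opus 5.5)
All three parts rest on two standard facts about a sub-root function $\psi$ with positive fixed point $r^*$, taken from \citet[Lemma 3.2]{bartlett2005}. The first is the dichotomy: $\psi(r)\ge r$ if and only if $r\le r^*$. The second is a comparison bound obtained from the monotonicity of $\psi(r)/\sqrt r$: for every $r\ge r^*$,
\[
\psi(r)\le \sqrt{r}\,\frac{\psi(r^*)}{\sqrt{r^*}}=\sqrt{r\,r^*}.
\]
In each part I will first check that the new function is sub-root, so that it has a unique fixed point. I will then bound that fixed point by feeding the fixed-point equation into the comparison bound and using AM--GM. I will assume $\psi$ is not identically zero so that $r^*>0$; the degenerate case is trivial.

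For part (1), $\psi+a$ is clearly nonnegative and nondecreasing. Its ratio $(\psi(r)+a)/\sqrt r=\psi(r)/\sqrt r+a/\sqrt r$ is a sum of nonincreasing functions, so $\psi+a$ is sub-root. The fixed point satisfies $r^*_a=\psi(r^*_a)+a\ge\psi(r^*_a)$, and the dichotomy then gives $r^*_a\ge r^*$. The comparison bound applied at $r^*_a$ gives
\[
r^*_a\le\sqrt{r^*_a r^*}+a\le \tfrac12(r^*_a+r^*)+a,
\]
which rearranges to $r^*_a\le r^*+2a$.

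For part (2), set $\phi(r)=\psi(br+c)$; it is nonnegative and nondecreasing. I will write
\[
\frac{\phi(r)}{\sqrt r}=\frac{\psi(br+c)}{\sqrt{br+c}}\cdot\sqrt{b+c/r}.
\]
This is a product of two nonnegative nonincreasing functions of $r$ (the first because $br+c$ is increasing in $r$), so $\phi$ is sub-root. For the fixed point $r^*_b=\psi(br^*_b+c)$ I split into two cases.
\begin{itemize}
\item If $br^*_b+c<r^*$, then $r^*_b<r^*/b\le br^*$ because $b\ge1$.
\item Otherwise the comparison bound gives $r^*_b\le\sqrt{(br^*_b+c)\,r^*}$. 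Applying AM--GM in the weighted form $\sqrt{xy}\le x/(2b)+by/2$ yields $r^*_b\le r^*_b/2+c/(2b)+br^*/2$, hence $r^*_b\le br^*+c/b\le br^*+2c/b$.
\end{itemize}

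For part (3), $b\psi$ is sub-root because it is a positive multiple of a sub-root function. If $r^*_b\le r^*$, then $r^*_b\le b^2r^*$ trivially. Otherwise the comparison bound gives $r^*_b=b\psi(r^*_b)\le b\sqrt{r^*_b r^*}$, so $r^*_b\le b^2 r^*$.

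The only delicate step is the case split in part (2). The comparison bound is valid only when the argument $br^*_b+c$ is at least $r^*$, so the case where it is below $r^*$ must be handled separately; this is exactly where $b\ge1$ is needed. The remaining steps are routine algebra once the two facts from \citet[Lemma 3.2]{bartlett2005} are invoked.
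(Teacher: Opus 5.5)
The paper gives no proof of this lemma; it is imported directly from \citet[Lemma B.9]{yang2024gradientdescentfindsoverparameterized}, so there is no in-paper argument to compare against. Your proof is correct and self-contained.

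\begin{itemize}
\item The sub-root checks are right, including the factorization $\psi(br+c)/\sqrt r=\bigl(\psi(br+c)/\sqrt{br+c}\bigr)\sqrt{b+c/r}$.
\item You use the comparison bound $\psi(r)\le\sqrt{r r^*}$ only where $r\ge r^*$.
\item In part (2), your case split correctly handles $br^*_b+c<r^*$ using $b\ge1$. You in fact obtain the sharper $r^*_b\le br^*+c/b$.
\end{itemize}

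One small precision concerns part (1). The form of \citet[Lemma 3.2]{bartlett2005} you actually need is $r\ge\psi(r)\iff r\ge r^*$ for $r>0$. This follows from your stated version together with uniqueness of the positive fixed point.

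If you want to avoid the case split, you can verify a candidate bound directly and apply the dichotomy to the \emph{new} sub-root function. For example, take $r'=br^*+c/b$. Since $b^2r^*+2c\ge r^*$, we get $\psi(br'+c)=\psi(b^2r^*+2c)\le\sqrt{(b^2r^*+2c)r^*}\le br^*+c/b=r'$, and hence $r^*_b\le r'$. The same one-line check works for parts (1) and (3).
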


\subsection{Proofs of Theorem~\ref{theroem:low-dim-target-function-satisfy-assump}}
\label{sec:remaining-results}

\begin{proof}
{\textbf{\textup{\hspace{-3pt}of Theorem~\ref{theroem:low-dim-target-function-satisfy-assump}}}.}
With probability at least $1-\delta$, we have
\bals
\sum\limits_{q=r_0}^{\infty} \iprod{f^*}{{\Phi}^{(q)}}_{\cH_K}^2
&=\sum\limits_{q=r_0}^{\infty} \iprod{\sum\limits_{j=0}^{r_0-1} \beta_j v_j }{{\Phi}^{(q)}}_{\cH_K}^2
\le \sum\limits_{q=r_0}^{\infty} \sum\limits_{j=0}^{r_0-1} \beta_j^2  \cdot \sum\limits_{j=0}^{r_0-1} \iprod{v_j}{{\Phi}^{(q)}}^2
\nonumber \\
&\le \gamma_0^2 \sum\limits_{q=r_0}^{\infty} \sum\limits_{j=0}^{r_0-1}
\iprod{v_j}{{\Phi}^{(q)}}^2 \le \frac{32\gamma_0^2\log{\frac{2}{\delta}} }{\pth{\mu_{k_0} - \mu_{k_0+1}}^2 n}.
\eals
Here the last inequality follows by
Lemma~\ref{lemma:small-correlation-eigenvector-eigenfunction} with
$\tau_0^2 = 1$ and $m_{k_0} = r_0$, which proves
(\ref{eq:target-func-low-rank-residue}).
Since $f \in \cF(B_h,w,\bS,r_0)$, we have
$f = \sum_{j=0}^{r_0-1} \alpha_j {\Phi}^{(j)}$ with
$\alpha_j = \iprod{f}{{\Phi}^{(j)}}_{\cH_K}$
for $j\in[0,r_0-1]$. Following a similar argument, we have
\bals
\sum\limits_{q=r_0}^{\infty} \iprod{f}{v_q}_{\cH_K}^2
&=\sum\limits_{q=r_0}^{\infty} \iprod{\sum_{j=0}^{r_0-1} \alpha_j {\Phi}^{(j)} }{v_q}_{\cH_K}^2
\le \sum\limits_{q=r_0}^{\infty} \sum\limits_{j=0}^{r_0-1}
\alpha_j^2  \cdot \sum\limits_{j=0}^{r_0-1} \iprod{{\Phi}^{(j)}}{v_q}^2
\nonumber \\
&\le B_h^2 \sum\limits_{q=r_0}^{\infty} \sum\limits_{j=0}^{r_0-1}
\iprod{{\Phi}^{(j)}}{v_q}^2 \le \frac{32B_h^2\log{\frac{2}{\delta}} }{\pth{\mu_{k_0} - \mu_{k_0+1}}^2 n},
\eals
which proves (\ref{eq:training-func-low-rank-residue}).
\end{proof}

\begin{lemma}\label{lemma:small-correlation-eigenvector-eigenfunction}
Let $\sup_{\bx \in \cX} K(\bx,\bx) = \tau_0^2$. For any $\delta \in (0,1)$, with probability at least $1-\delta$ over the random training features $\bS$,
\bal
\sum_{i = 0}^{m_{k_0}-1}\sum_{j \ge m_{k_0}} \iprod{\Phi^{(i)}}{v_j}_{\cH}^2 +
\sum_{i \ge m_{k_0}}\sum_{j = 0}^{m_{k_0}-1} \iprod{\Phi^{(i)}}{v_j}_{\cH}^2 &\le \frac{32   \tau_0^4 \log{\frac{2}{\delta}} }{\pth{\mu_{k_0} - \mu_{k_0+1}}^2 n}. \label{eq:small-correlation-eigenvector-eigenfunction-1}
\eal%

\end{lemma}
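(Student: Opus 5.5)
The statement is a Davis--Kahan-type perturbation bound for spectral projections, phrased in $\cH_K$. I would prove it with the Hilbert--Schmidt perturbation theory of \citet{RosascoBV10-integral-operator}, comparing $T_K$ and $T_n$, both viewed as operators on $\cH_K$.

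\textbf{Setup.} Regard $T_K$ as the operator on $\cH_K$ given by $T_K h = \Expect{P}{K(\cdot,\bx)h(\bx)}$. Extend $T_n h = \frac1n\sum_{i} K(\cdot,\bbx_i)h(\bbx_i)$ to all of $\cH_K$.
\begin{itemize}
\item The eigenpairs of $T_K$ are $(\lambda_j, v_j)$.
\item $T_n$ has eigenpairs $(\hlambda_{k+1},\Phi^{(k)})$ on $\cH_{\bS}$ and vanishes on $\cH_{\bS}^{\perp}$, so $\{\Phi^{(k)}\}_{k\ge0}$ diagonalizes it.
\item Let $P = P^{T_K}_{m_{k_0}}$ and $\hat P = P^{T_n}_{m_{k_0}}$.
\end{itemize}
The left-hand side of the lemma is exactly
\[
\fnorm{(I-P)\hat P}^2+\fnorm{(I-\hat P)P}^2=\norm{P-\hat P}{HS}^2 .
\]
This follows by expanding $\norm{P-\hat P}{HS}^2=\tr{P}+\tr{\hat P}-2\tr{P\hat P}$ and using that both projections have rank $m_{k_0}$. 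So it suffices to bound $\norm{P-\hat P}{HS}$.

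\textbf{Step 1: concentration.} Write $T_n - T_K = \frac1n\sum_i \xi_i$, where $\xi_i = K(\cdot,\bbx_i)\otimes K(\cdot,\bbx_i) - T_K$ are i.i.d.\ zero-mean random variables in the Hilbert space of HS operators. They satisfy $\norm{\xi_i}{HS}\le 2\tau_0^2$, since $\norm{K(\cdot,\bx)\otimes K(\cdot,\bx)}{HS}=K(\bx,\bx)\le\tau_0^2$. The Hoeffding inequality in Hilbert spaces (Pinelis), as used in \citet[Theorem 7]{RosascoBV10-integral-operator}, then gives, with probability at least $1-\delta$,
\[
\norm{T_n-T_K}{HS}\le \frac{2\sqrt 2\,\tau_0^2\sqrt{\log(2/\delta)}}{\sqrt n}.
\]

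\textbf{Step 2: projection perturbation.} I would invoke the spectral-projection perturbation result \citet[Theorem 20 / Proposition 6]{RosascoBV10-integral-operator}, a Davis--Kahan $\sin\Theta$ bound in HS norm. Its hypothesis is that the top $m_{k_0}$ eigenvalues of $T_K$ are separated from the rest by the gap $g=\mu_{k_0}-\mu_{k_0+1}$, and that $\norm{T_n-T_K}{HS}\le g/4$ (or comparable). Its conclusion is
\[
\norm{P-\hat P}{HS}\le \frac{2\norm{T_n-T_K}{HS}}{g}.
\]
Concretely, I would use the Sylvester-equation argument. The identity $(I-\hat P)T_nP-(I-\hat P)PT_K=(I-\hat P)(T_n-T_K)P$ gives
\[
\fnorm{(I-\hat P)P}\le \frac{\norm{T_n-T_K}{HS}}{\mu_{k_0}-\hlambda_{m_{k_0}+1}}.
\]
Lemma~\ref{lemma:gap-spectrum-Tk-Tn} controls $\hlambda_{m_{k_0}+1}\le\mu_{k_0+1}+\norm{T_n-T_K}{}$, so the denominator is at least $g/2$ once $\norm{T_n-T_K}{} \le g/2$. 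The term $\fnorm{(I-P)\hat P}$ is handled symmetrically; both can also be placed on the same event.

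\textbf{Combining.} Steps 1 and 2 give
\[
\norm{P-\hat P}{HS}^2\le \frac{4\cdot 8\tau_0^4\log(2/\delta)}{g^2 n}=\frac{32\tau_0^4\log(2/\delta)}{g^2 n},
\]
which is the claimed constant.

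\textbf{Main obstacle.} The delicate point is the gap condition: the Davis--Kahan step needs $\norm{T_n-T_K}{}\lesssim g=\Theta(d^{-k_0})$. This is exactly what $n\gtrsim \log(2/\delta)\, d^{2k_0}$ supplies in the paper's applications. In the regime where the condition fails, the claimed bound is at least of order a constant times $m_{k_0}$-free size, so I would argue it trivially, since the left-hand side is at most $2m_{k_0}$. I would check this case carefully, as it may need the stated sample size assumption to be imported. A secondary technicality is that $T_n$ is defined on $\cH_{\bS}$ in the paper; it must be extended by zero on $\cH_{\bS}^\perp$ so that the completed basis $\{\Phi^{(k)}\}$ consists of its eigenvectors.
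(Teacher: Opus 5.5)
Your main line is the paper's proof. You identify the left-hand side with $\norm{P-\hat P}{\textup{HS}}^2$ for $P=P^{T_K}_{m_{k_0}}$, $\hat P=P^{T_n}_{m_{k_0}}$ (the paper does this entrywise in the bases $\{\Phi^{(i)}\}$ and $\{v_j\}$). You bound $\norm{T_n-T_K}{\textup{HS}}$ by \citet[Theorem 7]{RosascoBV10-integral-operator}. You then apply the Hilbert--Schmidt form of \citet[Proposition 6]{RosascoBV10-integral-operator} under $\norm{T_n-T_K}{\textup{HS}}\le g/4$, $g=\mu_{k_0}-\mu_{k_0+1}$, which gives the factor $4/g^2$ and the constant $32$.

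The gap is in the step where you go beyond the paper. When $n<128\tau_0^4\log(2/\delta)/g^2$, the right-hand side is only known to exceed $1/4$. Your a priori bound on the left-hand side is $2m_{k_0}\ge 2$, so that regime is not trivially covered. The paper's proof has the same hole: it simply adds ``when $n\ge 128\tau_0^4\log(2/\delta)/g^2$'', which is not a hypothesis of the lemma. Separately, your concrete Sylvester route at threshold $g/2$ bounds each of $\norm{(I-\hat P)P}{\textup{HS}}$ and $\norm{(I-P)\hat P}{\textup{HS}}$ by $2\norm{T_n-T_K}{\textup{HS}}/g$. That yields $8/g^2$, i.e.\ constant $64$, not $32$. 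You would need the $g/4$ threshold, which gives $32/9<4$.

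You can close the gap, and prove the lemma exactly as stated, by replacing the perturbation step with a variational argument that needs no smallness of $T_n-T_K$. Put $a_j=\norm{\hat P v_j}{\cH}^2\in[0,1]$, so that $\sum_j a_j=m_{k_0}$ and $\sum_{j\ge m_{k_0}}a_j=\norm{(I-P)\hat P}{\textup{HS}}^2=\frac12\norm{P-\hat P}{\textup{HS}}^2$. Since $\lambda_{m_{k_0}-1}=\mu_{k_0}$ and $\lambda_{m_{k_0}}=\mu_{k_0+1}$,
\[
\iprod{T_K}{P-\hat P}_{\textup{HS}}=\sum_{j<m_{k_0}}\lambda_j(1-a_j)-\sum_{j\ge m_{k_0}}\lambda_j a_j\ge g\sum_{j\ge m_{k_0}}a_j=\frac g2\norm{P-\hat P}{\textup{HS}}^2 .
\]
Also $\iprod{T_n}{\hat P-P}_{\textup{HS}}\ge 0$ by Ky Fan's maximum principle, because $\hat P$ projects onto eigenvectors for the top $m_{k_0}$ eigenvalues of $T_n$ (whatever completion of $\{\Phi^{(k)}\}$ is used). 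Adding the two inequalities and applying Cauchy--Schwarz gives $\norm{P-\hat P}{\textup{HS}}\le 2\norm{T_K-T_n}{\textup{HS}}/g$ for every sample. Your Step 1 then yields the stated bound with constant $32$ for all $n$, with no case split.
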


\begin{proof}
Define operator $T_n \colon \cH_K \to \cH_K$ by $T_n g = \frac 1n \sum_{i=1}^n K(\cdot,\bbx_i)g(\bbx_i)$ as introduced before
Lemma~\ref{lemma:residue-f-star-low-rank}, and let $\set{{\Phi}^{(k)}}_{k \ge 0}$ be an orthonormal basis of the RKHS $\cH_K$.

Let $P^T_N$ be an orthogonal projection operator which projects any input onto the subspace spanned by eigenfunctions corresponding to the top $N$ eigenvalues of the operator $T$, and $T$ is defined on the RKHS $\cH$.

We now work on the following two orthogonal projection operators, $P^{T_K}_{m_{k_0}}$ and $P^{T_n}_{m_{k_0}}$. Each of the two operators projects its input onto the space spanned by all the eigenfunctions of the corresponding operator, that is.
\bal\label{eq:small-correlation-eigenvector-eigenfunction-seg1}
&P^{T_K}_{m_{k_0}} h = \sum_{j = 0}^{m_{k_0}-1} \iprod{h}{v_j}_{\cH} v_j, \quad  P^{T_n}_{m_{k_0}} h = \sum_{j = 0}^{m_{k_0}-1} \iprod{h}{\Phi^{(j)}}_{\cH} \Phi^{(j)}.
\eal%
The Hilbert-Schmidt norm of $P^{T_K}_{m_{k_0}}-P^{T_n}_{m_{k_0}}$ is
\bal\label{eq:small-correlation-eigenvector-eigenfunction-seg2}
\norm{P^{T_K}_{m_{k_0}}-P^{T_n}_{m_{k_0}}}{\textup{HS}}^2 = \sum_{i \ge 0, j \ge 0} \iprod{ \pth{ P^{T_K}_{m_{k_0}}-P^{T_n}_{m_{k_0}} }\Phi^{(i)}   }{v_j}_{\cH}^2,
\eal%
which is due to the fact that both $\set{\Phi^{(j)}}_{j \ge 0}$ and $\set{v_j}_{j \ge 0}$ are orthonormal bases of $\cH$. It can be verified that

\bal\label{eq:small-correlation-eigenvector-eigenfunction-seg3}
\iprod{ \pth{ P^{T_K}_{m_{k_0}}-P^{T_n}_{m_{k_0}} }\Phi^{(i)}   }{v_j}_{\cH}
=\begin{cases}
      0 & \textup{if $i < m_{k_0}, j <  m_{k_0}$}, \\
      -\iprod{\Phi^{(i)}}{v_j}_{\cH} & \textup{if $i < m_{k_0}, j \ge m_{k_0}$}, \\
      \iprod{\Phi^{(i)}}{v_j}_{\cH} & \textup{if $i \ge m_{k_0}, j < m_{k_0}$}, \\
      0  & \textup{if $i \ge m_{k_0}, j \ge m_{k_0}$,}
   \end{cases}
\eal%
and similar results are obtained in the proof of \citet[Theorem 12]{RosascoBV10-integral-operator}.

Because $T_K$ and $T_n$ are Hilbert-Schmidt operators, by \citet[Theorem 7]{RosascoBV10-integral-operator}, for all $\delta \in (0,1)$, with probability at least $1-\delta$,
\bal\label{eq:small-correlation-eigenvector-eigenfunction-seg4}
\norm{T_K-T_n}{\textup{HS}} &\le \frac{{2{\sqrt 2}\tau_0^2} \sqrt{\log{\frac{2}{\delta}}}}{\sqrt n}.
\eal%
When $n \ge \frac{128 \tau_0^4 \log{\frac{2}{\delta}} }{\pth{\mu_{k_0} - \mu_{k_0+1}}^2}$,
$\norm{T_K-T_n}{\textup{HS}} \le \frac{\mu_{k_0} - \mu_{k_0+1}}{4}$. It follows from \citet[Proposition 6]{RosascoBV10-integral-operator} and noting that the operator norm in \citet[Proposition 6]{RosascoBV10-integral-operator} can be replaced by the Hilbert-Schmidt norm,
\bal\label{eq:small-correlation-eigenvector-eigenfunction-seg5}
\norm{P^{T_K}_{m_{k_0}}-P^{T_n}_{m_{k_0}}}{\textup{HS}}^2 &\le \frac{4}{\pth{\mu_{k_0} - \mu_{k_0+1}}^2} \norm{T_K-T_n}{\textup{HS}}^2 \le \frac{32\tau_0^4 \log{\frac{2}{\delta}} }{\pth{\mu_{k_0} - \mu_{k_0+1}}^2 n}.
\eal%
Then (\ref{eq:small-correlation-eigenvector-eigenfunction-1}) follows from (\ref{eq:small-correlation-eigenvector-eigenfunction-seg2}), (\ref{eq:small-correlation-eigenvector-eigenfunction-seg3}),
and (\ref{eq:small-correlation-eigenvector-eigenfunction-seg5}).

%
%

\end{proof}

\subsection{Results about Eigenvalues of the Integral Operators}
\label{sec:NTK-eigenvalues}

The following theorem is a refined version of the Mercer's theorem on the PSD kernel $K$ defined in (\ref{eq:kernel-two-layer}), with the exact estimation about the decaying rate of the distinct eigenvalues $\set{\mu_{\ell}}_{\ell \ge 0}$.

\begin{theorem}[Eigenvalue of
the Integral Operator Associated with the NTK (\ref{eq:kernel-two-layer})]
\label{theorem:eigenvalue-NTK}
Let the distinct eigenvalues of the integral operator $T_K$ associated with the PSD kernel $K$ defined in (\ref{eq:kernel-two-layer}) be $\set{\mu_{\ell} \colon \ell \ge 0}$ with $\mu_0 > \mu_1 > \ldots$, where $\mu_{\ell}$ is the eigenvalue corresponding to $\cH_{\ell}$.
Suppose that $\bar k_0 = \Theta(1)$ and $d \ge \Theta(1)$. Then
$\mu_{k} = \Theta(d^{-k})$ for $0 \le k \le \bar k_0$. Moreover,
for all $\bx, \bx' \in \cX = \unitsphere{d-1}$,
\bal\label{eq:eigenvalue-NTK-Mercer-decomp}
K(\bx,\bx') = \sum\limits_{\ell \ge 0} \mu_{\ell} \sum\limits_{j=1}^{N(d,\ell)} Y_{\ell j}(\bx)Y_{\ell j}(\bx')
=  \sum\limits_{\ell \ge 0} \mu_{\ell} N(d,\ell) P_{\ell}(\iprod{\bx}{\bx'}),
\eal
where $\mu_{\ell}$ is the eigenvalue of the integral operator $T_K$ associated with $K$ corresponding to $\cH_{\ell}$, and $\set{Y_{\ell j}}_{j=1}^{N(d,\ell)}$ are the eigenfunctions corresponding to the eigenvalue $\mu_{\ell}$. That is, $T_K Y_{\ell j} = \mu_{\ell} Y_{\ell j}$ for all $\ell \ge 0$ and $j \in [N(d,\ell)]$. The series on the RHS of
(\ref{eq:eigenvalue-NTK-Mercer-decomp}) converges absolutely and uniformly
on $\cX \times \cX$.
\end{theorem}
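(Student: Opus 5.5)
The plan is to compute the spectrum of $T_K$ directly via the Funk--Hecke formula. Since $K(\bx,\bx') = \kappa(\iprod{\bx}{\bx'})$ with
\[
\kappa(t) = \pth{1+t}\frac{\pi-\arccos t}{2\pi},
\]
$K$ is a zonal kernel on $\cX$. Funk--Hecke then shows that every $Y_{\ell j}\in\cH_\ell$ is an eigenfunction of $T_K$, with eigenvalue
\[
\mu_\ell = \frac{w_{d-2}}{w_{d-1}}\int_{-1}^1 \kappa(t)P_\ell(t)(1-t^2)^{(d-3)/2}\diff t .
\]
Spherical harmonics of all degrees form an orthonormal basis of $L^2(\cX,\mu)$, so these are all the eigenpairs. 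Because $\kappa$ is continuous and $K$ is PSD on a compact set, Mercer's theorem gives absolute and uniform convergence of $\sum_\ell \mu_\ell\sum_j Y_{\ell j}(\bx)Y_{\ell j}(\bx')$. The addition formula $\sum_j Y_{\ell j}(\bx)Y_{\ell j}(\bx')=N(d,\ell)P_\ell(\iprod{\bx}{\bx'})$ then yields (\ref{eq:eigenvalue-NTK-Mercer-decomp}).

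For the decay rates I would split $\mu_\ell=\mu^{(0)}_\ell+\mu^{(1)}_\ell$ according to $K=K^{(0)}+K^{(1)}$. For the arc-cosine kernel of degree zero, $\kappa_0(t)=(\pi-\arccos t)/(2\pi)$, I would use the known expansion
\[
\kappa_0(t)=\tfrac12+\tfrac{1}{2\pi}\arcsin t=\tfrac12+\tfrac1{2\pi}\sum_{s\ge0}c_s t^{2s+1},\qquad c_s>0 .
\]
Expanding $t^{p}$ in Gegenbauer polynomials, each with nonnegative coefficients, shows $\mu^{(0)}_\ell\ge0$, with $\mu^{(0)}_\ell>0$ exactly for $\ell=0$ and $\ell$ odd. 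This is the known computation of \citet{BiettiM19}. For $K^{(1)}$, the recurrence $tP_k=\frac{k}{2k+d-2}P_{k-1}+\frac{k+d-2}{2k+d-2}P_{k+1}$ turns the harmonic coefficients of $t\kappa_0(t)$ into nonnegative combinations of those of $\kappa_0$ at $\ell\pm1$. This makes the even levels $\ell\ge2$ strictly positive, so every $\mu_\ell>0$.

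For the order $\Theta(d^{-k})$ with $k\le\bar k_0=\Theta(1)$, I would use the alternative Rodrigues/derivative form of Funk--Hecke:
\[
\mu_\ell = \frac{w_{d-2}}{w_{d-1}}\,\frac{\Gamma(\frac{d-1}{2})}{2^\ell\Gamma(\ell+\frac{d-1}{2})}\int_{-1}^1\kappa^{(\ell)}(t)(1-t^2)^{\ell+(d-3)/2}\diff t .
\]
The prefactor is $\asymp d^{-\ell}$, and the measure $(1-t^2)^{\ell+(d-3)/2}$, once normalized, concentrates at $t=0$ at scale $d^{-1/2}$. The integral therefore behaves like $\kappa^{(\ell)}(0)$ times the normalizing mass, which is $\asymp \frac{w_{d-1}}{w_{d-2}}\cdot\Theta(1)$. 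Since $\kappa$ is analytic near $0$, one needs $\kappa^{(\ell)}(0)\ne0$. This holds for all $\ell$: $\kappa(t)=(1+t)(\frac12+\frac{1}{2\pi}\arcsin t)$ has all Taylor coefficients positive, as the odd terms come from $\arcsin$ and the even ones from multiplication by $t$. Positivity of the Taylor coefficients also handles the tails near $t=\pm1$ when bounding the integral, because $\kappa^{(\ell)}$ is bounded on $[-1+\epsilon,1-\epsilon]$ and the weight there is exponentially small in $d$. This gives $\mu_k=\Theta(d^{-k})$ for fixed $k$.

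Finally, strict monotonicity $\mu_0>\mu_1>\dots$ for the first $\bar k_0$ levels follows from these rates once $d$ is a large enough constant. For general $\ell$ I would argue monotonicity from the derivative formula, where the ratio $\mu_{\ell+1}/\mu_\ell\lesssim 1/d<1$. The main obstacle is the asymptotic step: controlling the endpoint singularity of $\kappa^{(\ell)}$, since $\arcsin$ has derivatives blowing up at $\pm1$. I would treat it by splitting the integral at $|t|=1/2$. Near the endpoints, the $\ell$-th derivative singularity $(1-t^2)^{1/2-\ell}$ is integrable against $(1-t^2)^{\ell+(d-3)/2}$, and that piece is $O(2^{-d})$ relative to the main term.
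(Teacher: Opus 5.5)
Your proof is correct. For the Mercer/addition-formula part, and for the positivity of the even levels via the recurrence $tP_k=\frac{k}{2k+d-2}P_{k-1}+\frac{k+d-2}{2k+d-2}P_{k+1}$, it matches the paper's. Where you differ is the rate $\mu_k=\Theta(d^{-k})$.

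The paper never treats $\kappa$ as a whole. It proceeds in three steps:
\begin{enumerate}
\item It takes Bach's closed form for the Gegenbauer coefficients $s_k$ of $\indict{t\ge 0}$.
\item It uses $\lambda_{0,k}=s_k^2$ (Bietti--Mairal) and Stirling to get $\lambda_{0,k}=\Theta(d^{-k})$ for $k=0$ and $k$ odd, and $\lambda_{0,k}=0$ for even $k\ge 2$.
\item It reads off the spectrum of $K^{(1)}=tK^{(0)}$ exactly from the recurrence, $\lambda_{1,k}=\frac{k}{2k+d-2}\lambda_{0,k-1}+\frac{k+d-2}{2k+d-2}\lambda_{0,k+1}$.
\end{enumerate}

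You instead integrate by parts through the Rodrigues formula and use the concentration of $(1-t^2)^{\ell+(d-3)/2}$ at scale $d^{-1/2}$. This reduces the rate to $\kappa^{(\ell)}(0)>0$, i.e.\ to positivity of the Taylor coefficients of $(1+t)\kappa_0(t)$. (The factor $w_{d-2}/w_{d-1}\asymp\sqrt d$ cancels against the mass of the weight, so the Gamma ratio alone supplies $d^{-\ell}$.)

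What each route buys:
\begin{itemize}
\item Your route is more general: it works for any zonal kernel that is analytic near $0$, has positive Taylor coefficients, and has integrable endpoint singularities.
\item It needs no closed forms or cited spectral computations.
\item It shows transparently that the factor $(1+t)$ from the augmented feature is what fills in the even degrees.
\item Its cost is the bookkeeping near $t=\pm1$, and constants that are only asymptotic in $d$ and depend on $\ell$.
\item The paper's route gives exact expressions.
\end{itemize}

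Two small corrections.

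First, $\kappa_0(t)=\frac{\pi-\arccos t}{2\pi}=\frac14+\frac{1}{2\pi}\arcsin t$ (check at $t=1$), not $\frac12+\frac{1}{2\pi}\arcsin t$. All Taylor coefficients of $(1+t)\kappa_0(t)$ stay positive, so nothing breaks.

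Second, your claim that the derivative formula gives $\mu_{\ell+1}/\mu_\ell\lsim 1/d$ for general $\ell$ is false. The Laplace estimate is not uniform in $\ell$. In fact, the paper's closed form gives $\lambda_{0,2t+1}/\lambda_{0,2t-1}=\pth{\frac{2t-1}{2t+d-1}}^2\to 1$ as $t\to\infty$.

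The global ordering is nonetheless true, and it follows from the exact formulas. For $t\ge1$ one has $\mu_{2t+1}=\lambda_{0,2t+1}$, and $\mu_{2t}$ is a strict convex combination of $\lambda_{0,2t-1}$ and $\lambda_{0,2t+1}$. Separately, $\mu_0-\mu_1=\frac14\pth{1-\frac1d}$. None of this follows from your asymptotics. In any case, the statement uses the ordering only as a labeling, and the rate claim concerns only $k\le\bar k_0$.
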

\begin{proof}
(\ref{eq:eigenvalue-NTK-Mercer-decomp}) follows from the background about Harmonic Analysis on spheres in Section~\ref{sec:harmonic-analysis-detail} and the Mercer's theorem.
Since $K$ is a continuous PSD kernel defined on the compact set $\cX \times \cX$, it follows from the Mercer's theorem again that the series on the RHS of (\ref{eq:eigenvalue-NTK-Mercer-decomp}) converges absolutely and uniformly on $\cX \times \cX$ to $K$.

We now set to compute the eigenvalues $\set{\lambda_{k} \colon 0 \le  k \le \bar k_0}$.
Let the distinct eigenvalues of the PSD kernel $K^{(0)}$, which is defined in (\ref{eq:kernel-two-layer}) and repeated below
\bals
K^{(0)}(\bx,\bx') = \Expect{\bw \sim \cN(\bzero,\bI_d)}
{\indict{\bx^{\top}\bw \ge 0} \indict{\bx'^{\top}\bw\ge 0}}
= \frac{\pi-\arccos ({\bx}^{\top}{\bx'})}{2\pi},
\quad \forall \bx,\bx' \in \cX = \cX,
\eals
be $\set{\lambda_{0,k} \colon k \ge 0}$, where
$\lambda_{0,k}$ is the eigenvalue corresponding to  $\cH_{k}$, the space of degree-$\ell$ homogeneous harmonic polynomials on $\cX = \unitsphere{d-1}$.

Define
\bals
s_k \defeq \frac{\omega_{d-2}}{\omega_{d-1}}
\int_{-1}^1 \indict{t \ge 0} P_k(t) (1-t^2)^{(d-3)/2} \diff t,
\eals
It then follows by the computation in
\citet[Section D.2]{Bach17-breaking-curse-dim}
that $s_0 = \Theta(1)$. Also, for all $t \in \NN$,
 $s_{2t} = 0$,
 and
\bals
s_{2t-1} &= \frac{\omega_{d-2}}{\omega_{d-1}}
\pth{\frac 12}^{2t-1} (-1)^{t-1}
 \frac{\Gamma((d-1)/2)\Gamma(2t-1)}
 {\Gamma(t) \Gamma(t+(d-1)/2)}
 \\
&\stackrel{\circled{1}}{\asymp} (-1)^{t-1}{\sqrt d}\frac{(d-1)^{\frac d2-1}(2t-1)^{2t-1.5}}{(2t)^{t-0.5}(2t+d-1)^{t+\frac d2-1}}
\stackrel{\circled{2}}{\asymp}\frac{1}{d^{t-0.5}},
\eals
where we used the approximation to the Gamma function \citep{Gosper1978-dr}
$\Gamma(x) \asymp x^{x-0.5} \exp(-x) \sqrt{2\pi}$ and the fact that
$\frac{\omega_{d-2}}{\omega_{d-1}} \asymp \sqrt d$ in $\circled{1}$.
 $\circled{2}$ is due to $t = \Theta(1)$.

It follows from \citet{BiettiM19} that
$\lambda_{0,k} = s_k^2$ for all $k \ge 0$. When
$k = 2t-1$ for $t \in \NN$, we have $\lambda_{0,k} = s_k^2 =\Theta(d^{-(2t-1)})
=\Theta(d^{-k})$. Moreover, $\lambda_{0,k} = 0$ for all $k = 2t$ with $t \in \NN$, and $\lambda_{0,0} = s_0^2 = \Theta(1)$. As a result,
we have $\lambda_{0,0} = \Theta(1)$, and
\bal\label{eq:eigenvalue-NTK-seg1}
\lambda_{0, k} =
\begin{cases}
0 & k = 2t, t \in \NN, k \le \bar k_0, \\
\Theta(d^{-k})
& k = 2t-1, t \in \NN, k \le \bar k_0.
\end{cases}
\eal
Let the distinct eigenvalues of the PSD kernel $K^{(1)}$ which is also defined in (\ref{eq:kernel-two-layer}) be $\set{\lambda_{1,k} \colon k \ge 0}$, where $\lambda_{0,k}$ is the eigenvalue corresponding to  $\cH_{k}$.
Define $\kappa(t) = t \kappa^{(0)}(t)$ with
$\kappa^{(0)}(t) \defeq \frac{\pi - \arccos t}{2\pi}$ for
$t \in [-1,1]$. Then for $k \ge 1$ we have
\bal\label{eq:eigenvalue-NTK-seg2}
\lambda_{1,k} &=
\frac{\omega_{d-2}}{\omega_{d-1}}
\int_{-1}^1 \kappa(t) P_k(t) (1-t^2)^{(d-3)/2}  \diff t
= \frac{\omega_{d-1}}{\omega_{d-2}}
\int_{-1}^1 \kappa^{(0)}(t) t  (1-t^2)^{(d-3)/2} P_k(t) \diff t
\nonumber \\
&=\frac{\omega_{d-2}}{\omega_{d-1}}
\int_{-1}^1 \kappa^{(0)}(t)
\pth{\frac{k}{2k+d-2}P_{k-1}(t) + \frac{k+d-2}{2k+d-2}P_{k+1}(t)}   (1-t^2)^{(d-3)/2} \diff t \nonumber \\
&=\frac{k}{2k+d-2} \lambda_{0, k-1} +
\frac{k+d-2}{2k+d-2} \lambda_{0, k+1}.
\eal
Moreover,
\bal\label{eq:eigenvalue-NTK-seg3}
\lambda_{1,0} &=
\frac{\omega_{d-2}}{\omega_{d-1}}
\int_{-1}^1 \kappa(t) P_0(t) (1-t^2)^{(d-3)/2}  \diff t
=\frac{\omega_{d-2}}{\omega_{d-1}}
\int_{-1}^1  \kappa^{(0)}(t) P_1(t) (1-t^2)^{(d-3)/2}  \diff t
=\lambda_{0, 1}.
\eal
It follows from (\ref{eq:eigenvalue-NTK-seg1})-(\ref{eq:eigenvalue-NTK-seg3})
that $\lambda_{1,0} = \Theta(1/d)$, $\lambda_{1,1} = \Theta(1/d)$. Moreover,
\bal\label{eq:eigenvalue-NTK-seg4}
\lambda_{1, k} =
\begin{cases}
\Theta(d^{-k}) & k = 2t, t \in \NN, k \le \bar k_0, \\
0 & k  = 2t-1, t \in \NN, t \ge 2, k \le \bar k_0.
\end{cases}
\eal
It then follows from (\ref{eq:eigenvalue-NTK-seg1}) and
(\ref{eq:eigenvalue-NTK-seg4}) that
$\mu_k = \lambda_{0,k} + \lambda_{1,k} =\Theta(d^{-k}) $ for $0 \le k \le \bar k_0$

\end{proof}

\vskip 0.2in
\bibliography{ref}

\end{document}